\newcommand{\States}{\mathcal{S}}
\newcommand{\Goals}{\mathcal{G}}
\newcommand{\Traj}{\textbf{T}}
\DeclareMathOperator*{\argmax}{arg\,max}
\newcommand{\semicheckmark}{\checkmark\kern-1.1ex\raisebox{.7ex}{\rotatebox[origin=c]{125}{--}}
}
\newcommand{\etal}{\emph{et al.}\xspace}
\newtheorem{prop}{Proposition}
\newcommand*{\addFileDependency}[1]{
  \typeout{(#1)}
  \@addtofilelist{#1}
  \IfFileExists{#1}{}{\typeout{No file #1.}}
}
\newcommand*{\inputexternaldocument}[1]{%
    \externaldocument{#1}%
    \addFileDependency{#1.tex}%
    \addFileDependency{#1.aux}%
}
\newcommand{\methodname}{AC-Teach\xspace}
\newcommand{\methodnameFull}{Actor-Critic with Teacher Ensembles\xspace}
\title{\LARGE{AC-Teach: A Bayesian Actor-Critic Method for Policy Learning with an Ensemble of Suboptimal Teachers}}
\author{Andrey Kurenkov$^{*\sharp}$, Ajay Mandlekar$^{*\sharp}$, Roberto Martin-Martin$^{\sharp}$, Silvio Savarese$^{\sharp}$, Animesh Garg$^{\dagger}$\\
$^{\sharp}$Stanford University, $^
{\dagger}$University of Toronto, $^
{\dagger}$Vector Institute
}
\begin{document}

\maketitle

\begin{abstract}
The exploration mechanism used by a Deep Reinforcement Learning (RL) agent plays a key role in determining its sample efficiency. 
Thus, improving over random exploration is crucial to solve long-horizon tasks with sparse rewards. 
We propose to leverage an ensemble of partial solutions as \textit{teachers} that guide the agent's exploration with action suggestions throughout training. While the setup of learning with teachers has been previously studied, our proposed approach -- \methodnameFull (\methodname) -- is the first to work with an ensemble of suboptimal teachers that may solve only part of the problem or contradict other each other, forming a unified algorithmic solution that is compatible with a broad range of teacher ensembles.
\methodname~leverages a probabilistic representation of the expected outcome of the teachers' and student's actions to direct exploration, reduce dithering, and adapt to the dynamically changing quality of the learner. We evaluate a variant of \methodname~that guides the learning of a Bayesian DDPG agent on three tasks -- path following, robotic pick and place, and robotic cube sweeping using a hook -- and show that it improves largely on sampling efficiency over a set of baselines, both for our target scenario of unconstrained suboptimal teachers and for easier setups with optimal or single teachers. 
Additional results and videos at \url{https://sites.google.com/view/acteach/home}.
\end{abstract}

\vspace{-3pt}
\section{Introduction}
\vspace{-3pt}

Reinforcement Learning (RL) algorithms have recently demonstrated impressive results in challenging problem domains such as robotic manipulation~\cite{andrychowicz2018learning, kalashnikov2018qt}, planning~\cite{faust2018prm}, Go~\cite{silver2017mastering}, and Atari games~\cite{mnih2015human}. 
However, RL algorithms typically require a large number of interactions with the environment to train policies that solve new tasks, which is particularly problematic for physical domains such as robotics, where gathering experience from interactions is slow and expensive. A possible approach to alleviate this problem is to train the agent to reproduce or bootstrap from the behavior demonstrated by an expert using either offline or online Imitation Learning (IL)~\cite{ross2011reduction, ho2016generative, zhang2016query,  zhang2018deep, zhu2018reinforcement, ddpgfd}.
However, IL has its own limitations: applying offline IL for long horizon tasks is costly and time consuming because large amounts of expert interactions are necessary to cover corner-cases the agent may encounter. Online IL may overcome this limitation by allowing the agent to query experiences on-demand, but requiring an online expert that can solve the entire long-horizon task may be an even stricter limitation.

\begin{figure}[t!]
    \vspace{-10pt}
    \centering
    \begin{subfigure}[b]{0.4\textwidth}
        \centering
        \includegraphics[height=1.7in]{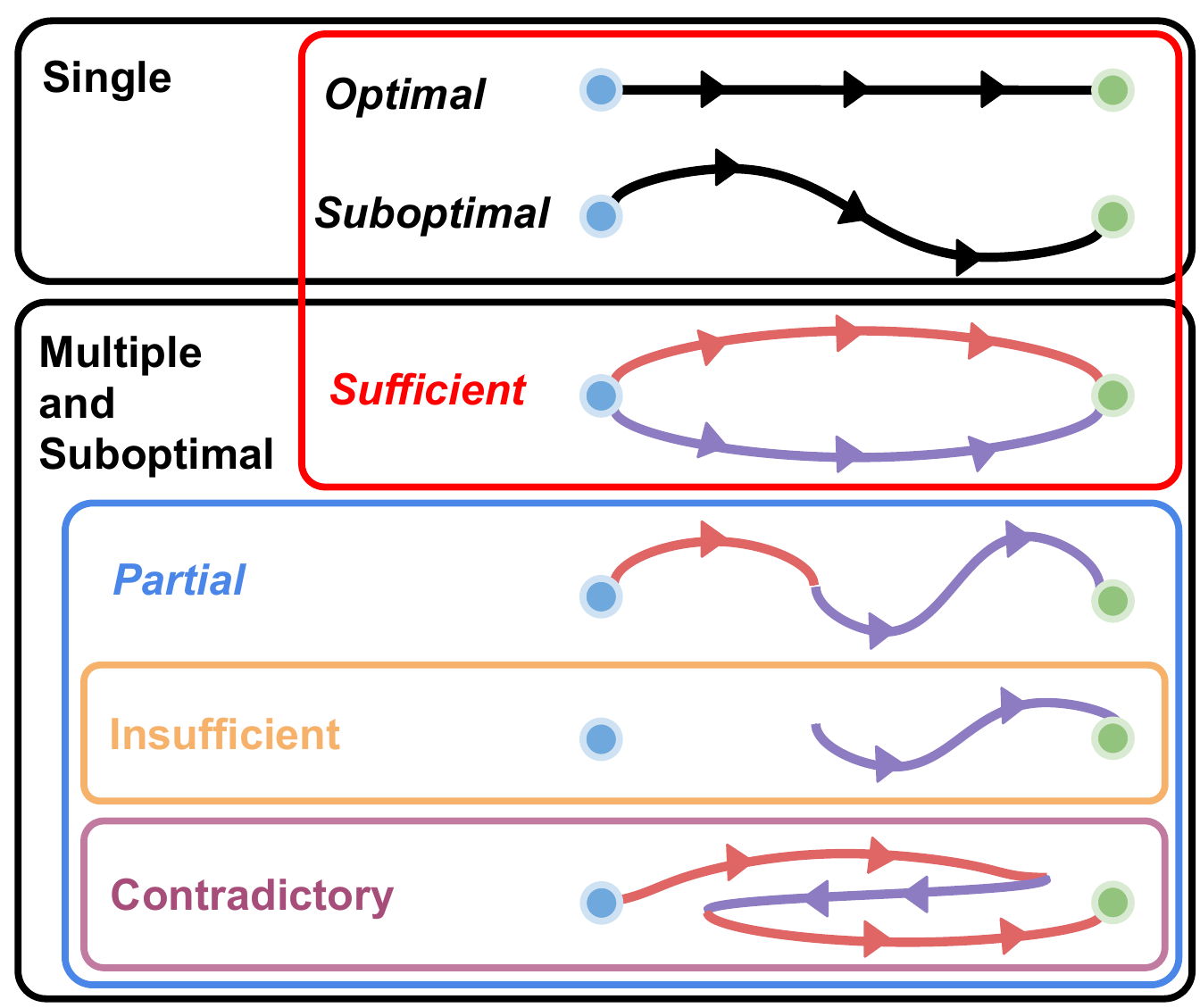}
        \caption{}
    \end{subfigure}%
    \hfill
    \begin{subfigure}[b]{0.6\textwidth}
        \centering
        \includegraphics[height=1.7in]{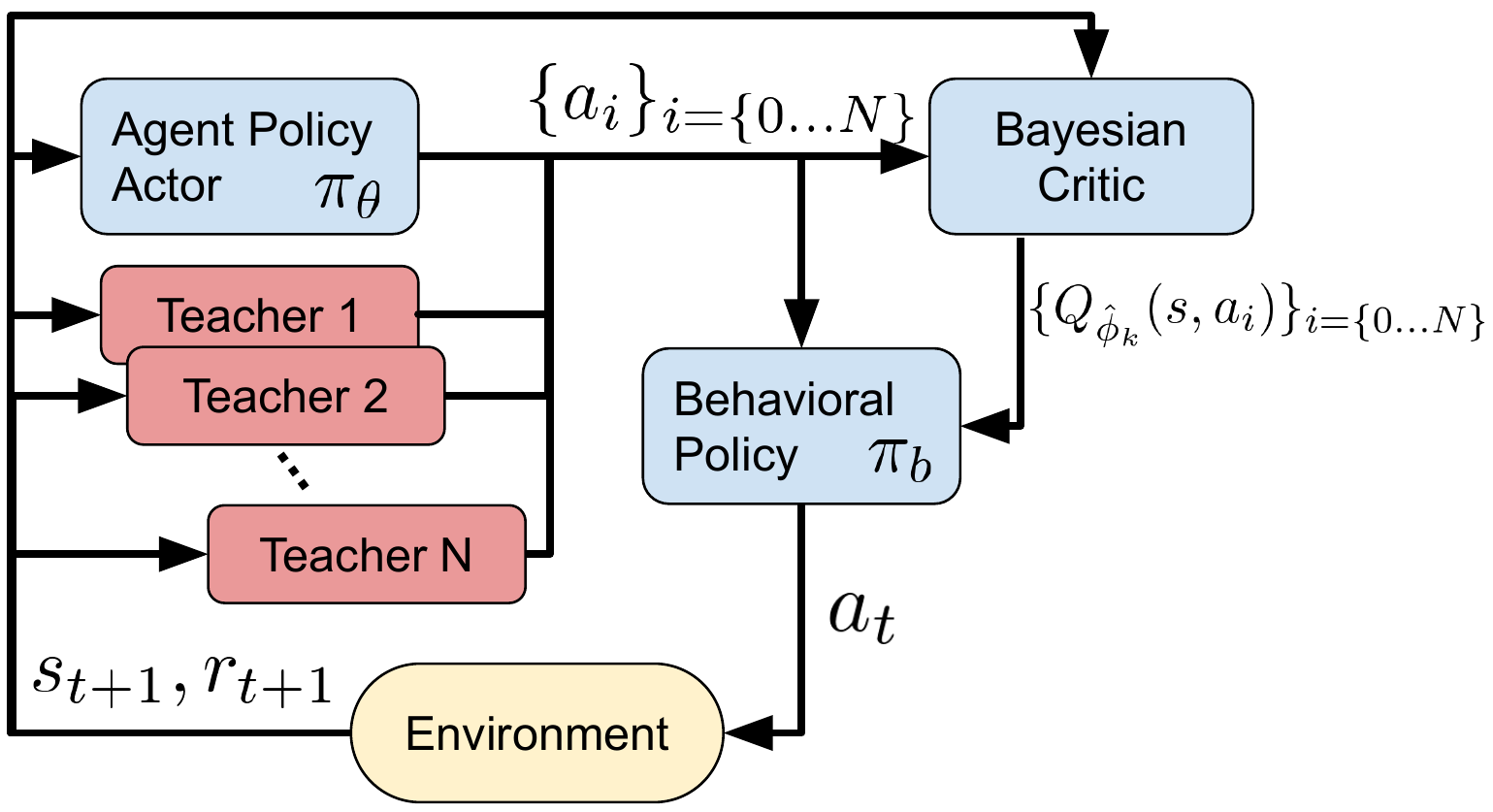}
        \caption{}
    \end{subfigure}
    \caption{(a) Visual representation of teachers' attributes, with arrows representing actions and line color representing different teacher policies. In this figure each example trajectory has the attributes of all the boxes it is contained within. Italicized terms apply to single policies, and non-italicized terms refer to sets of policies. We present formal definitions of these attributes in Sec. \ref{s:baps}. (b) Our method, \methodname, (light blue) modifies the actor-critic architecture to leverage advice from a set of teachers (red) as part of the Behavioral Policy to train the Agent Policy using the probabilistic q-values of a Bayesian critic.}
    \label{fig:teachers}
    \vspace{-15pt}
\end{figure}

We argue that, in domains like robotics, a useful alternative to collecting demonstration data or providing full solutions is to encode knowledge into an \textit{ensemble of heuristic solutions} (controllers, planners, previously trained policies, etc.) that address parts of the task. 
Leveraged properly, these heuristics act as \emph{teachers} guiding agent exploration, leading to faster convergence during training and better asymptotic performance. 
Notably, the agent can also learn to outperform its teachers and learn parts of a task for which no teacher offers adequate advice.
While prior work has addressed the problem of policy learning with teachers~\cite{ppr, azar2013regret, zhan2016theoretically, li2018optimal, li2018context}, in this work we discuss a collection of teacher attributes that characterize a single teacher's behavior compared to the optimal policy, as well as the combined behavior of the teachers in the ensemble, and propose a unifying algorithmic framework that efficiently exploits the set of teachers during the agent's training regardless of the teachers' attributes.

Specifically, we assume that the set of teachers can have the following attributes: (i) \textit{partial} -- teachers may recommend useful actions only in a part of the state space, (ii) \textit{insufficient} -- the agent cannot solve the task by only using actions suggested by the set of teachers, and (iii) \textit{contradictory} -- performing actions from different teachers in succession would hinder task completion (see Fig.~\ref{fig:teachers}).

Policy learning in this setting is not straightforward - the agent needs to collect experience by deciding between its own \textit{on-policy} actions or exploiting teacher advice, but also be able to leverage this heterogeneously generated experience to learn to solve the task. Learning from experience collected by another agent advocates using off-policy reinforcement learning~\cite{sutton1998introduction}, where a \emph{behavioral policy} is used to gather the training experience. The main challenge in the teacher ensemble setting is that the utility of each teacher can vary in different state space regions, while the utility of the agent varies over time as learning progresses. This adds to the known complications of training an agent from off-policy experience, which may be unstable and result in poor performance -- especially for Deep RL algorithms -- due to the compounding effects of function approximation and bootstrapping~\cite{fujimoto2018off, bhatt2019crossnorm, achiam2019towards}. 

To address these challenges, we present \textbf{\methodnameFull (\methodname)}, a policy learning framework to leverage advice from multiple teachers where individual teachers might only offer useful advice in certain states, offer advice that contradicts other teacher suggestions, and where the agent might need to learn behaviors from scratch in states where no teacher offers useful advice. 
\methodname~is a novel behavioral policy mechanism for continuous control domains that is robust to low-quality teacher ensembles through quantifying uncertainty over the value of actions based on a probabilistic posterior critic network~\cite{henderson2017bayesian} and using the uncertainty to choose between different action proposals. It is agnostic to the source of action proposals, since it models a single action-value function (the critic) that is not explicitly conditioned on any teacher, and can therefore easily scale to settings with larger number of teachers and even to settings with changing teacher sets. By contrast, most prior work models each teacher independently in some form. 

\methodname~also includes a \textit{commitment} mechanism that allows the behavioral policy to commit to actions from a single policy for longer periods of time; the posterior critic is used to estimate the probability of value improvement from switching the policy choice, and a switch is only enacted under high confidence. This allows the behavioral policy to collect meaningful experience from partial teachers without being hindered by contradictory teachers. Lastly, we also mitigate the instability of off-policy learning by basing the behavioral policy on the critic, and training the critic network using a target value that is generated by the behavioral policy. This helps reduce the off-policyness of the experience being used to train the critic by coupling the critic and the behavioral policy together.

\noindent \textbf{Summary of Contributions}. 
(a) we present a collection of teacher attributes that comprehensively characterizes the quality of a set of teachers for guiding agent training; (b) we propose \methodnameFull (\methodname), a policy learning framework to leverage advice from multiple teachers that is robust to low-quality teacher sets where individual teachers might only offer useful advice in certain states, offer advice that contradicts other teacher suggestions, and where the agent might need to learn behaviors from scratch in states where no teacher offers useful advice; (c) Our experiments demonstrate that \methodname is able to leverage such teacher ensembles to solve multi-step tasks while significantly improving sample efficiency over baselines; and (d) we also show that \methodname is not only able to generate policies using low-quality teacher sets but also surpasses baselines when using higher quality teachers, hence providing a unified algorithmic solution to a broad range of teacher attributes.

\vspace{-3pt}
\section{Related Work}
\vspace{-3pt}


\textbf{Imitation and Reinforcement Learning from Offline Demonstrations:}
Imitation Learning has been used to train a policy from a set of demonstrations, which are offline samples collected from an expert that is assumed to be optimal. Recently, several works~\cite{dqfd, ddpgfd, nair2018overcoming, nac, wang2018interactive, wang2017improving} have applied off-policy deep RL algorithms to train an agent from a set of suboptimal demonstrations. 
However, off-policy deep RL can be particularly unstable due to the compounding effects of function approximation and bootstrapping leading to inaccurate extrapolation~\cite{ross2014reinforcement,fujimoto2018off,  bhatt2019crossnorm, achiam2019towards}. 
In our setting, the behavioral policy is able to follow any teacher in the policy set. Consequently, training the agent on this off-policy data requires addressing the issue of instability. 
\textbf{Imitation and Reinforcement Learning from Online Teachers:}
Prior work has also investigated the use of one or more teacher policies for training an agent. 
Several works~\cite{ross2011reduction, ross2014reinforcement, sun2017deeply} focus on the setting where an agent must learn to imitate a teacher, who is available as an oracle during training. These methods assume that the expert is optimal and can solve the entire task. 
By contrast, \cite{rosenstein2004reinforcement, johannink2018residual, silver2018residual} assume access to a stable controller or prior solution and use reinforcement learning to learn a residual to correct for suboptimal behavior. While these approaches relax the need for a teacher that is optimal, they cannot be applied to settings with more than one teacher. 

Several prior works \cite{ppr, azar2013regret, zhan2016theoretically, li2018optimal} present methods compatible with several teachers, but they assume that one teacher should be selected over the others for the entire episode and formulate the problem as regret minimization.
This is unsuitable when considering teachers that might offer only partial solutions to the task, since the final solution might require knowledge from several teachers that excel in different parts of the state space. 
Li et al.~\cite{li2018context} relaxes this assumption, but does so by treating the teachers as options~\cite{sutton1999between}, which the agent must rely on at test-time as well as train-time, and furthermore, their method does not support continuous control.
The method presented in Xie et al.~\cite{wheels} trains a DQN~\cite{mnih2015human} behavioral policy to select between an expert and a DDPG~\cite{lillicrap2015continuous} learner policy. This method can be extended to settings with multiple experts, although this was not part of the original manuscript. We developed a DQN-based behavioral policy for multiple agents as a baseline in our experiments and show that~\methodname outperforms this method.


\begin{table}[!h]
\vspace{-15pt}
\centering
\caption{~\methodname can be used in RL settings with both function approximation, and continuous control, and is able to leverage a wide variety of teacher sets. The symbol $\odot$ indicates that the algorithm could potentially be applied to this setting, but this has not been demonstrated empirically.}
\label{tab:comparison-table}
\resizebox{0.95\textwidth}{!}{
\centering
\begin{tabular}{l|c|c|c|c|c|c}
\toprule
             & \multicolumn{1}{c|}{\begin{tabular}[c]{@{}c@{}}Deep \\ RL\end{tabular}} & 
             \multicolumn{1}{c|}{\begin{tabular}[c]{@{}c@{}}Continuous \\ control\end{tabular}} & 
             \multicolumn{1}{c|}{\begin{tabular}[c]{@{}c@{}}No teachers\\ at test time\end{tabular}} & 
             \multicolumn{1}{c|}{\begin{tabular}[c]{@{}c@{}}Multiple \\ teachers\end{tabular}} & 
             \multicolumn{1}{c|}{\begin{tabular}[c]{@{}c@{}}Partial \\ teachers\end{tabular}} & 
             \multicolumn{1}{c}{\begin{tabular}[c]{@{}c@{}}Contradictory \\ Teachers\end{tabular}} \\ \midrule 
PRQL/OpsTL \cite{ppr,li2018optimal} & $\odot$ &   & \checkmark  &  & \checkmark  & $\odot$ \\ \hline
DQfD/DDPGfD \cite{dqfd,ddpgfd}      & \checkmark   & \checkmark   & \checkmark  &  $\odot$ &    &  $\odot$ \\ \hline
CAPS  \cite{li2018context}          &  &  & & \checkmark   & \checkmark & $\odot$ \\ \hline
Residuals \cite{johannink2018residual,silver2018residual}   & \checkmark  & \checkmark   &   &  &   &  $\odot$ \\ \hline
Wheels \cite{wheels}                & \checkmark  & \checkmark  & \checkmark &  $\odot$  & $\odot$  &  $\odot$ \\ \hline
\methodname (ours)                      & \checkmark  & \checkmark & \checkmark & \checkmark  & \checkmark  & \checkmark  \\   
\bottomrule 
\end{tabular}
}
\vspace{-15pt}
\end{table}

\section{Problem Statement and Notation}
\label{s:baps}

Let $\mathcal{M} = (\mathcal{S}, \mathcal{A}, R, \mathcal{T}, \gamma, \rho_0)$ denote a discrete-time Markov Decision Process (MDP), where $\mathcal{S}$ is the state space, $\mathcal{A}$ is the action space, $R(s, a)$ is the reward function, $\mathcal{T}(s'|s, a)$ defines the transition dynamics, $\gamma \in [0, 1)$ is the discount factor, and $\rho_0$ is the start state distribution from which a start state $s_0 \sim \rho_0(\cdot)$ is sampled in every episode. 
At every step, the agent observes the state $s_t$, chooses an action $a_t \sim \pi_{\theta}(\cdot | s_t)$ sampled from the policy $\pi$ parametrized by $\theta$, and observes $s_{t+1} \sim \mathcal{T}(\cdot|s_t,a_t)$ and a reward $r_t = R(s_t, a_t)$. 
The goal in reinforcement learning is to learn an \textit{agent} policy $\pi_{\theta}$ that maximizes the expected discounted sum of rewards $\mathbb{E}_{\Traj} [\sum_{t=0}^{\infty} \gamma^t R(s_t, a_t)]$ from experience acquired during exploration, where $\Traj = (s_0, a_0, s_1, \ldots, a_{T-1}, s_T)$ is the trajectory produced by the policy.

In off-policy RL algorithms we consider two independent policies: (1) \textit{a behavioral policy}, $\pi_b$, that decides on actions to collect experience, and 2) \textit{an agent policy}, $\pi_{\theta}$, that consumes the experience and is trained to accomplish the original task. Both policies are part of the training process but only the agent policy, $\pi_{\theta}$, is evaluated at test time. We note that since we seek to train an agent using experience generated by different teacher policies, our algorithm must necessarily be off-policy.

\noindent \textbf{Policy Learning with Teacher Sets:}
We consider the policy advice setting in which the behavioral policy $\pi_b$ can receive action suggestions from a set of teacher policies $\Pi = \{\pi_1, \pi_2, \ldots, \pi_N\}$ that are available during training but not at test time~\cite{azar2013regret}. 
The problem is then to specify a behavioral policy $\pi_b$ that efficiently leverages the advice of a set $\Pi$ of teacher policies to generate experience to train an agent policy $\pi_\theta$ with the goal of achieving good test-time performance in minimal train-time environment interactions. 

\textbf{Teacher Attributes:} To formalize the teacher attributes let us first consider an infinite-horizon MDPs with \textit{deterministic} transition function $s' = \mathcal{T}(s, a)$, a set of absorbing goal states $\Goals\subset\States$, and a sparse reward function $r_\Goals(s)=\mathbb{1}({s\in\Goals})$. A generalization of the teacher attributes to MDPs with stochastic dynamics and policies can be found in Appendix~\ref{app:expandeddefs}. 
For a given deterministic policy $\pi$ and a goal set $\Goals$, the state-value function at the initial state is $V_\Goals^\pi(s_0) = r_\Goals(s_0) + \sum_{t=1}^{\infty} \gamma^t r_\Goals(s_t)$. 
For this class of MDPs, the state-value function of deterministic policy $\pi$ has a higher value at the state $s$ than at another state $s'$ if and only if $\pi$ can reach a goal in $\Goals$ in fewer timesteps from $s$ than from $s'$ (see Proposition~\ref{app:proposition} and associated proof in Appendix~\ref{app:expandeddefs}). In this context, we define three teacher attributes: \textit{partial}, \textit{sufficient}, and \textit{contradictory}.

\textbf{Definition 3.1} (Partial Teachers) Let $\pi^*$ denote the optimal policy and $V_\Goals^*(s)$ denote the optimal state value function, with respect to a fixed set of goals $\Goals$. Then, a teacher policy is partial if in some non-empty strict subset $ \exists S' \subset \mathcal{S}$, for all states $s \in S'$, we have that $V_\Goals^*(\mathcal{T}(s, a)) > V_\Goals^*(s)$.
 
By Proposition~\ref{app:proposition}, it follows that in some region of the state space, a partial teacher transitions from one state to another state such that an optimal policy would reach the goal faster from the new state. Thus, in that region, the partial teacher offers useful advice. 

\textbf{Definition 3.2} (Sufficient Teachers and Teacher Sets) A teacher policy $\pi$ is sufficient if it has non-zero value for some start state: $\exists s_0 \sim \rho_0(\cdot)$ such that $V^\pi_\Goals(s_0) > 0$. A teacher set $\Pi = \{\pi_1, \pi_2, \dots, \pi_N\}$ is sufficient if there exists a mixture policy that is sufficient (a mixture policy is one that chooses $\pi_{\Pi}(s) \in \{\pi_1(s), \pi_2(s), \dots, \pi_N(s)\}$).  

In other words, by Proposition~\ref{app:proposition}, a teacher is sufficient if it can reach a goal state from some start state and a teacher set is sufficient if some combination of teachers in the set can be used to reach a goal state from some start state. Conversely, a teacher or teacher set that is not sufficient is said to be \textit{insufficient}. The ability to learn in settings with insufficient teachers is an important property for our method, since it requires learning parts of a task that teachers are unable to solve.

\textbf{Definition 3.3} (Contradictory Teachers) A teacher policy $\pi_2$ is contradictory to teacher policy $\pi_1$ and a goal set $\Goals$ if there exists a state $s \in \mathcal{S}$ where following the advice of $\pi_2$ causes $\pi_1$ to take more timesteps to reach a goal in the goal set. Equivalently, following $\pi_2$ in $s$ leads to a state with lower value for $\pi_1$:  $V_\Goals^{\pi_1}(\mathcal{T}(s, \pi_2(s)) < V_\Goals^{\pi_1}(s)$.

In other words, the policy $\pi_2$ hinders the progress of $\pi_1$ in some portion of the state space with respect to the set of goals $\Goals$. As previously established, our goal is to define a behavioral policy $\pi_b$ that benefits from teacher sets that may or may not be sufficient or contain contradictory teachers.

There are multiple methods to maximize the RL objective based on the policy gradient theorem~\cite{sutton1998introduction}. 
One family of solutions is \textit{actor-critic} methods~\cite{lillicrap2015continuous,konda2000actor,mnih2016asynchronous,haarnoja2018soft,fujimoto2018addressing}. Herein, we build \methodname with Bayesian DDPG~\cite{lillicrap2015continuous, henderson2017bayesian}, a popular actor-critic algorithm for continuous action spaces. The agent policy $\pi_{\theta}$ in \methodname is the actor in the DDPG architecture. Conceptually, \methodname can leverage any off-policy actor-critic algorithm, such as, DDPG~\cite{lillicrap2015continuous}, SAC~\cite{haarnoja2018soft}, or TD3~\cite{fujimoto2018addressing}, among others.

\noindent \textbf{DDPG.} DDPG maintains a critic network $Q_{\phi}(s, a)$ and a deterministic actor network $\pi_{\theta}(s)$ (the agent policy), parametrized by $\phi$ and $\theta$ respectively. A behavioral policy $\pi_b$ (usually the same as the agent policy, $\pi_{\theta}$, with an additional exploration noise) is used to select actions that are executed in the environment, and state transitions are stored in a replay buffer $\mathcal{B}$. DDPG alternates between collecting experience and sampling the buffer to train the policy $\pi_{\theta}$ and the critic $Q_{\phi}$.
The critic is trained via the Bellman residual loss $\mathcal{L}_{\text{critic}} = (r + \gamma Q_{\phi'}(s', \pi_{\theta'}(s')) - Q_{\phi}(s, a))^2$ and the actor is trained with a deterministic policy gradient update to choose actions that maximize the critic $\mathcal{L}_{\text{actor}} = -Q_{\phi}(s, \pi_{\theta}(s))$ where $\phi'$ and $\theta'$ denote the use of target critic and actor networks.


The stability and performance of DDPG varies strongly between tasks. 
To alleviate these problems, Henderson~\etal~\cite{henderson2017bayesian} introduced Bayesian DDPG, a Bayesian Policy Gradient method that extends DDPG by estimating a posterior value function for the critic.
The posterior is obtained based on Bayesian dropout~\cite{gal2016dropout} with an $\alpha$-divergence loss. \methodname trains a Bayesian critic and actor in a similar fashion - see Appendix~\ref{app:bddpgtraining} for details.

\vspace{-5pt}
\section{\methodnameFull (\methodname)}
\label{s:method}
\vspace{-3pt}
In this section we introduce \methodname, an approach to leverage policy advice from a set of unconstrained teachers $\Pi$ in actor-critic RL algorithms as illustrated in Fig.~\ref{fig:teachers}(b) and Algorithm~\ref{alg:choice}. \methodname addresses four key challenges with regards to how to implement an efficient behavioral policy.
\begin{enumerate}[
    topsep=0pt,
    noitemsep,
    partopsep=0.5ex,
    parsep=0.5ex,
    leftmargin=*,
    itemindent=2.5ex
    ]
\item How to \textit{evaluate the quality of the advice} from any given teacher in each state for a continuous state and action space? \methodname is based on a novel \emph{critic-guided behavioral policy} that evaluates both the advice from the teachers as well as the actions of the learner. 
\item How to \textit{balance exploitation and exploration} in the behavioral policy? \methodname uses \emph{Thompson sampling} on the posterior over expected action returns provided by a Bayesian critic to help the behavioral policy to select which advice to follow.
\item How to deal with \textit{contradictory teachers}? \methodname implements a temporal \textit{commitment} method based on the posterior from the Bayesian critic that executes actions from the same policy until the confidence in return improvement from switching to another policy is significant.
\item How to \textit{alleviate extrapolation errors in the agent} arising from large differences between the behavioral and the agent policy, the ``large off-policy-ness'' problem? \methodname introduces a \emph{behavioral target} into DDPG's policy gradient update, such that the critic is optimized with the target Q-value of the behavioral policy rather than the agent policy.
\end{enumerate}

\noindent\textbf{1. Critic-Guided Behavioral Policy. }
As introduced in Sec.~\ref{s:baps}, the behavioral policy, $\pi_{b}$ collects experience in the environment during training and is typically the output of the actor network  $\pi_{\theta}$ with added noise. However, when teachers are available, the behavioral policy should take their advice into consideration. To leverage teacher advice for exploration, we propose to use the critic to implement $\pi_b$ in \methodname as follows. Given state $s$, the agent policy $\pi_{\theta}$ and each teacher policy $\pi_i \in \Pi$ generate a set of action proposals $\{\pi_{\theta}(s),\pi_1(s),\ldots,\pi_N(s)\}$.
The critic $Q_{\phi}$ evaluates the set of action proposals and selects the most promising one to execute in the environment. This is equivalent to selecting between the teachers and the agent, but notice that this selection mechanism is agnostic to the source of the actions, enabling \methodname to scale to large teacher sets.

\noindent\textbf{2. Thompson Sampling over a Bayesian Critic for Behavioral Policy.}
The behavioral policy needs to balance between exploration of teachers, whose utility in different states is not known at the start of training, and the agent policy, whose utility is non-stationary during learning versus exploitation of teachers that provided highly rewarded advice in the past. Inspired by the similarity between policy selection and the multi-arm bandits problem, we use Thompson sampling, a well-known approach for efficiently balancing exploration and exploitation in the bandit setting~\cite{kaufmann2012thompson}. Thompson sampling is a Bayesian approach for decision making where the uncertainty of each decision is modeled by a posterior reward distribution for each arm. In our multi-step setting, we model the posterior distribution over action-values using a Bayesian dropout critic network similar to Henderson et al.~\cite{henderson2017bayesian}.

Concretely, instead of maintaining a point estimate of $\phi$ and $Q_{\phi}$, we maintain a distribution over weights, and consequently over values by using Bayesian dropout~(Sec.~\ref{s:baps}). To evaluate an action for a given state-action pair, a new dropout mask is sampled at each layer of $Q_{\phi}$, resulting in a set of weights $\hat{\phi}$, and then a forward pass through the network results in a sample $Q_{\hat{\phi}}(s, a)$. We then use critic $Q_{\hat{\phi}}$ to evaluate the set of action proposals $\{a_0,a_1,\ldots,a_N\}$ and selects $a_i = \arg\max_{a_0, a_1, ..., a_N} Q_{\hat{\phi}}(s, a)$ as the action to consider executing. The choice whether to execute this action depends on our commitment mechanism, explained next.

\begin{algorithm*}[!t]
\caption{\methodnameFull (\methodname): Behavioral Policy}
\label{alg:choice}
\begin{algorithmic}[1]
\Require $s_t$, $Q_{\phi}$, $\pi_{\theta}$, $\Pi = \{\pi_1, \dots \pi_N\}$ \Comment{Current observation, Bayesian critic, Actor, Teacher set}
\State $A \leftarrow [\pi_{\theta}(s_t), \pi_1(s_t), \dots \pi_N(s_t)]$ \Comment{Collect action proposals from actor and teachers}
\State $\hat{\phi} \sim q(\phi)$ \Comment{Posterior sampling of critic weights via Bayesian Dropout}
\State $c_t \leftarrow \argmax_{i \in \left\{[0,1,\dots, N\right\}} Q_{\hat{\phi}}(s_t, A_i)$ \Comment{Proposed policy choice via Thompson Sampling}


\State $p_{better} \leftarrow P[Q_{\phi}(s_t, A_{c_t}) > Q_{\phi}(s_t, A_{c_{t-1}})]$ \Comment{Probability of value improvement (Appendix~\ref{app:commitestimate})}


\If{$p_{better}$ < $\beta\psi^{t_c}$}
    \State $c_t \leftarrow c_{t-1}$ \Comment{Retain previous policy choice for low improvement probability}
    \State $t_c \leftarrow t_c+1$
\Else{\quad $t_c \leftarrow 0$} \Comment{Accept proposed policy choice for high improvement probability}
\EndIf
\State \Return $A_{c_t}$ \Comment{Return action based on policy choice}
\end{algorithmic}
\end{algorithm*}


\setlength{\textfloatsep}{2pt}

\noindent\textbf{3. Confidence Based Commitment. }%
In our problem setup, we consider the possibility of contradictory advice from different teachers that hinders task progress.
Therefore, it is crucial to avoid switching excessively between teachers and \emph{commit} to the advice from the same teacher for longer time periods.
This is particularly important at the beginning of the training process when the critic has not yet learned to provide correct evaluations.

To achieve a longer time commitment, we compare the policy selected at this timestep $\pi_i$ via the Thompson Sampling process to the policy selected at the previous timestep, $\pi_j$. We use the posterior critic to estimate the probability for the value of $a_i$ to be larger than the value of $a_j$ (see Appendix~\ref{app:commitestimate}). If the probability of value improvement is larger than a threshold $\beta\psi^{t_c}$, the behavioral policy acts using the new policy, otherwise it acts with the previous policy. The threshold $\beta$ controls the behavioral policy's aversion to switch, and $\psi$ controls the degree of multiplicative decay, to prevent over-commitment and make it easier to switch the policy choice when a policy is selected for several consecutive time steps, as in prior work~\cite{ppr,li2018optimal}.

\noindent\textbf{4. Addressing Extrapolation Error in the Critic. }%
Off-policy learning can be unstable for deep reinforcement learning approaches~\cite{fujimoto2018off, bhatt2019crossnorm, achiam2019towards}. 
We follow Henderson et al.~\cite{henderson2017bayesian} in training the learner policy $\pi_{\theta}$ and the Bayesian critic $Q_{\phi}$ on samples from the experience replay $\mathcal{B}$. However, we further improved the stability of training by modifying the critic target values used for the $\alpha$-divergence Bayesian critic loss. Instead of using $r + \gamma Q_{\phi'}(s', \pi_{\theta'}(s'))$ as the target value for the critic, we opt to use $r + \gamma Q_{\phi'}(s', \pi_{b}(s'))$. 

In other words, the behavioral policy is used to select target values for the critic. We observed that using this modified target value in conjunction with basing the behavioral policy on the critic improved off-policy learning (see Appendix~\ref{app:extraperror}). 
This, along with the behavioral policy summarized in Algorithm~\ref{alg:choice}, forms our method,~\methodname.

\vspace{-5pt}
\section{Experimental Setup}
\label{s:es}
\vspace{-3pt}
We designed~\methodname~to be able to leverage experience from challenging sets of teachers that do not always provide good advice (see Table~\ref{tab:comparison-table}). In our experiments, we compare \methodname~to other learning algorithms that use experience from teachers with the aforementioned attributes (see Sec.~\ref{s:baps}) for the following three control tasks (see Appendix~\ref{app:tasks} for more task details). We outline the tasks and teacher sets below. For each task, we design a \textit{sufficient} teacher that chooses the appropriate \textit{partial} teacher to query per state, and unless differently specified, we add a Gaussian action perturbation to every teacher's actions during learning so that their behavior is more suboptimal.

\textbf{1. \texttt{Path Following}:} A point agent starts each episode at the origin of a 2D plane and needs to visit the four corners of a square centered at the origin. These corners must be visited in a specific order that is randomly sampled at each episode. The agent applies delta position actions to move. \textit{Teacher Set:} We designed one teacher per corner that, when queried, moves the agent a step of maximum length closer to that corner. Each of these teachers is \textit{partial} since it can only solve part of the task (converging to the specific corner). The four teachers are needed for the teacher set to be \textit{sufficient}. We additionally designed a set of \textit{contradictory} teachers, which we explain in full detail in Appendix~\ref{app:path_following_exps}.

\textbf{2. \texttt{Pick and Place}:} A robot manipulation problem from~\cite{fetch}. The objective is to pick up a cube and place it at a target location on the table surface. The initial position of the object and the robot end effector are randomized at each episode start, but the goal is constant. The agent applies delta position commands to the end effector and can actuate the gripper. \textit{Teacher Set:} We designed two partial teachers for this task, \textit{pick} and \textit{place}. The pick teacher moves directly toward the object and grasps it when close enough. The place agent is implemented to move the grasped cube in a parabolic motion towards the goal location and dropping it on the target location once it is overhead. 

\textbf{3. \texttt{Hook Sweep}:} A robot manipulation problem adapted from~\cite{silver2018residual}. The objective is to actuate a robot arm to move a cube to a particular goal location. The cube is initialized out of reach of the robot arm, and so the robot must use a hook to sweep the cube to the goal. The goal location and initial cube location are randomized such that in some episodes the robot arm must use the hook to sweep the cube closer to its base and in other episodes the robot arm must use the hook to push the cube away from its base to a location far from the robot. \textit{Teacher Set:} We designed four partial teachers for this task, \textit{hook-pick}, \textit{hook-position}, \textit{sweep-in}, and \textit{sweep-out}. The \textit{hook-pick} teacher guides the end-effector to the base of the hook and grasps the hook. The \textit{hook-position} teacher assumes that the hook has been grasped at the handle and attempts to move the end effector into a position where the hook would be in a good location to sweep the cube to the goal. Note that this teacher is agnostic to whether the hook has actually been grasped and tries to position the arm regardless. The \textit{sweep-in} and \textit{sweep-out} teachers move the end effector toward or away from the robot base respectively such that the hook would sweep the cube into the goal, if the robot were holding the hook and the hook had been positioned correctly, relative to the cube.

\begin{figure}[!t]
   \centering
   \vspace{-15pt}
    \includegraphics[width = \textwidth]{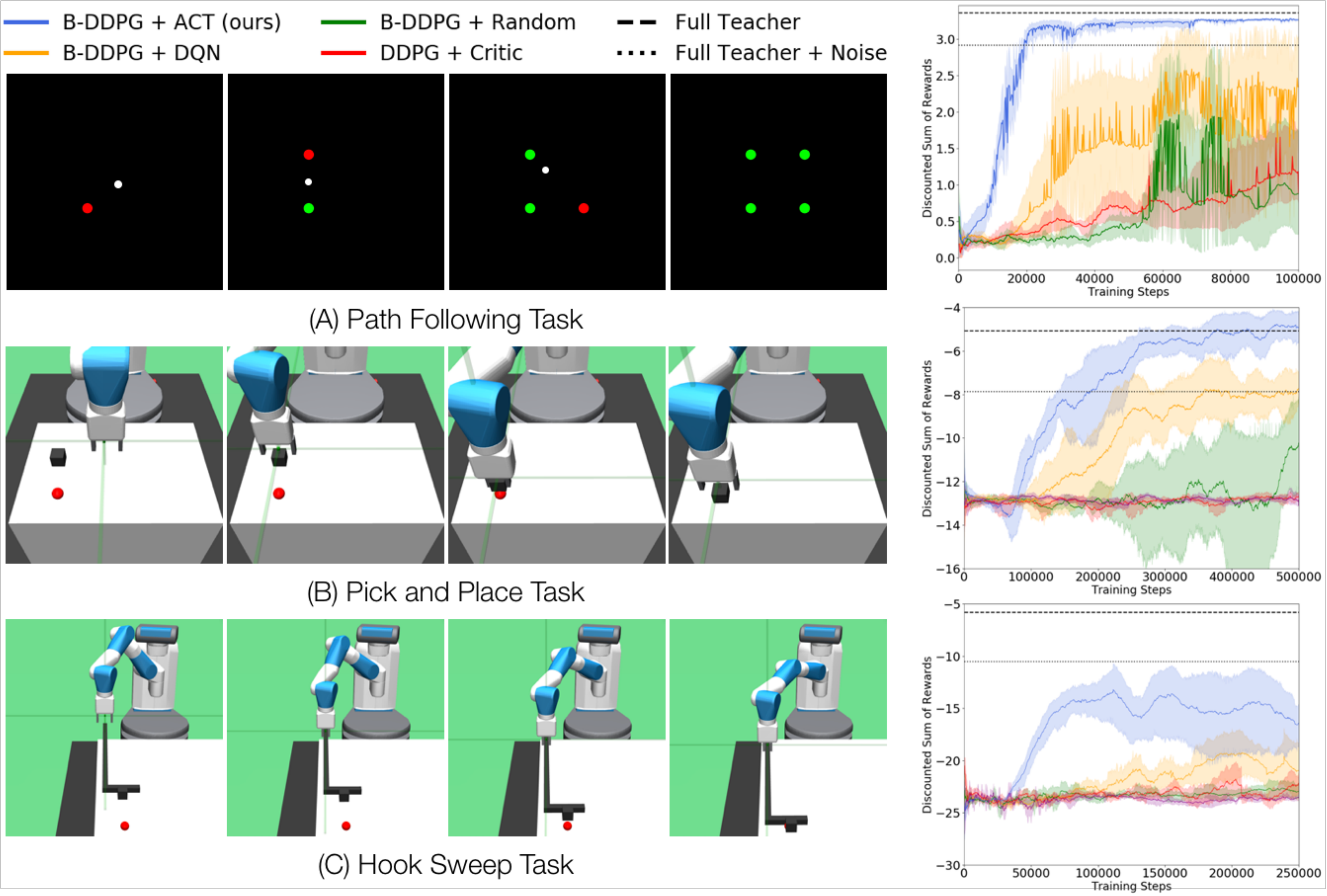}
   \caption{\textbf{Tasks and Performance with \textit{Sufficient Partial} Teacher Set.} The \texttt{Path Following} (top), \texttt{Pick and Place} (middle), and \texttt{Hook Sweep} (bottom) tasks are shown above, during one task completion. The plots (right), showing the average evaluation return of the agent during training, demonstrate that our algorithm exhibits faster and better convergence than other baselines with a \textit{sufficient} teacher set of noisy \textit{partial} solutions.}
   \label{fig:main}
\end{figure}

\noindent \textbf{Baselines.} We compare \methodname against the following set of baselines:
\begin{enumerate}[
    topsep=0pt,
    noitemsep,
    partopsep=0.5ex,
    parsep=0.5ex,
    leftmargin=*,
    itemindent=2.5ex
    ]
\item \textbf{BDDPG:} Vanilla DDPG without teachers, using a Bayesian critic as in \cite{henderson2017bayesian}.
\item \textbf{DDPG + Teachers (Critic):} Train a point estimate of the critic parameters instead of using Bayesian dropout. The behavioral policy still uses the critic to choose a policy to run. 
\item \textbf{BDDPG + Teachers (Random):} BDDPG with a behavioral policy that picks an agent to run uniformly at random.
\item \textbf{BDDPG + Teachers (DQN):}  BDDPG with a behavioral policy that is a Deep Q Network (DQN), trained alongside the agent to select the source policy as in \cite{wheels}.
\end{enumerate}

\begin{figure}[!t]
    \centering
    \includegraphics[width=\textwidth]{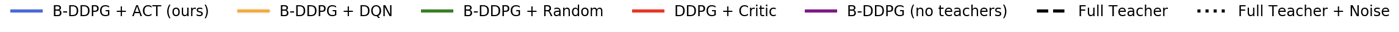}
    \begin{subfigure}[b]{0.245\textwidth}
        \centering
        \includegraphics[width=\textwidth]{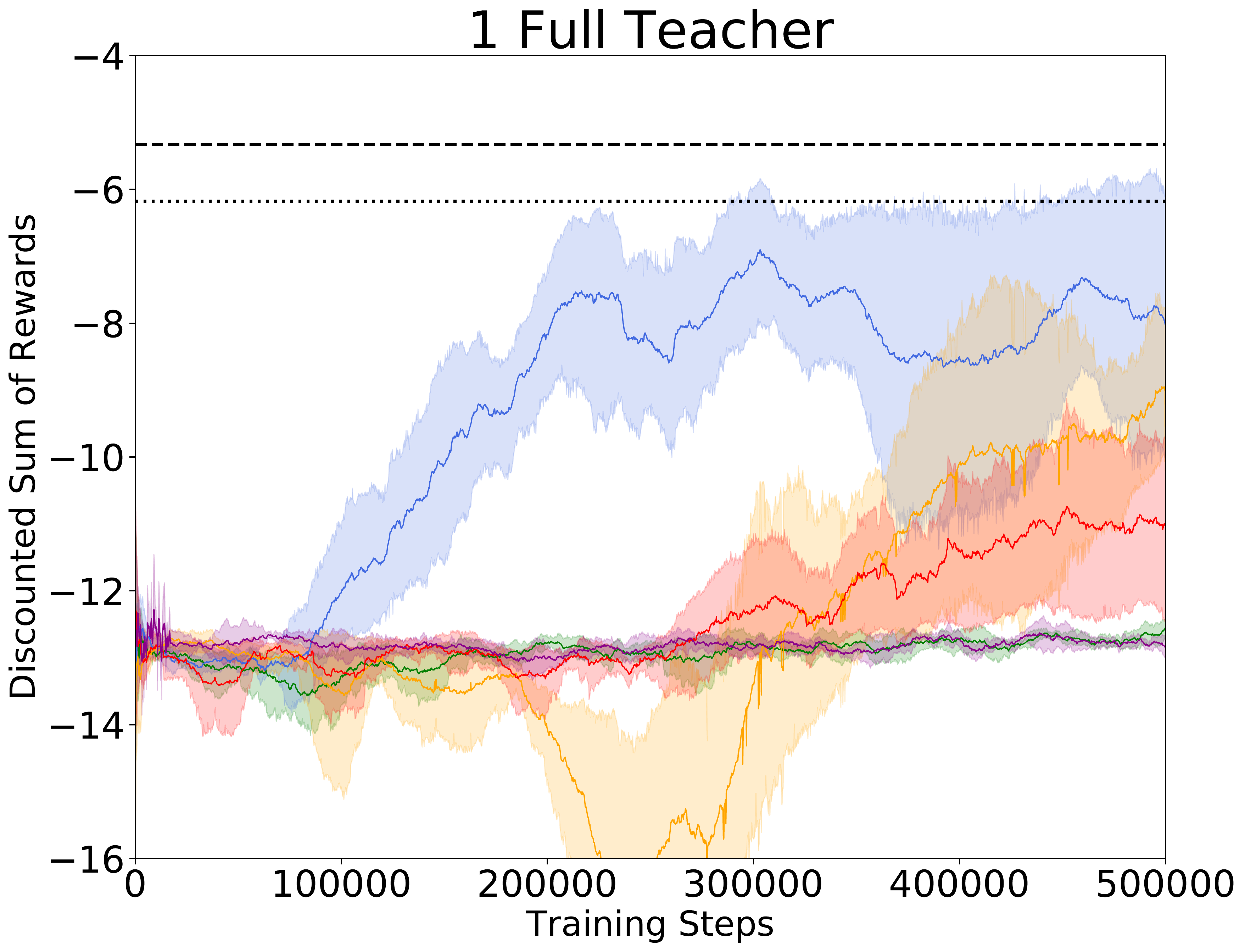}
    \end{subfigure}%
    \hfill
    \begin{subfigure}[b]{0.245\textwidth}
        \centering
        \includegraphics[width=\textwidth]{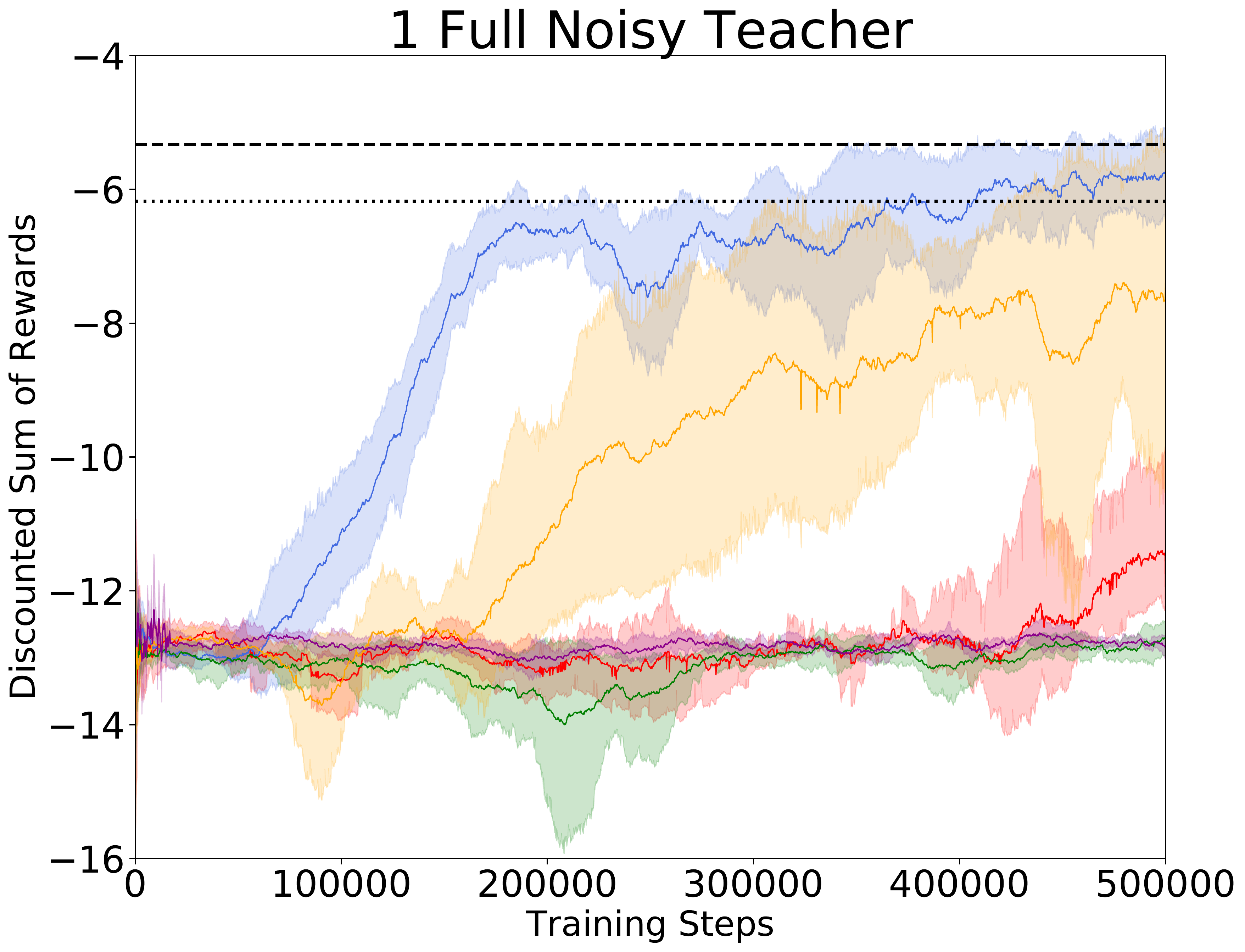}
    \end{subfigure}
    \hfill
    \begin{subfigure}[b]{0.245\textwidth}
        \centering
        \includegraphics[width=\textwidth]{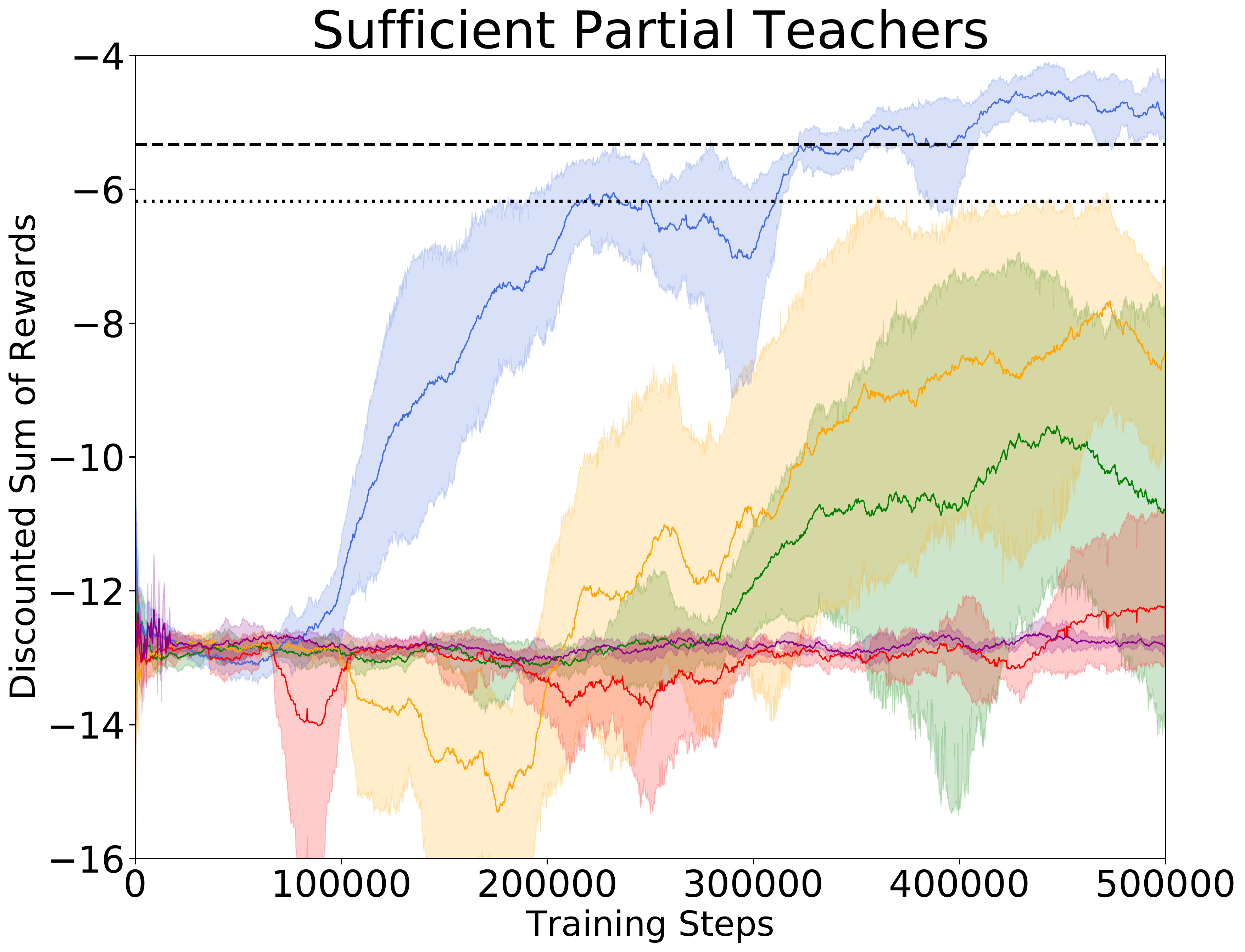}
    \end{subfigure}
    \hfill
    \begin{subfigure}[b]{0.245\textwidth}
        \centering
        \includegraphics[width=\textwidth]{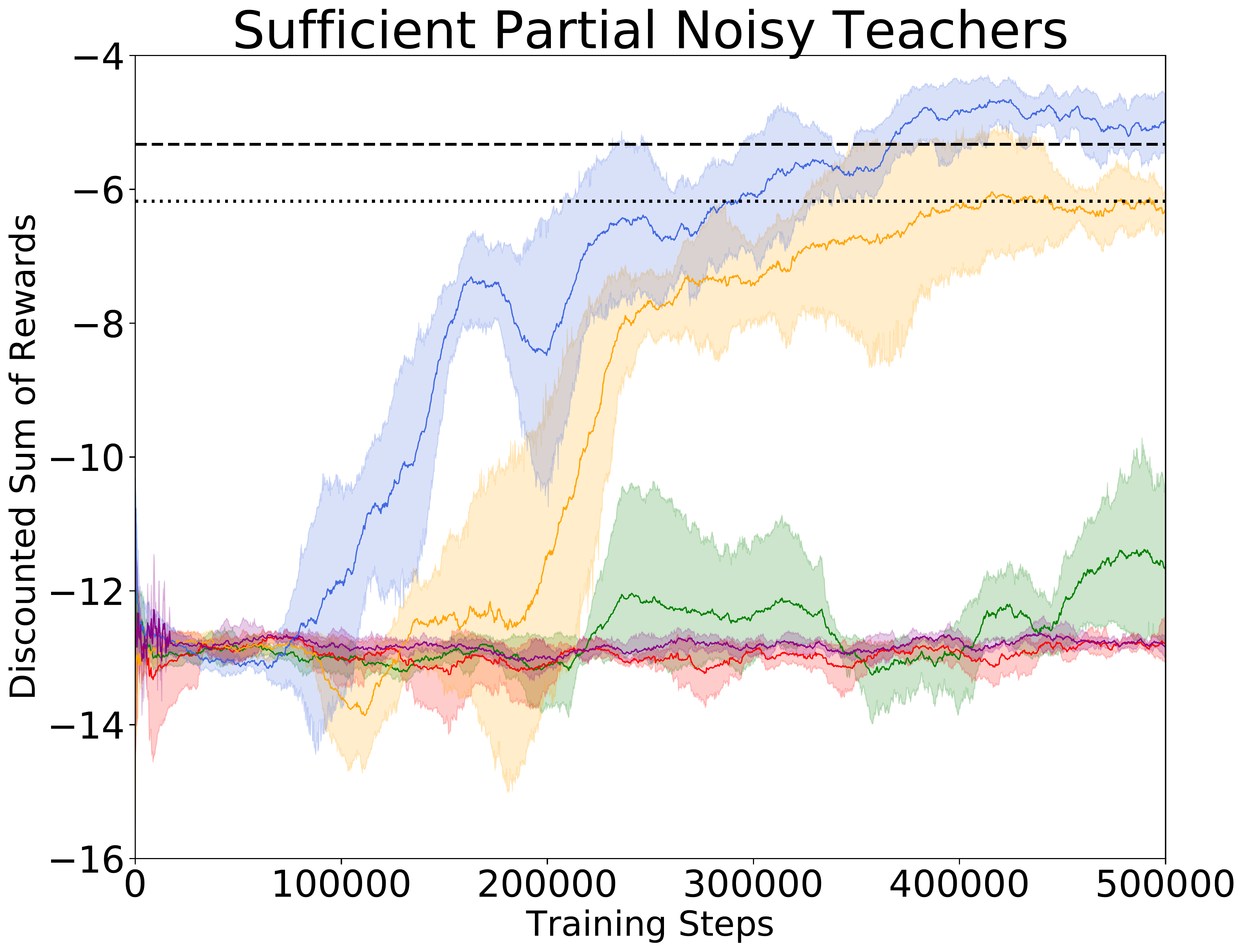}
    \end{subfigure}
   \caption{\textbf{Evaluation with \textit{sufficient} teacher sets:} Test time performance of an agent trained for the \texttt{pick-and-place} task with~\methodname and baselines. Whether the teacher set contains just one \textit{sufficient} teacher (leftmost two graphs) or multiple \textit{partial} teachers (rightmost two graphs), our method significantly outperforms all others in both convergence speed and asymptotic performance. With multiple \textit{partial} teachers, the~\methodname agent even outperforms our hand-coded teacher without noise, despite it being close to optimal.}
  \label{fig:mujoco1}
\end{figure}

\vspace{-5pt}
\section{Results}
\vspace{-5pt}
Our experiments answer the following questions: (1) to what extent does \methodname improve the number of interactions needed by the agent to learn to solve the task by leveraging a set of teachers that are \textit{partial}? (2) can \methodname~still improve sample efficiency when the set of teachers is \textit{insufficient} and parts of the task must be learned from scratch? (3) how sensitive is the performance of \methodname~to the quality and size of the teacher set? 

Fig.~\ref{fig:mujoco1} addresses (1) by showing that \methodname outperforms all baselines in improving both the sample efficiency of learning and the asymptotic performance of the agent. 
We present our results on the \texttt{Pick-and-Place} task and defer results on the \texttt{Path Following}  and \texttt{Hook Sweep} tasks in Appendix~\ref{app:path_following_exps} and Appendix~\ref{app:hook_sweep_exps}, along with ablation analysis (Appendix~\ref{app:ablation}) and experiments on parameter sensitivity (Appendix~\ref{app:sensitivity}). Ablation studies suggest that all 4 components of our approach are needed for it to work well across tasks.

We evaluate \methodname with \textit{sufficient} and \textit{partial} teachers by varying whether the teacher set contains a single \textit{sufficient} teacher or both the pick and place \textit{partial} teachers, and whether these teachers are made worse through the addition of noise. As shown in Fig.~\ref{fig:mujoco2}, our method consistently performs better than all baselines across all teacher sets. Interestingly, both introducing suboptimality into the teacher(s) and using partial teachers, instead of a single full teacher, improve performance for our algorithm as well as the next best performing baseline. We believe this is due to suboptimal teachers providing more varied advice and increasing the diversity of data in the replay buffer, which mitigates extrapolation error.

\begin{figure}[t!]
    \centering
    \includegraphics[width=\textwidth]{figs/legend.png}
    \begin{subfigure}[b]{0.245\textwidth}
        \centering
        \includegraphics[width=\textwidth]{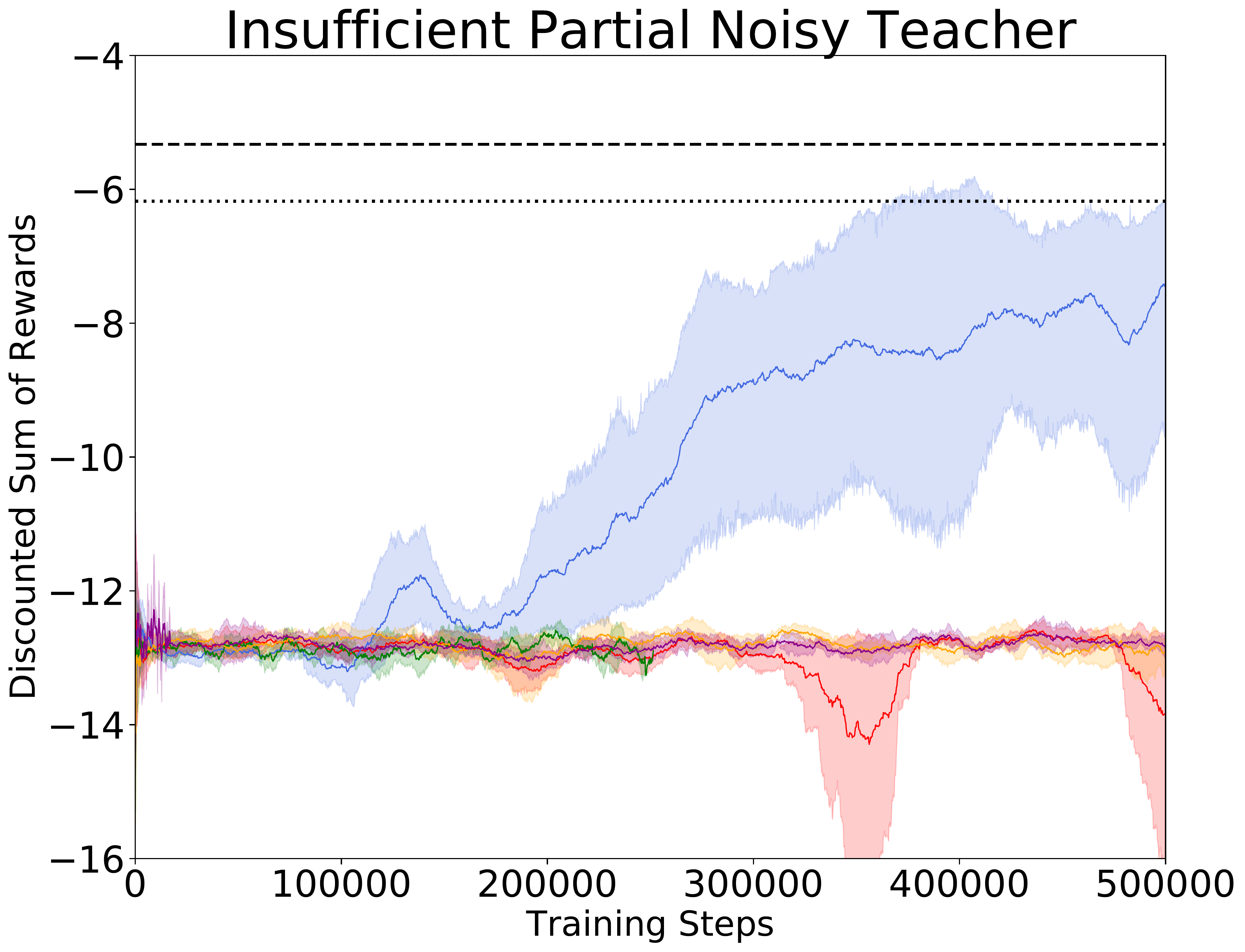}
    \end{subfigure}
    \hfill
    \begin{subfigure}[b]{0.245\textwidth}
        \centering
        \includegraphics[width=\textwidth]{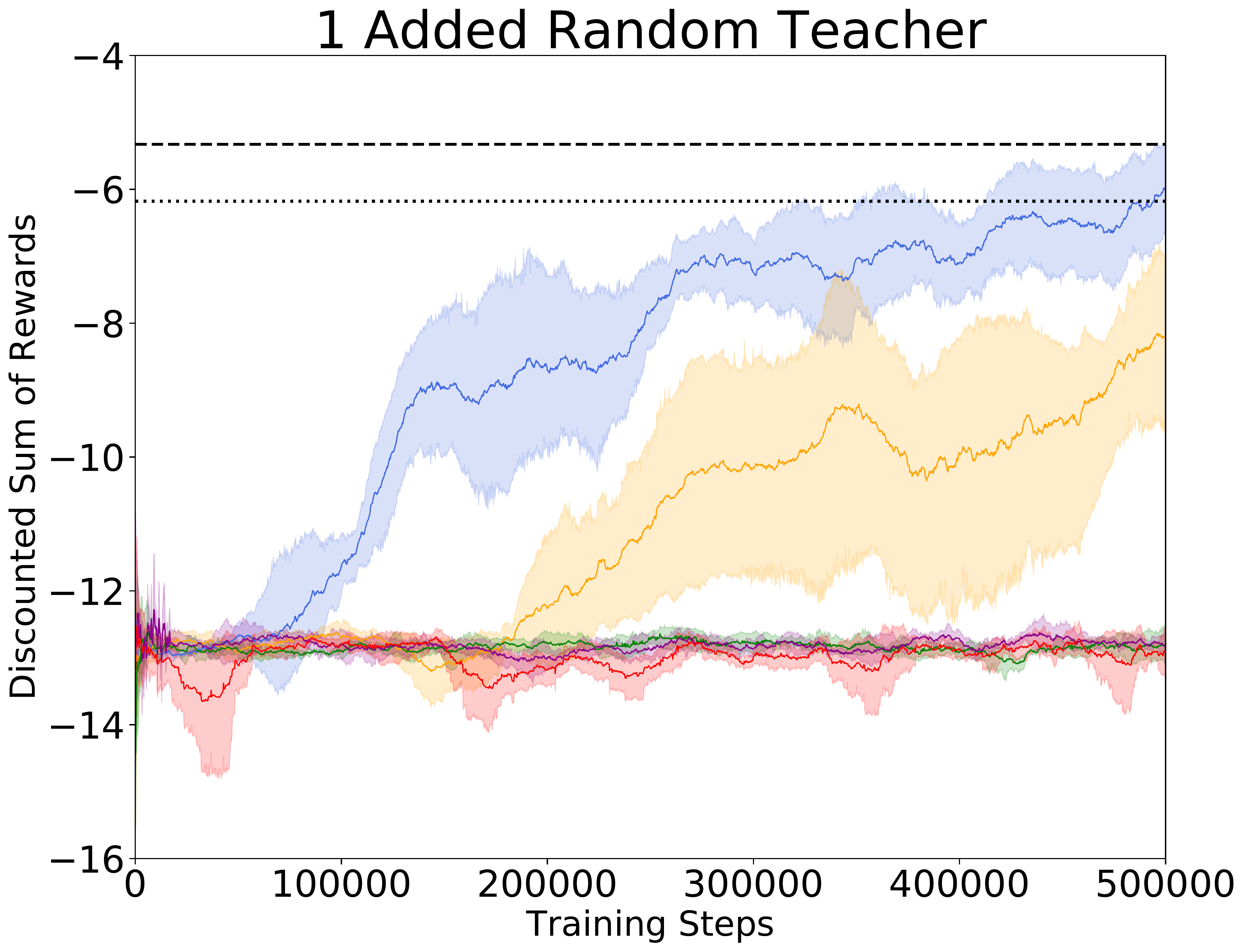}
    \end{subfigure}
    \hfill
    \begin{subfigure}[b]{0.245\textwidth}
        \centering
        \includegraphics[width=\textwidth]{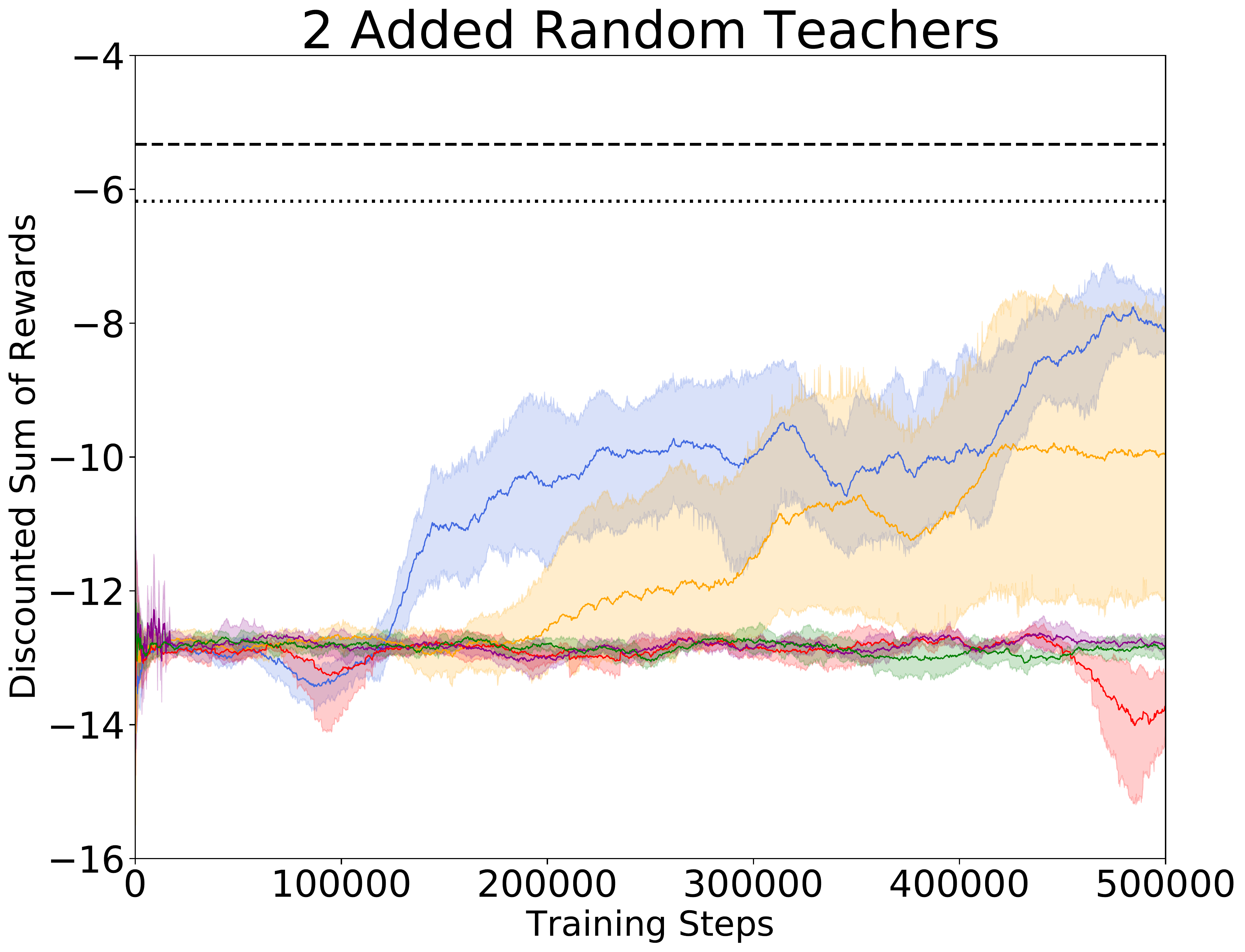}
    \end{subfigure}
    \hfill
    \begin{subfigure}[b]{0.245\textwidth}
        \centering
        \includegraphics[width=\textwidth]{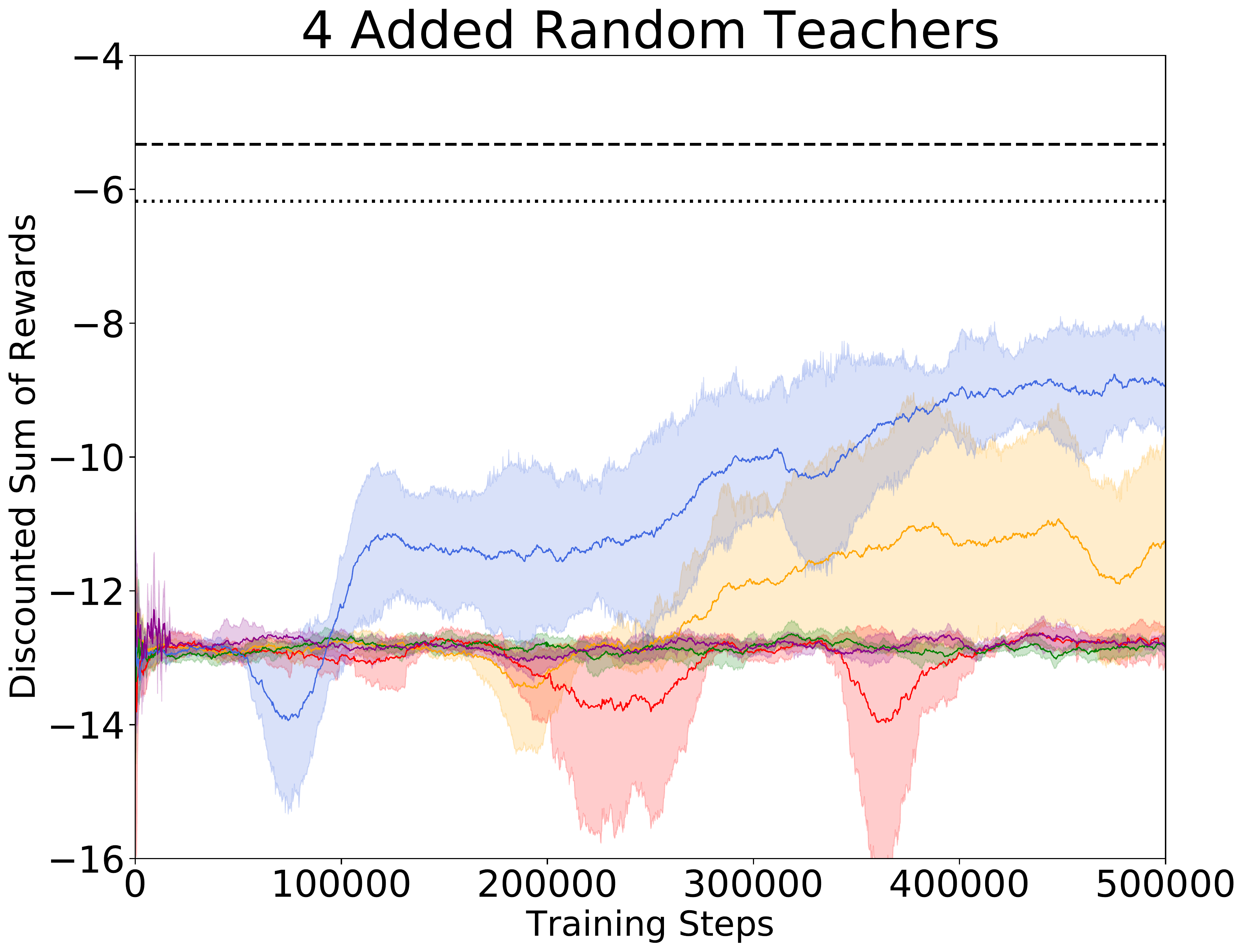}
    \end{subfigure}
   \caption{\textbf{Evaluation with \textit{insufficient} and low-quality teacher sets.} Test time agent performance on the \texttt{pick-and-place} task. Both when utilizing just the suboptimal pick teacher (leftmost graph) or when utilizing both the pick and place teachers along with added teachers which always suggest random actions (right three graphs), our method outperforms all others in both convergence speed and asymptotic performance.}
  \label{fig:mujoco2}
\end{figure}

Next, we evaluate \methodname with an \textit{insufficient} teacher set by providing only the noisy pick teacher and thus requiring the agent to learn the place part of the task on its own. As shown in the \textit{insufficient partial noisy teacher} plot in Figure 4, none of the baselines are able to learn in this condition, whereas our method is able to approach the performance of our noisy full teacher. 

Lastly, we demonstrate \methodname's robustness to learning with miss-specified teachers by training the agent with the two noisy pick and place teachers as well as 1, 2, and 4 teachers that always suggest random actions. As can be seen in  Fig.~\ref{fig:mujoco2}, the agent's learning performance does degrade as we add more random teachers but \methodname consistently outperforms the other baselines and is able to get some benefit from the teacher set.

\vspace{-5pt}
\section{Conclusion}
\vspace{-3pt}
We presented \methodname, a unifying approach to leverage advice from an ensemble of sub-optimal teachers to accelerate the learning process of an off-policy RL agent. \methodname incorporates teachers' advice into the behavioral policy based on a Thomson sampling mechanism on the probabilistic evaluations of a Bayesian critic. We compare \methodname with prior approaches and show that it  extracts useful exploratory experiences when the set of teachers is noisy, partial, incomplete, or even contradictory. In the future, we plan to apply \methodname to real robot policy learning to demonstrate its applicability to solving challenging long-horizon manipulation in the real world.

\textbf{Acknowledgments} Toyota Research Institute (“TRI”) provided funds to assist the authors with their research but this
article solely reflects the opinions and conclusions of its
authors and not TRI or any other Toyota entity. AM is supported by the Department of Defense (DoD) through the National Defense Science \& Engineering Graduate Fellowship (NDSEG) Program.

\clearpage
\renewcommand*{\bibfont}{\footnotesize}

\renewcommand{\baselinestretch}{.95}
\bibliography{refs}

\newpage
\appendix
\part*{Appendix}

\renewcommand\thesection{\Alph{section}}

\counterwithin{figure}{section}
\counterwithin{algorithm}{section}

\section{Problem Formulation}
\label{app:problem}

\subsection{Relationship between Value Function and Reaching Time}
\label{app:propositionproof}

We consider the class of infinite-horizon MDPs (defined in Sec.~\ref{s:baps}) that have a deterministic transition function $s' = \mathcal{T}(s, a)$, a set of absorbing goal states $\Goals\subset\States$, and a sparse reward function $r_\Goals(s)=\mathbb{1}({s\in\Goals})$. Note that for a given deterministic policy $\pi$ and a goal set $\Goals$, we have $$V_\Goals^\pi(s_0) = r_\Goals(s_0) + \sum_{t=1}^{\infty} \gamma^t r_\Goals(s_t) = \mathbb{1}(s_0 \in \Goals) + \sum_{t=1}^{\infty} \gamma^t \mathbb{1}(s_t \in \Goals).$$

\begin{prop}
\label{app:proposition}
In this class of MDPs, a deterministic policy $\pi$ has a higher value for state $s$ than another state $s'$ if and only if $\pi$ can reach a goal in $\Goals$ in fewer timesteps from $s$ than from $s'$.
\end{prop}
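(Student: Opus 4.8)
The plan is to reduce the value function to a closed form that depends on the trajectory only through its \emph{first hitting time} of the goal set, and then read off the claim from the monotonicity of that closed form. First I would define, for the deterministic policy $\pi$ and a start state $s$, the hitting time $\tau^\pi(s) \in \{0, 1, 2, \dots\} \cup \{\infty\}$ as the smallest index $t$ for which the trajectory $(s_0 = s, s_1, s_2, \dots)$ generated by $\pi$ satisfies $s_t \in \Goals$, setting $\tau^\pi(s) = \infty$ if no such $t$ exists.

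The key structural fact I would exploit is that the goal states are absorbing: once the trajectory enters $\Goals$ it never leaves, so $s_t \in \Goals$ (hence $r_\Goals(s_t) = 1$) for every $t \ge \tau^\pi(s)$, while $r_\Goals(s_t) = 0$ for every $t < \tau^\pi(s)$. Substituting this into the definition of $V_\Goals^\pi(s)$ collapses the infinite sum to a single geometric tail,
\begin{equation}
V_\Goals^\pi(s) = \sum_{t=\tau^\pi(s)}^{\infty} \gamma^t = \frac{\gamma^{\tau^\pi(s)}}{1 - \gamma},
\end{equation}
valid for $\gamma \in (0,1)$, where I adopt the convention $\gamma^\infty = 0$ so that $V_\Goals^\pi(s) = 0$ exactly when the goal is never reached.

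From here the biconditional is immediate. Since $\gamma \in (0,1)$, the map $n \mapsto \gamma^n$ is strictly decreasing on $\{0,1,2,\dots\}\cup\{\infty\}$, and dividing by the positive constant $1 - \gamma$ preserves this ordering; hence $V_\Goals^\pi$ is a strictly decreasing function of the hitting time. Therefore $V_\Goals^\pi(s) > V_\Goals^\pi(s')$ holds if and only if $\tau^\pi(s) < \tau^\pi(s')$, i.e., if and only if $\pi$ reaches a goal in fewer timesteps from $s$ than from $s'$; the cases where one or both hitting times are infinite are subsumed by the same convention.

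I expect no serious obstacle, since the argument is essentially the evaluation of a geometric series. The only points requiring care are the bookkeeping at the absorbing boundary $t = \tau^\pi(s)$ (confirming the reward switches on and stays on) and the degenerate factor $\gamma = 0$, for which $n \mapsto \gamma^n$ is no longer strictly decreasing and the stated equivalence fails; I would therefore restrict to $\gamma \in (0,1)$, the regime relevant to the paper's discounted setting.
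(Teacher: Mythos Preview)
Your proposal is correct and follows essentially the same approach as the paper: compute the value function in closed form as a geometric tail determined by the first hitting time of $\Goals$, then invoke the strict monotonicity of $n \mapsto \gamma^n$ on $\{0,1,\dots\}\cup\{\infty\}$ for $\gamma\in(0,1)$. Your treatment is in fact slightly more careful than the paper's, since you explicitly handle the infinite-hitting-time case via the convention $\gamma^\infty = 0$ and flag the degenerate endpoint $\gamma = 0$.
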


\begin{proof}
Consider any goal state $s_g \in \Goals$. Since every goal state is absorbing with reward $1$, we have $V^{\pi}(s_g) = \frac {1} {1 - \gamma}$. Now consider any arbitrary state $s$. Since both the policy $\pi$ and transition function $\mathcal{T}$ are deterministic, an agent acting according to $\pi$ either reaches a goal state $s_g \in \Goals$ in a finite number of timesteps $t_g$ or never reaches a goal state. Therefore, the value function $V^{\pi}(s) = \frac {\gamma^{t_g - 1}} {1 - \gamma}$ if $\pi$ can reach a goal state from $s$, and $0$ otherwise. Now, assume $\pi$ reaches a goal from state $s$ in $t_s$ timesteps and from $s'$ in $t_{s'}$ timesteps. Then states $s$ and $s'$ have value $V^{\pi}(s) = \frac {\gamma^{t_s - 1}} {1 - \gamma}$ and $V^{\pi}(s') = \frac {\gamma^{t_{s'} - 1}} {1 - \gamma}$ respectively. Since the function $f(t) = \frac {\gamma^{t - 1}} {1 - \gamma}$ is monotone decreasing in $t$ for $\gamma \in (0, 1)$, we have that $t_s < t_{s'} \leftrightarrow V^{\pi}(s) > V^{\pi}(s')$, and the proposition holds.
\end{proof}

\subsection{Extension to Stochastic MDPs}
\label{app:expandeddefs}

We now consider infinite-horizon MDPs with stochastic transition dynamics $\mathcal{T}(s' | s, a)$, a set of absorbing goal states $\Goals \subset \States$, and a sparse reward function $r_\Goals(s)=\mathbb{1}({s\in\Goals})$. 

\subsubsection{Interpretation of value function}

The value function of a stochastic policy $\pi$ is
$$V_{\Goals}^{\pi}(s_0) = \mathbb{E}\bigg[\sum_{t=0}^{\infty} \gamma^t \mathbb{1}(s_t \in \Goals)\bigg].$$

Consider a trajectory collected by $\pi$, $\tau = (s_0, a_0, s_1, a_1, \dots)$ where $a_t \sim \pi(\cdot | s_t)$, $s_t \sim \mathcal{T}(\cdot | s_t, a_t)$. Every trajectory either reaches a goal state and stays there so that $\lim_{t \to \infty} s_t = s_g$ for some $s_g \in \Goals$, or never reaches a goal state. Then, all possible trajectories collected by $\pi$ either belong to the set of goal-reaching trajectories $\mathrm{T}_g$ or the complementary set $\mathrm{T}_g^C$. Every trajectory $\tau \in \mathrm{T}_g$ reaches a goal in some number of steps $t_g$ and consequently generates return $\frac {\gamma^{t_g - 1}} {1 - \gamma}$ and every trajectory $\tau \in \mathrm{T}_g^C$ generates $0$ return since no state generates reward.

Now we write the value function as the expected discounted sum of rewards of trajectories $R(\tau)$

\begin{align*} 
V_{\Goals}^{\pi}(s_0) &=  \mathbb{E}_{\tau}[R(\tau)] \\ 
  &= P(\tau \in \mathrm{T}_g)~\mathbb{E}_{\tau}[R(\tau)~|~\tau \in \mathrm{T}_g] + P(\tau \in \mathrm{T}_g^C)~\mathbb{E}_{\tau}[R(\tau)~|~\tau \in \mathrm{T}_g^C] \\
  &= P(\tau \in \mathrm{T}_g)~\mathbb{E}_{t_g}\bigg[\frac {\gamma^{t_g - 1}} {1 - \gamma}~|~t_g < \infty\bigg]
\end{align*}
where the second line follows by the law of iterated expectations and the third line follows by rewriting the first term as an expectation over the reaching time of trajectories that reach a goal and using the fact that the second term vanishes since all returns of trajectories in the complementary set are $0$.

Note that there is a natural decomposition in the value function. It is a product of $P(\tau \in \mathrm{T}_g)$, which is the probability that $\pi$ reaches a goal when it starts in state $s_0$, and a return term that depends on how fast $\pi$ can reach a goal. Consequently, in the stochastic case, the value of a state is dependent on two factors: reaching a goal from $s_0$ with higher certainty and reaching goals quickly, while in the deterministic case, the value of a state only depends on reaching goals quickly. Thus, we can continue to use value functions to extend our teacher attributes to the stochastic case, keeping in mind that the value function now describes a tradeoff between reaching goals reliably, and reaching goals quickly. 

\subsubsection{Teacher Attributes for Stochastic MDPs}

We have justified continuing to use value functions for defining our teacher attributes in the stochastic MDP case. As such, we simply amend our definitions to account for stochasticity.

\textbf{Definition A1.1} (\textit{Partial} Teachers) Let $\pi^*$ denote the optimal policy and $V_\Goals^*(s)$ denote the optimal state value function, with respect to a fixed set of goals $\Goals$. Then, a teacher policy is \textit{partial} if in some non-empty strict subset $ \exists S' \subset \mathcal{S}$, for all states $s \in S'$, we have that $\mathbb{E}_{s'}V_\Goals^*(s') > V_\Goals^*(s)$ for $s' \sim \mathcal{T}(\cdot | s, a)$ and $a \sim \pi(\cdot | s)$.

Here we have just amended the definition to take an expectation over the next state reached when following $\pi$. 

\textbf{Definition A1.2} (Sufficient Teachers and Teacher Sets) A teacher policy $\pi$ is sufficient if it has non-zero value for some start state: $\exists s_0 \sim \rho_0(\cdot)$ such that $V^\pi_\Goals(s_0) > 0$. A teacher set $\Pi = \{\pi_1, \pi_2, \dots, \pi_N\}$ is sufficient if there exists a mixture policy that is sufficient (a mixture policy is one that chooses $\pi_{\Pi}(s) \in \{\pi_1(s), \pi_2(s), \dots, \pi_N(s)\}$).  

This is the same definition as earlier - a non-zero state value can only occur if and only if there is some non-zero probability of reaching the goal. 

\textbf{Definition A1.3} (\textit{Contradictory} Teachers) A teacher policy $\pi_2$ is \textit{contradictory} to teacher policy $\pi_1$ and a goal set $\Goals$ if there exists a state $s \in \mathcal{S}$ where following the advice of $\pi_2$ causes $\pi_1$ to take more timesteps to reach a goal in the goal set. Equivalently, following $\pi_2$ in $s$ leads to a state with lower value for $\pi_1$:  $V_\Goals^{\pi_1}(s') < V_\Goals^{\pi_1}(s)$ for $s' \sim \mathcal{T}(\cdot | s, a)$ and $a \sim \pi_2(\cdot | s)$.

Once again we have just amended the definition to take an expectation over the next state reached when following $\pi_2$.

\section{Additional Details about \methodname}

\subsection{Training the Bayesian Critic and Agent Policy}
\label{app:bddpgtraining}

\begin{algorithm*}[!t]
\caption{Actor-Critic with Teachers (\methodname) Training Loop}
\label{alg:train}
\begin{algorithmic}[1]
\Require $\pi_b$, $Q_{\phi}$, $\pi_{\theta}$, $\Pi$, $\mathcal{B}$ \Comment{\methodname Behavioral Policy, Critic, Actor, Teacher Set, Replay Buffer}


\For{epoch $= 1, \dots, n_{\text{epoch}}$}
    \For{episode $= 1, \dots, n_{\text{episodes}}$}
        \For{$t = 1, \dots, T$}
            \State $a_t \leftarrow \pi_b(s_t~|~Q_{\phi}, \pi_{\theta}, \Pi)$ \Comment{Action from behavioral policy using Algorithm~\ref{alg:choice}}
            \State $\mathcal{B} = \mathcal{B} \cup (s_t, a_t, r_t, s_{t+1}$ \Comment{Execute action and collect transition into replay buffer}
        \EndFor
    \EndFor
    \For{$u = 1, \dots, n_{\text{updates}}$}
        \State $\{(s_i, a_t, r_i, s_{i+1})\}_{i=1}^B \sim \mathcal{B}$ \Comment{Sample mini-batch from replay buffer}
        \State $\phi \leftarrow \phi - \nabla_{\phi}\mathcal{L}_{\text{critic}}(\phi)$ \Comment{Update critic using loss in Equation~\ref{eq:bddpg-critic}}
        \State $\theta \leftarrow \theta - \nabla_{\theta}\mathcal{L}_{\text{actor}}(\theta)$ \Comment{Update actor using loss in Equation~\ref{eq:bddpg-actor}}
    \EndFor
\EndFor
\end{algorithmic}
\end{algorithm*}

Algorithm~\ref{alg:choice} details how the \methodname behavioral policy $\pi_b$ collects samples into the replay buffer $\mathcal{B}$. In Algorithm~\ref{alg:train}, we outline how these samples are used to train the Bayesian critic $Q_{\phi}$ and actor policy $\pi_{\theta}$. Following Henderson et al.~\cite{henderson2017bayesian}, we use an $\alpha$-divergence loss for the critic, but use a behavioral target value, as described in Sec.~\ref{s:method} to mitigate extrapolation error

\begin{equation}
\mathcal{L}_{\text{critic}} = -\frac {1} {\alpha} \log \sum_{k=1}^K \exp\Big(- \frac{\alpha \tau} {2} \big(r + \gamma Q_{\phi'}(s', \pi_b(s')) - Q_{\hat{\phi}_k} (s, a)\big)^2 \Big) + (1 - p_{\text{drop}}) ||\phi||_2^2~~,
\label{eq:bddpg-critic}
\end{equation}
where $K$ is the number of Monte Carlo dropout samples of the critic network weights $\hat{\phi}_k$ and $p_{\text{drop}}$ is the dropout rate. Notice the use of the \methodname behavioral policy $\pi_b$ for computing the target value.

The agent actor loss is simply the normal DDPG actor loss averaged over $K$ Monte Carlo dropout critic weight samples 
\begin{equation}
\mathcal{L}_{\text{actor}} =  -\sum_{k=1}^K Q_{\hat{\phi}_k}(s, \pi_{\theta}(s))~~~.
\label{eq:bddpg-actor}
\end{equation}%



\subsection{Commitment Probability Estimation}
\label{app:commitestimate}

As discussed in Sec.~\ref{s:method}, when considering to switch to a policy $\pi_i$ from a previously used policy $\pi_j$, we use the posterior distributions over the action-values of both policy actions to estimate the probability for the expected return of $a_i$ to be larger than the expected return of $a_j$. We consider the posterior distribution over expected value provided by the Bayesian critic for each action as an independent Gaussian distributed random variable $Z_i = Q_{\phi}(s, a_i)$ and $Z_j = Q_{\phi}(s, a_j)$. To obtain these distributions we take $K$ Monte Carlo samples of our posterior critic for each action by sampling different dropout masks~\cite{gal2016dropout}. We use the MC samples to fit a Gaussian such that
\begin{equation} \label{eq1}
\begin{split}
\mu_{Z_i} &= \frac {1} {K} \sum_{k=1}^K Q_{\hat{\phi}_k}(s, a_i) \\
 \sigma_{Z_i}^2 &= \frac {1} {K} \sum_{k=1}^K (Q_{\hat{\phi}_k}(s, a_i))^2 - \mu_{Z_i}^2 + \frac {1} {\tau}
\end{split}
\end{equation}
, where $\tau$ is the precision of the variance estimate. We can compute the estimate of the probability $P(Z_i > Z_j)$ by leveraging the fact that $Z_i - Z_j \sim \mathcal{N}(\mu_{Z_i} - \mu_{Z_j}, \sigma_{Z_i}^2 + \sigma_{Z_j}^2)$ and computing $P(Z_i - Z_j) > 0$ using the Gaussian cumulative distribution function. Intuitively, this probability estimate corresponds to the belief that the new policy prescribed by the critic will yield higher Q-value than the policy from the current timestep. 

\subsection{Mitigating Extrapolation Error}
\label{app:extraperror}
As discussed in Sec.~\ref{s:method}, a key challenge when training with teachers is avoiding instability when training the critic from off-policy experience. We observed that basing the behavioral policy on the critic of the learner agent, as opposed to a separate model such as the DQN in~\citet{wheels}, significantly helped with this problem, as did using the behavioral target described in Appendix~\ref{app:bddpgtraining}. We hypothesize that since the Bayesian critic is indirectly being used to generate data through the behavioral policy and it is trained on the same distribution of data, the algorithm enjoys similar stability benefits as on-policy methods. We intend to explore this further in future work.


\section{Additional Experiments}

\subsection{\texttt{Path Following} Experiments}
\label{app:path_following_exps}

We replicate the set of experiments presented in Fig.~\ref{fig:mujoco1} and Fig.~\ref{fig:mujoco2} for the \texttt{path-following} task, and also present the performance of the behavioral policies at train time (effectively, the quality of the collected experience the DDPG agent is trained with) in addition to the test time performance of the learned agent. The results are depicted in Fig.~\ref{fig:path_a}. We observe that our proposed method, \methodname, converges faster and to larger reward level at test time than the baselines across the variety of teacher setups as shown in the main text for \texttt{Path Following} task. Furthermore, unlike the baselines it performs at test time than at train time, indicating that it better avoids extrapolation error or over-reliance on the teachers.

\begin{figure}[!t]
   \centering
    \includegraphics[width=\textwidth]{figs/legend.png}
    \begin{subfigure}[b]{0.245\textwidth}
        \centering
        \includegraphics[width=\textwidth]{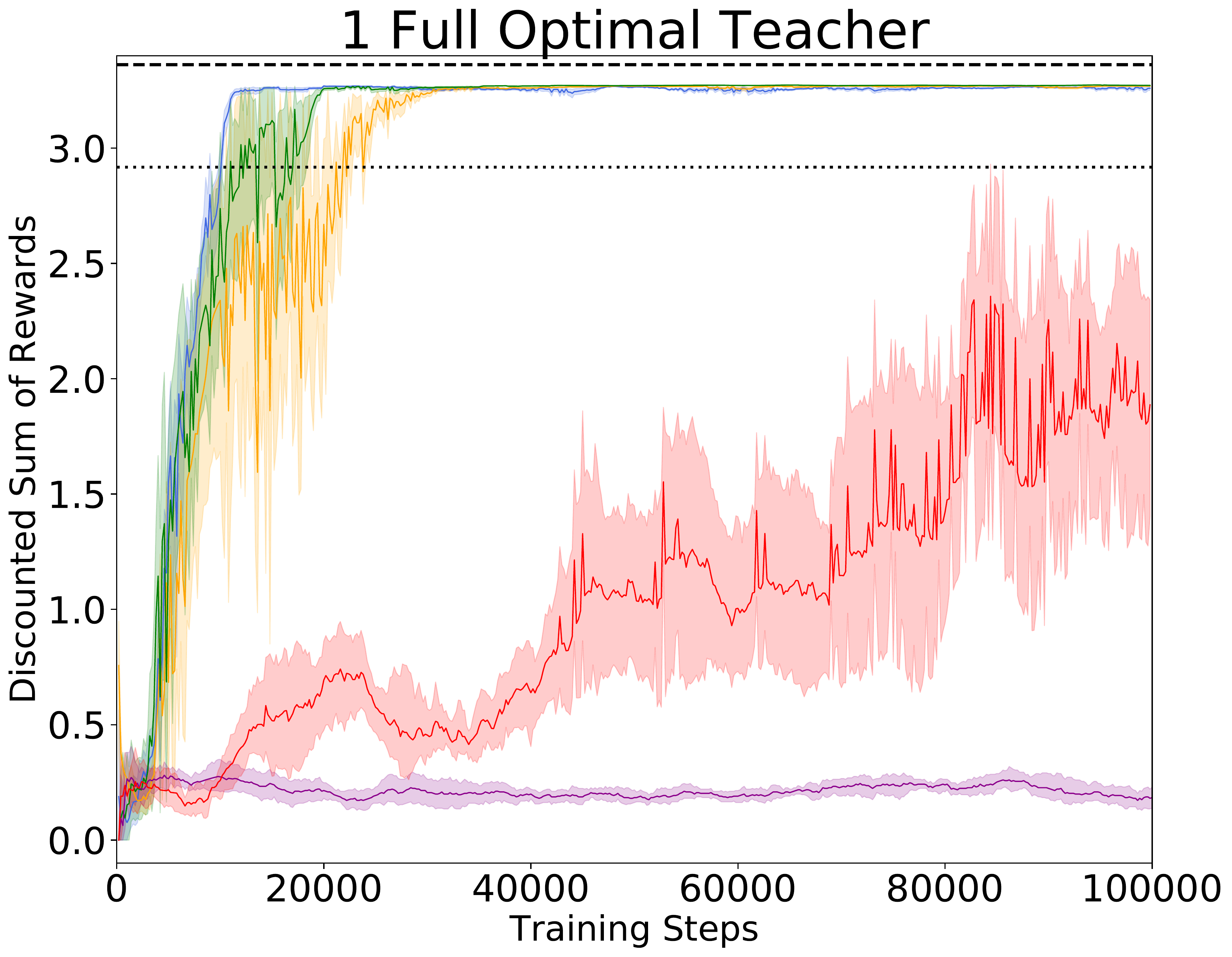}
    \end{subfigure}%
    \hfill
    \begin{subfigure}[b]{0.245\textwidth}
        \centering
        \includegraphics[width=\textwidth]{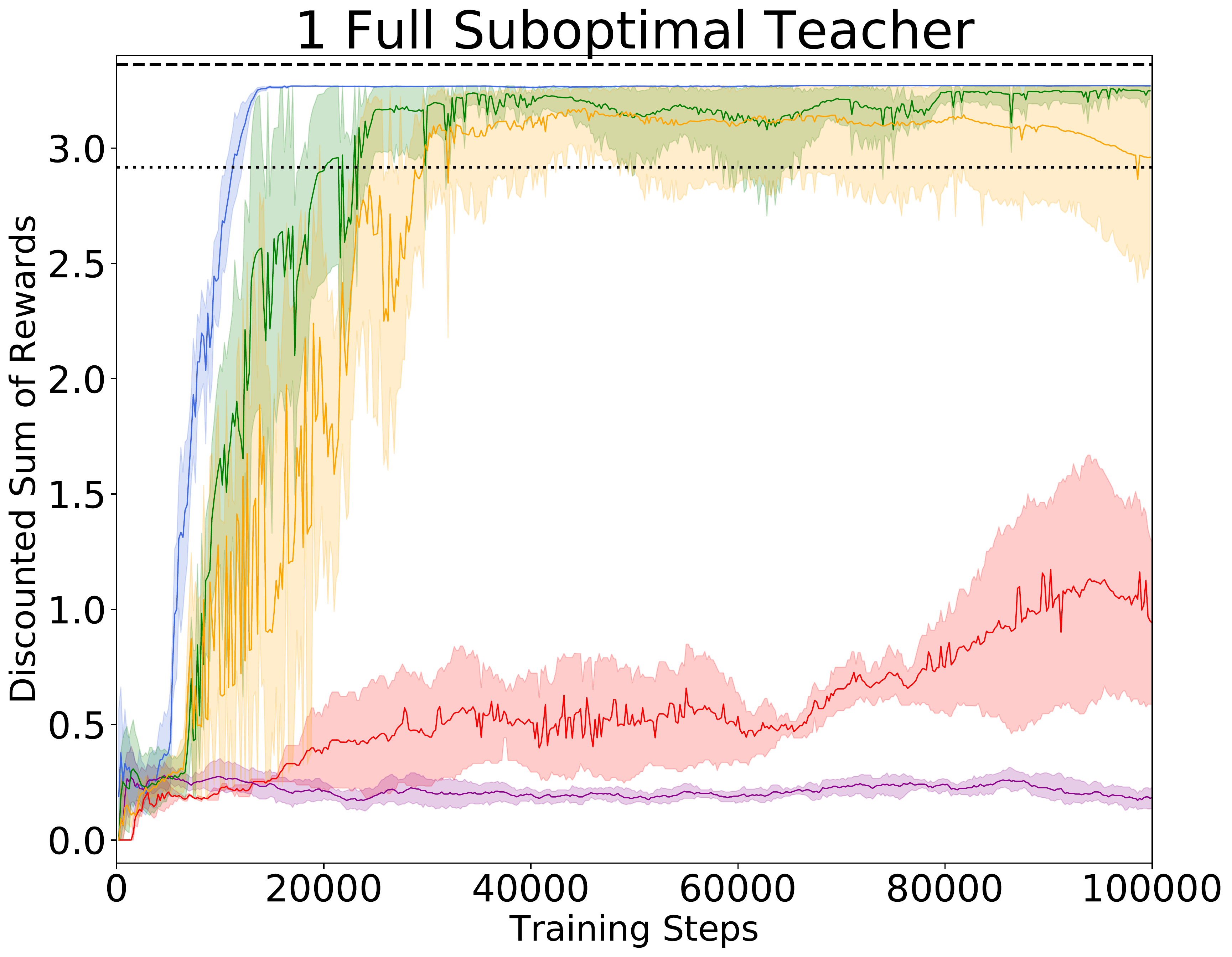}
    \end{subfigure}
    \hfill
    \begin{subfigure}[b]{0.245\textwidth}
        \centering
        \includegraphics[width=\textwidth]{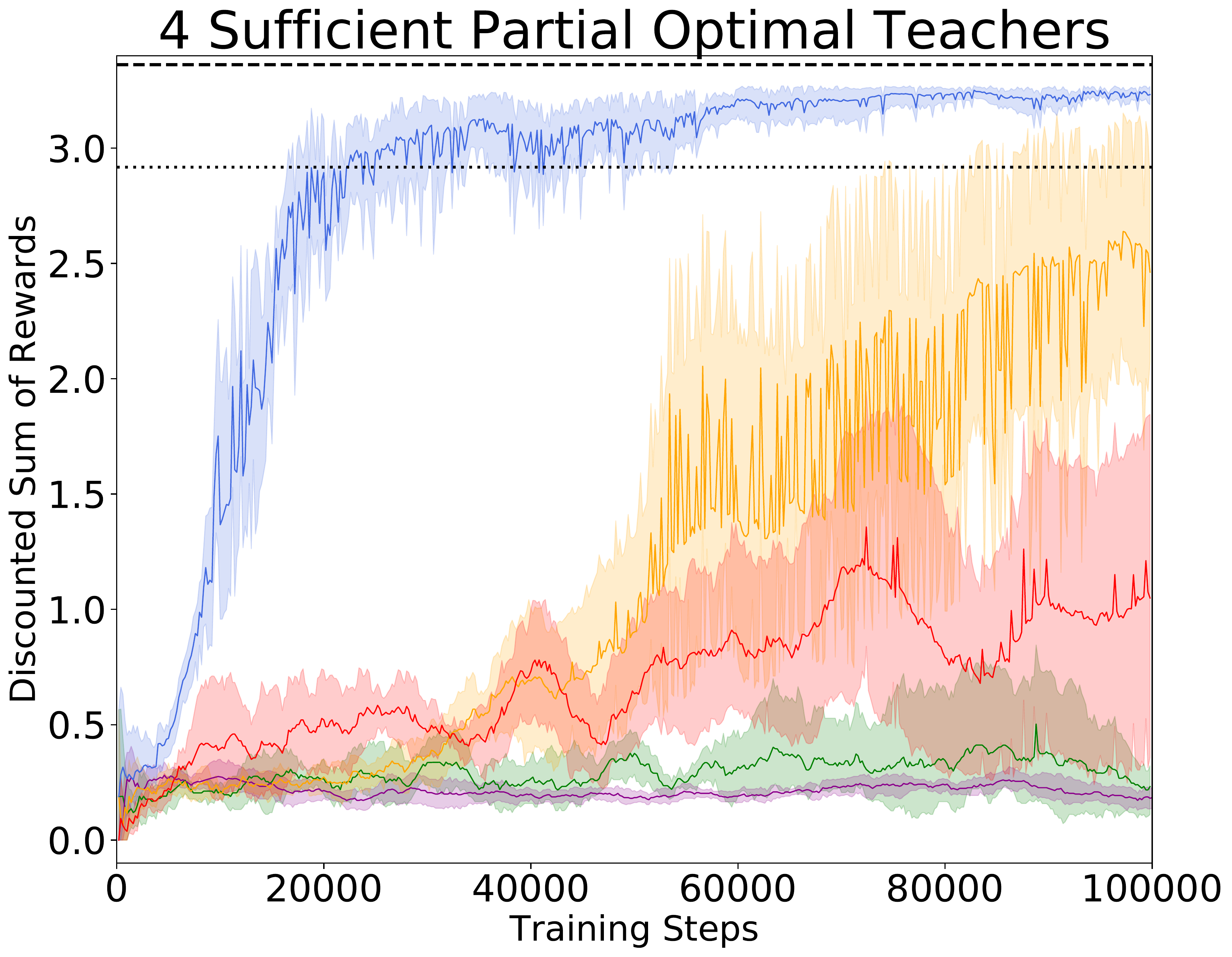}
    \end{subfigure}
    \hfill
    \begin{subfigure}[b]{0.245\textwidth}
        \centering
        \includegraphics[width=\textwidth]{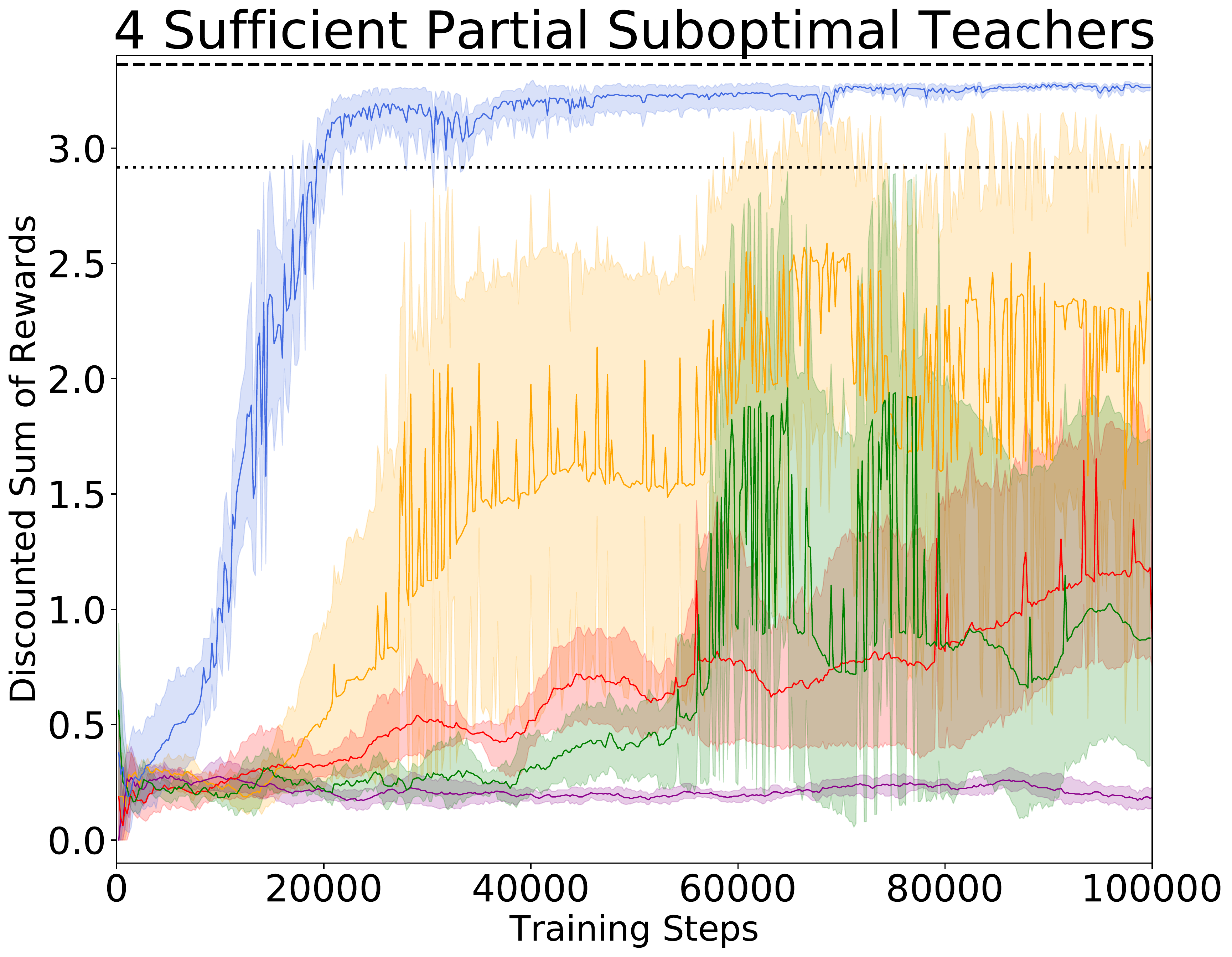}
    \end{subfigure}
     \begin{subfigure}[b]{0.245\textwidth}
        \centering
        \includegraphics[width=\textwidth]{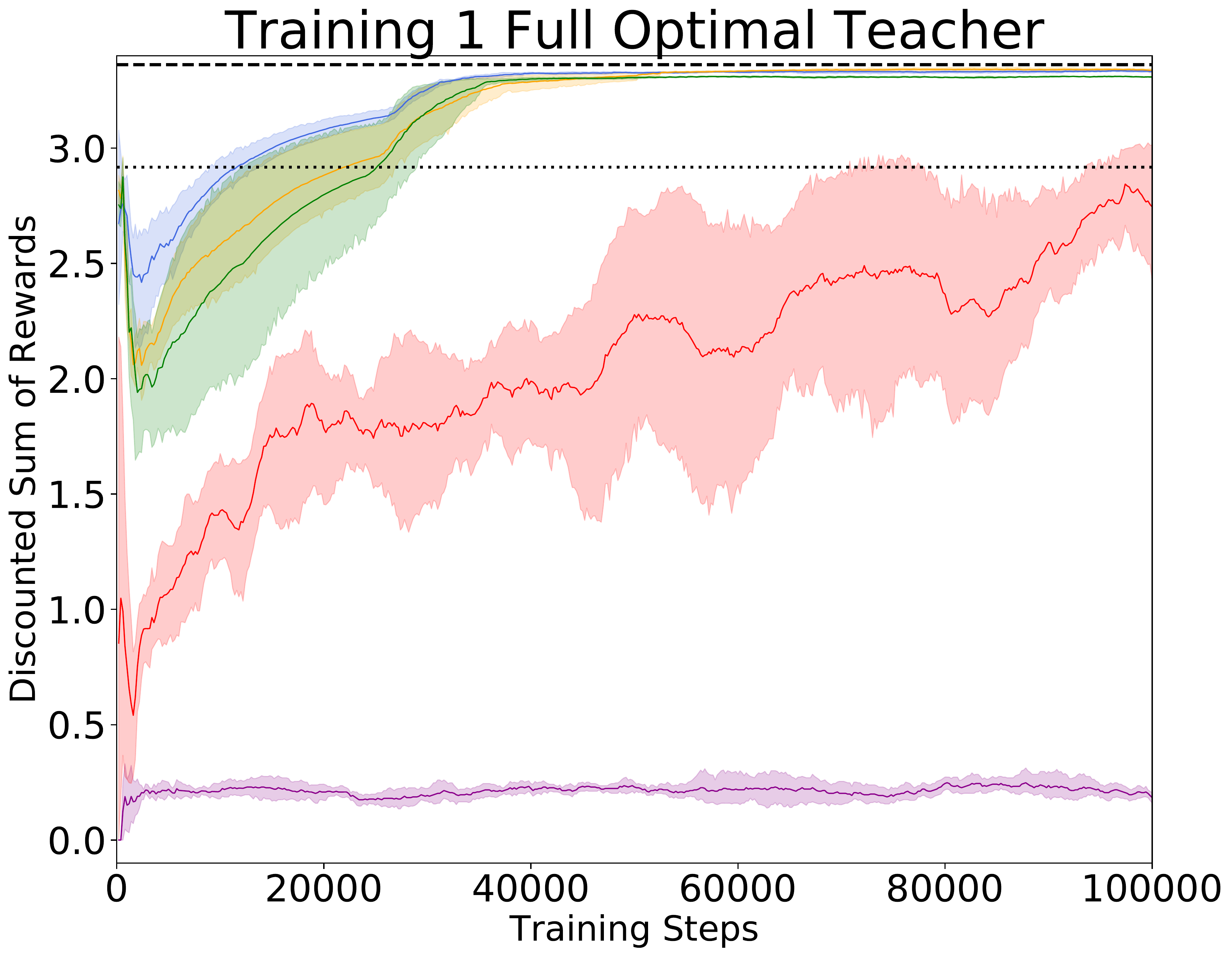}
    \end{subfigure}%
    \hfill
    \begin{subfigure}[b]{0.245\textwidth}
        \centering
        \includegraphics[width=\textwidth]{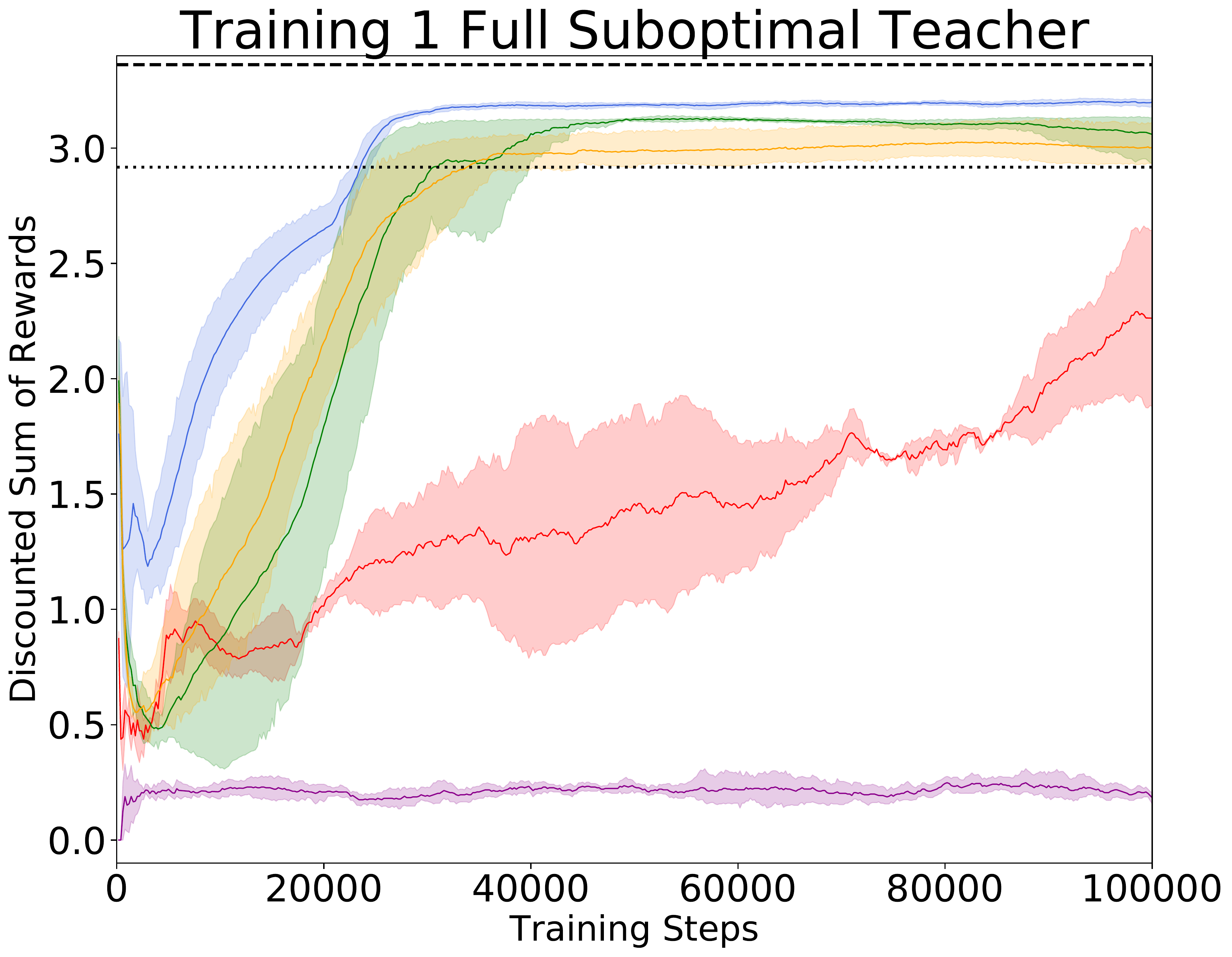}
    \end{subfigure}
    \hfill
    \begin{subfigure}[b]{0.245\textwidth}
        \centering
        \includegraphics[width=\textwidth]{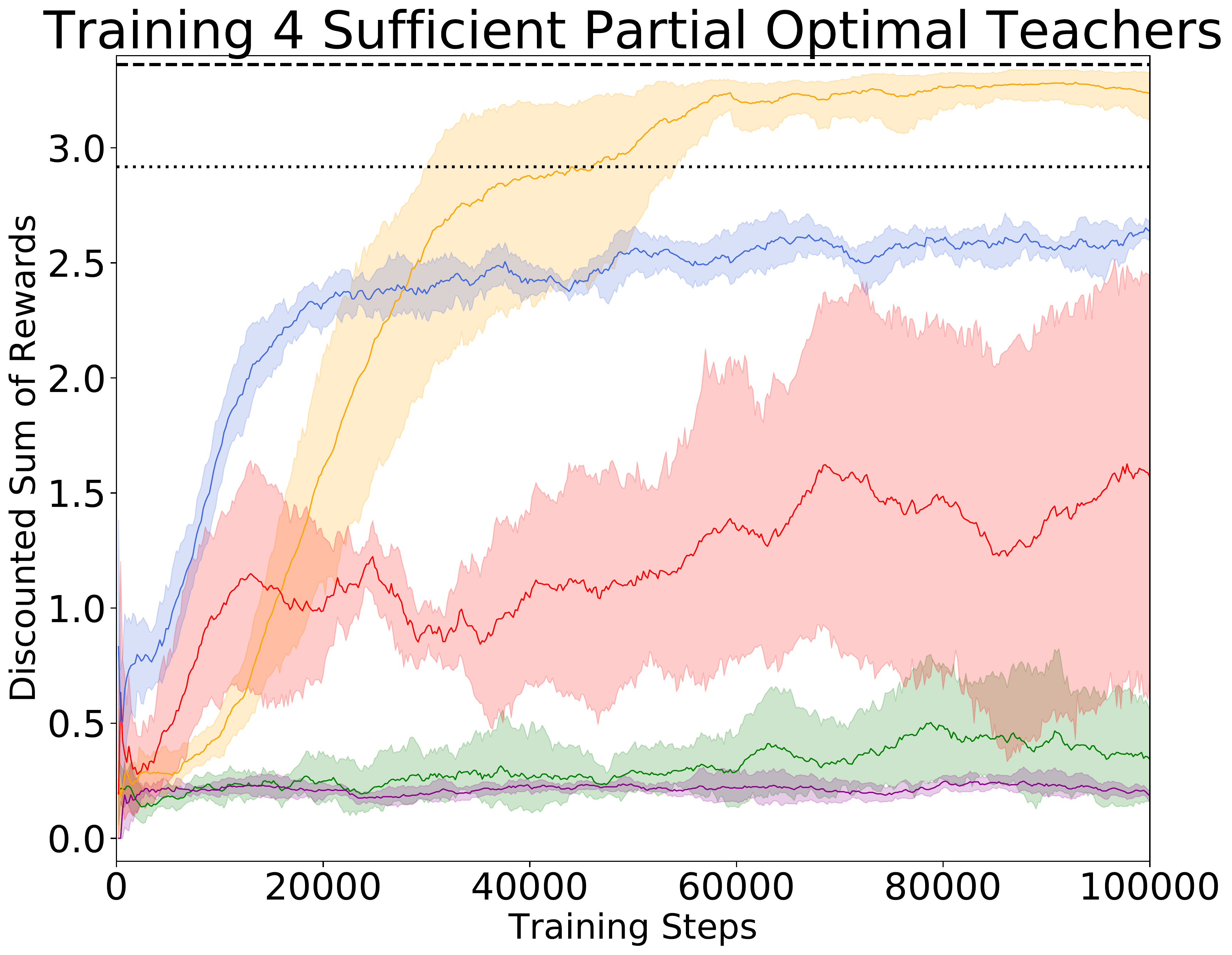}
    \end{subfigure}
    \hfill
    \begin{subfigure}[b]{0.245\textwidth}
        \centering
        \includegraphics[width=\textwidth]{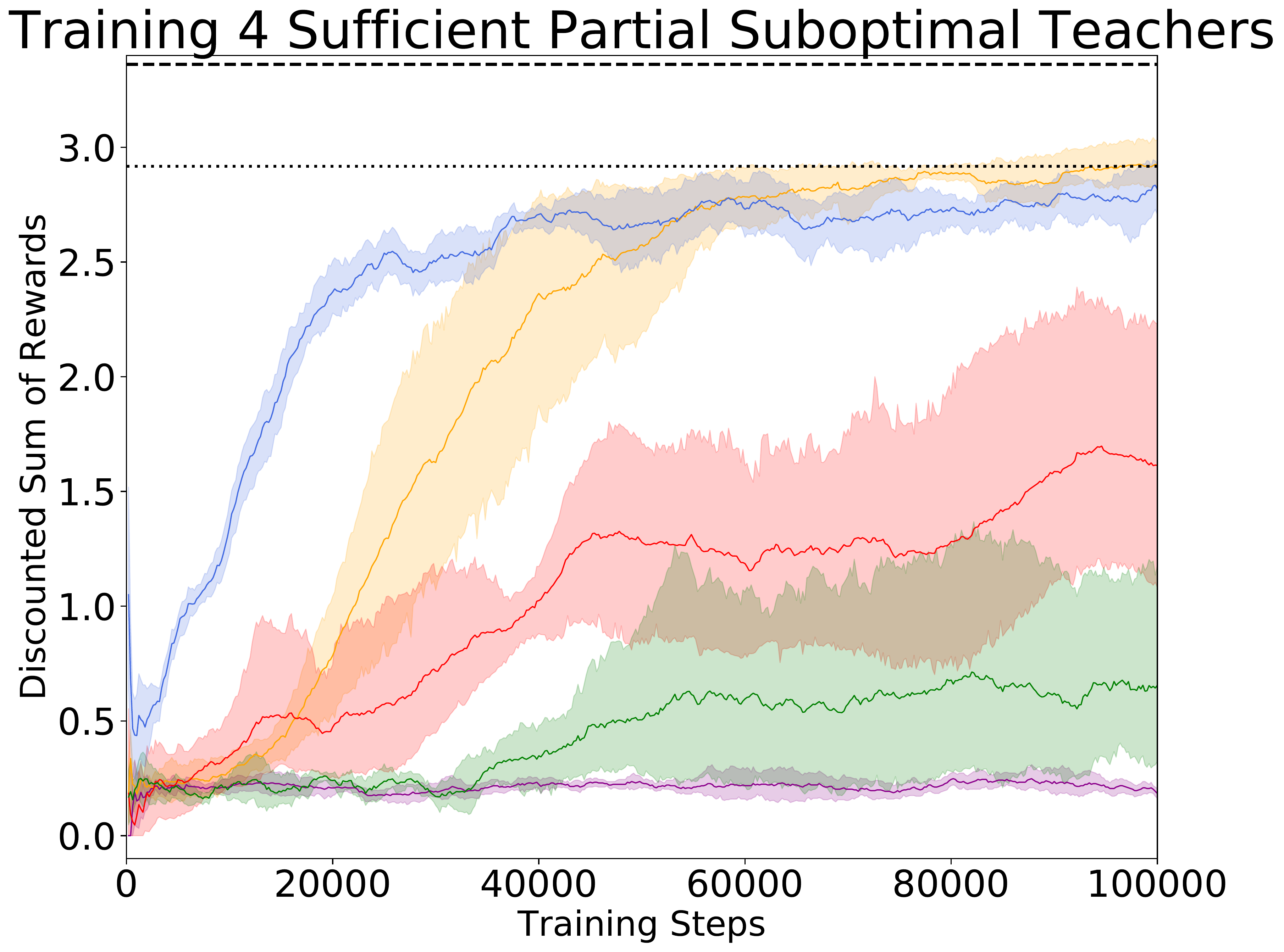}
    \end{subfigure}
   \caption{\textbf{Evaluating~\methodname with \textit{sufficient} teacher sets.} Test and train time agent performance on the \texttt{Path Following} task for~\methodname and other baselines. At test time, several baselines perform almost as well as our method when utilizing just one \textit{sufficient} teacher (leftmost two graphs), but when there are multiple \textit{partial} teachers (rightmost two graphs) our method significantly outperforms all others in both convergence speed and asymptotic performance. At train time, it is notable that the DQN baseline is rarely able to surpass the quality of the teachers it is supplied with and that the performance of the learned agent decreases substantially at test time despite having been trained with the experience of the better-performing DQN behavioral policy at train time.}
   \label{fig:path_a}
\end{figure}

\begin{figure}[t!]
   \centering
    \includegraphics[width=\textwidth]{figs/legend.png}
    \begin{subfigure}[b]{0.245\textwidth}
        \centering
        \includegraphics[width=\textwidth]{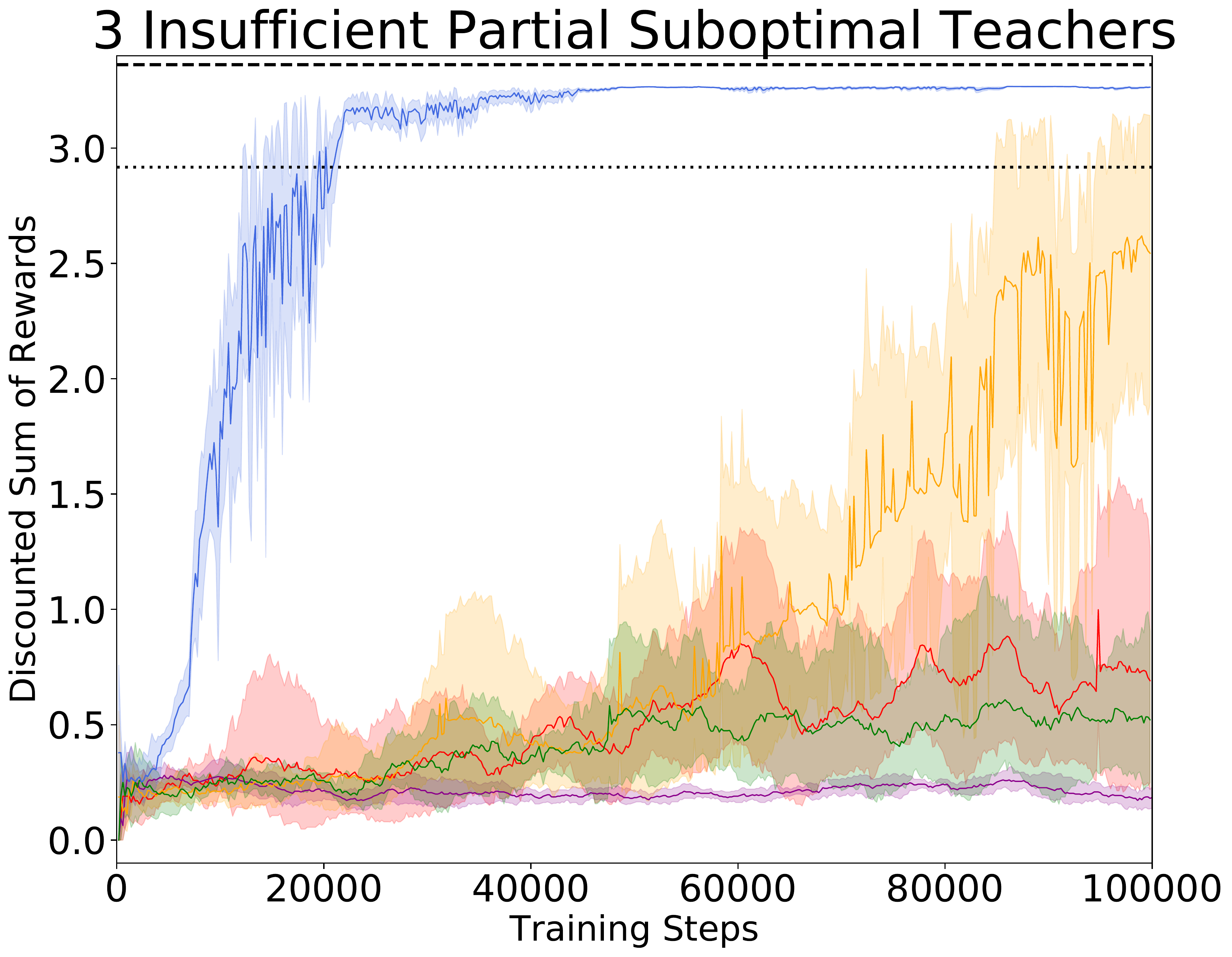}
    \end{subfigure}%
    \hfill
    \begin{subfigure}[b]{0.245\textwidth}
        \centering
        \includegraphics[width=\textwidth]{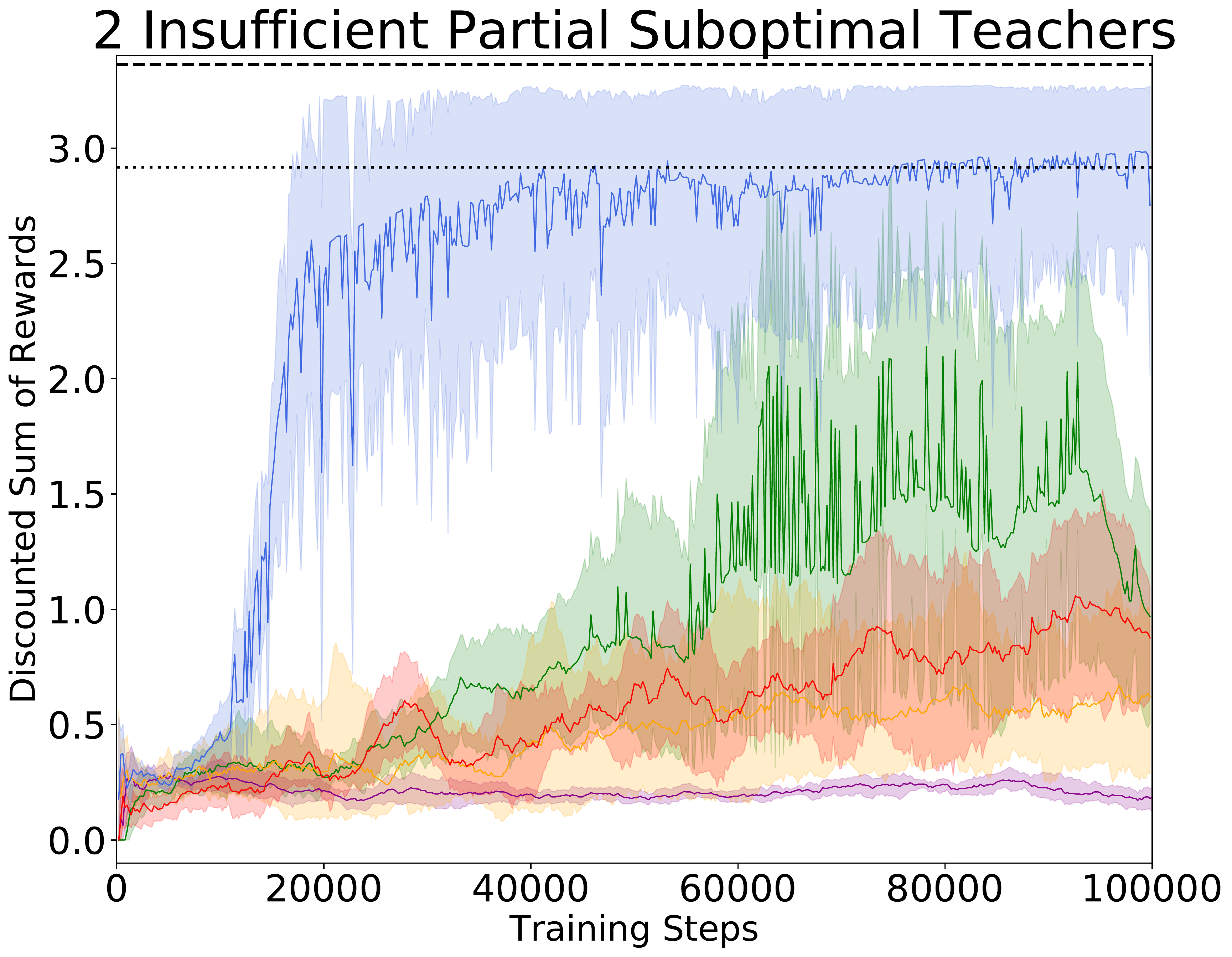}
    \end{subfigure}
    \hfill
    \begin{subfigure}[b]{0.245\textwidth}
        \centering
        \includegraphics[width=\textwidth]{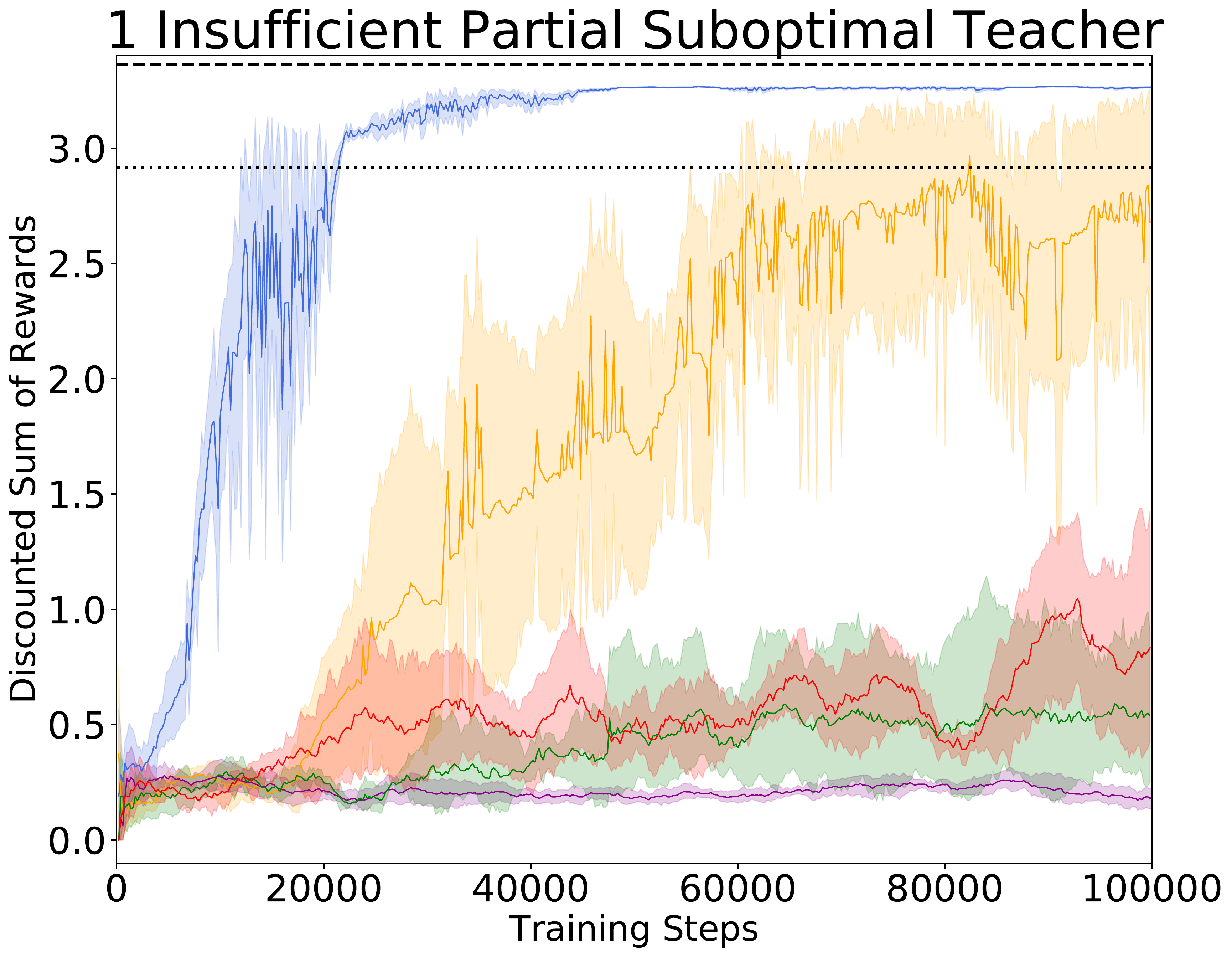}
    \end{subfigure}
    \hfill
    \begin{subfigure}[b]{0.245\textwidth}
        \centering
        \includegraphics[width=\textwidth]{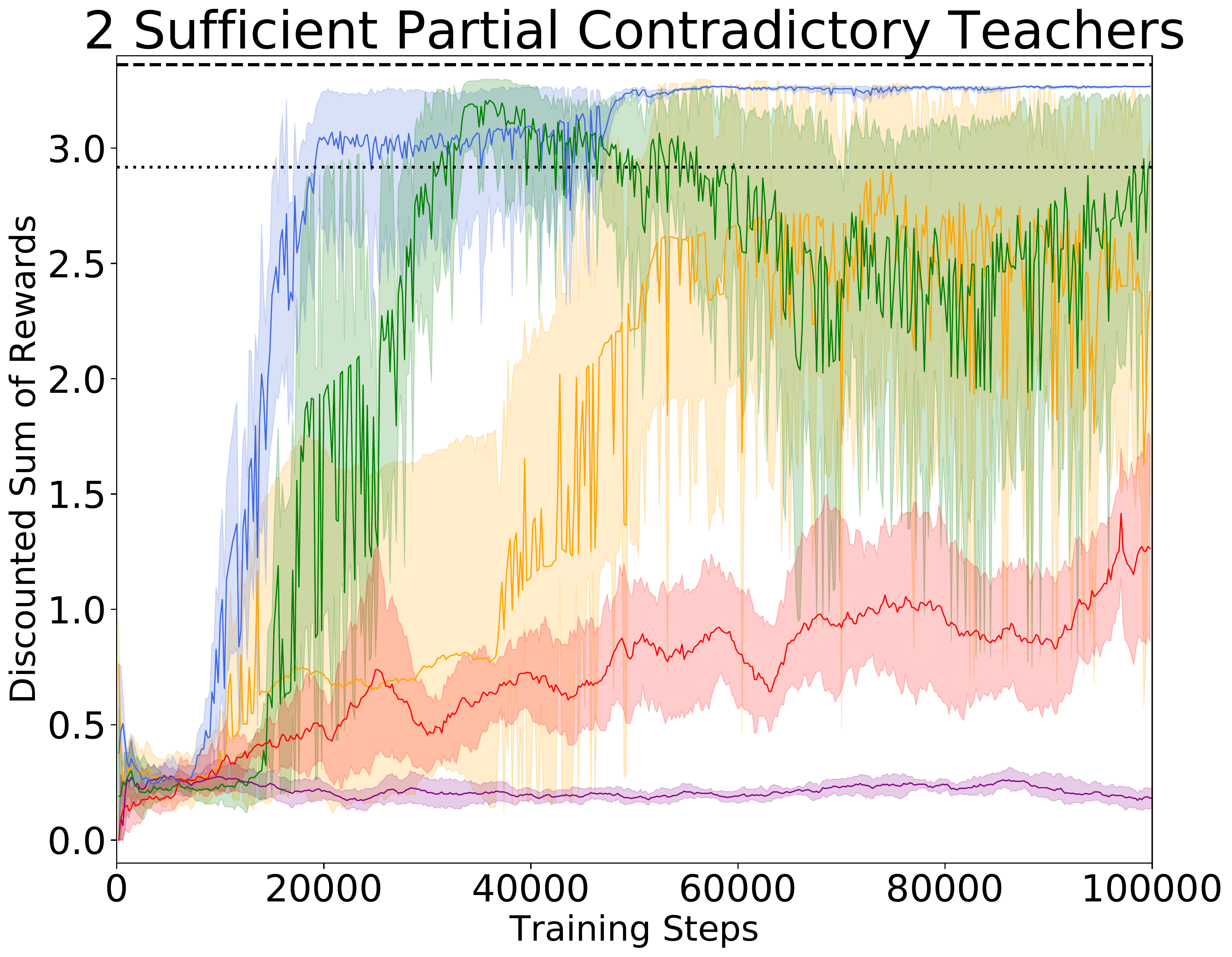}
    \end{subfigure}
    \begin{subfigure}[b]{0.245\textwidth}
        \centering
        \includegraphics[width=\textwidth]{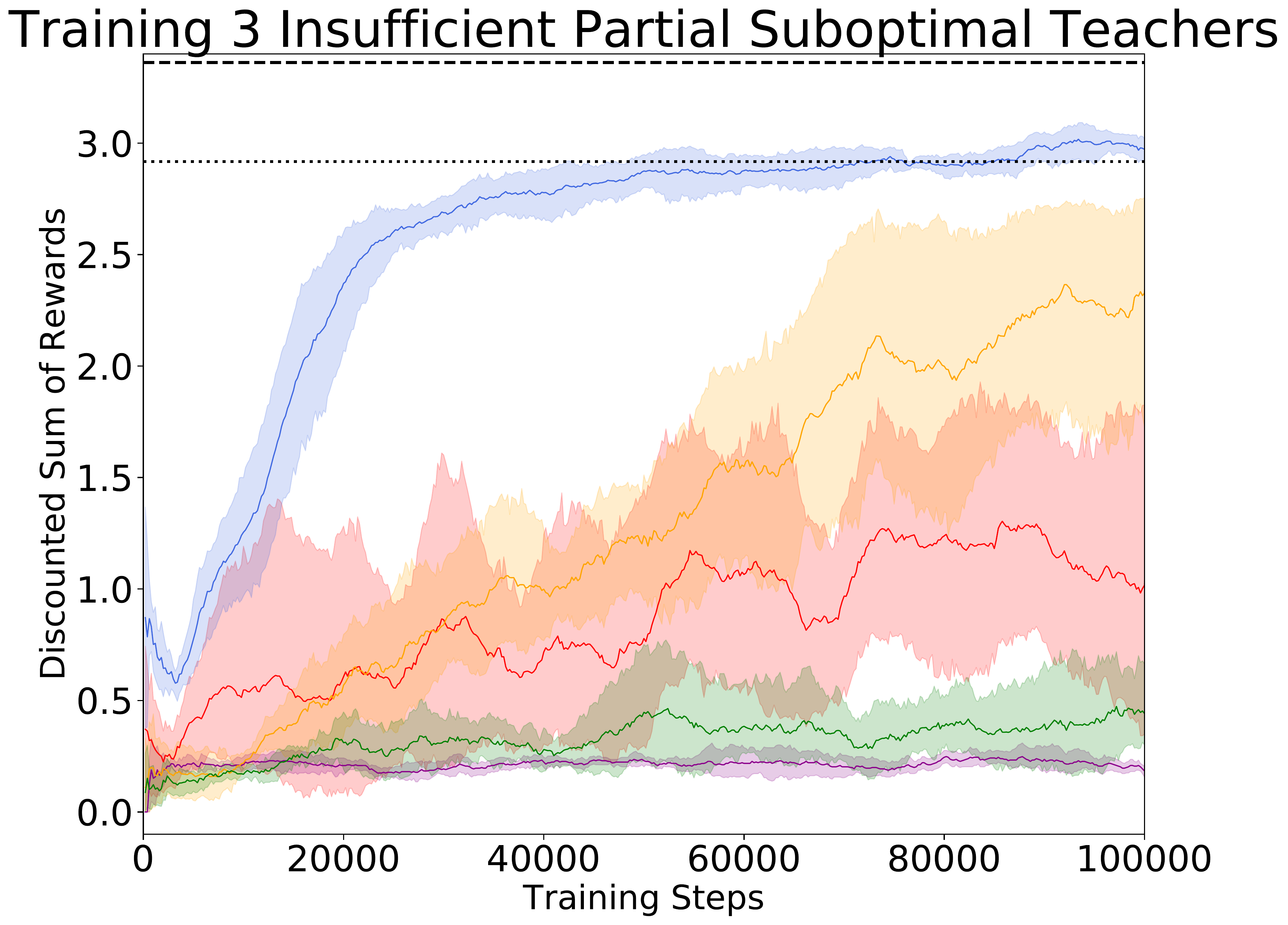}
    \end{subfigure}%
    \hfill
    \begin{subfigure}[b]{0.245\textwidth}
        \centering
        \includegraphics[width=\textwidth]{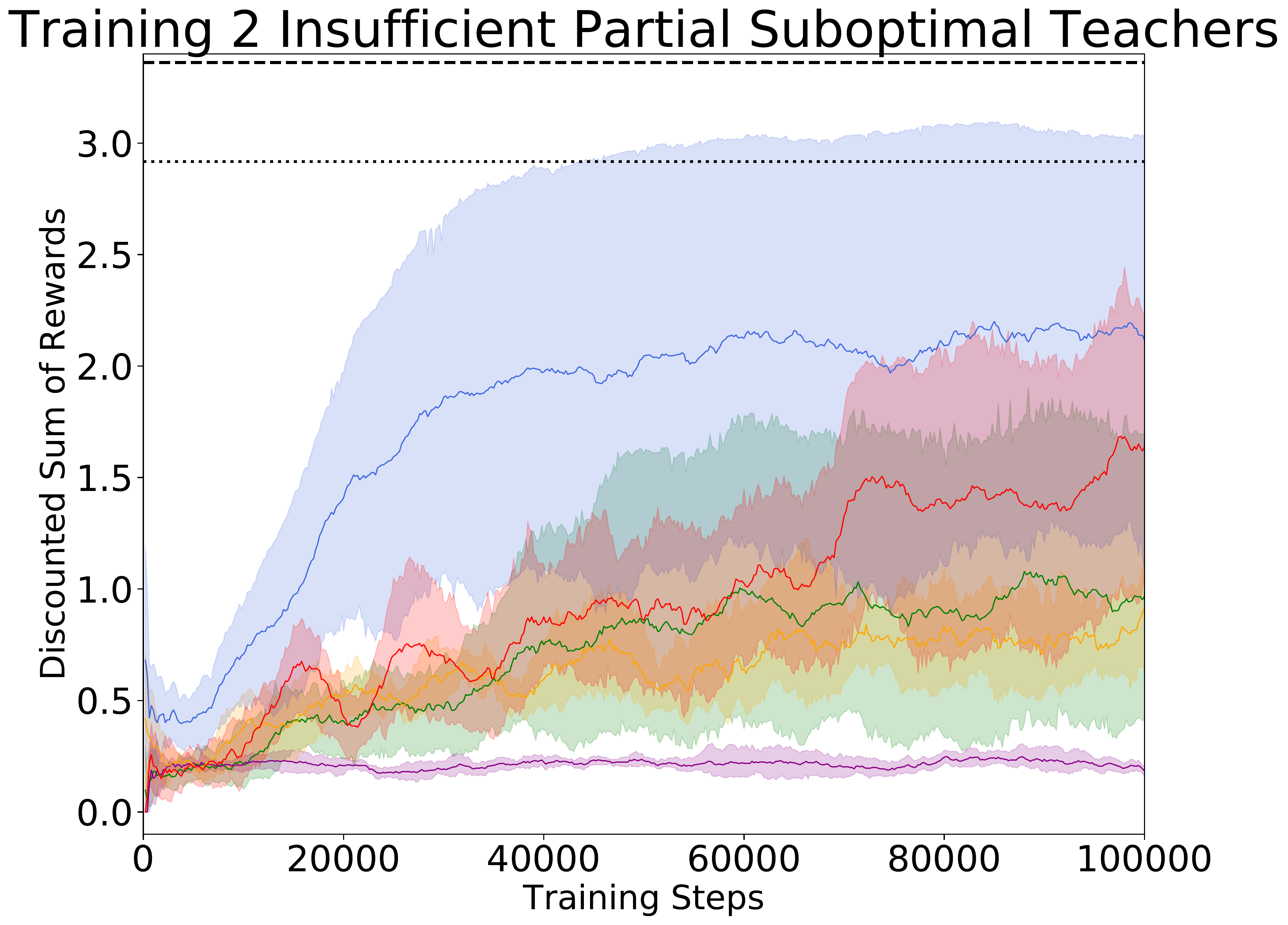}
    \end{subfigure}
    \hfill
    \begin{subfigure}[b]{0.245\textwidth}
        \centering
        \includegraphics[width=\textwidth]{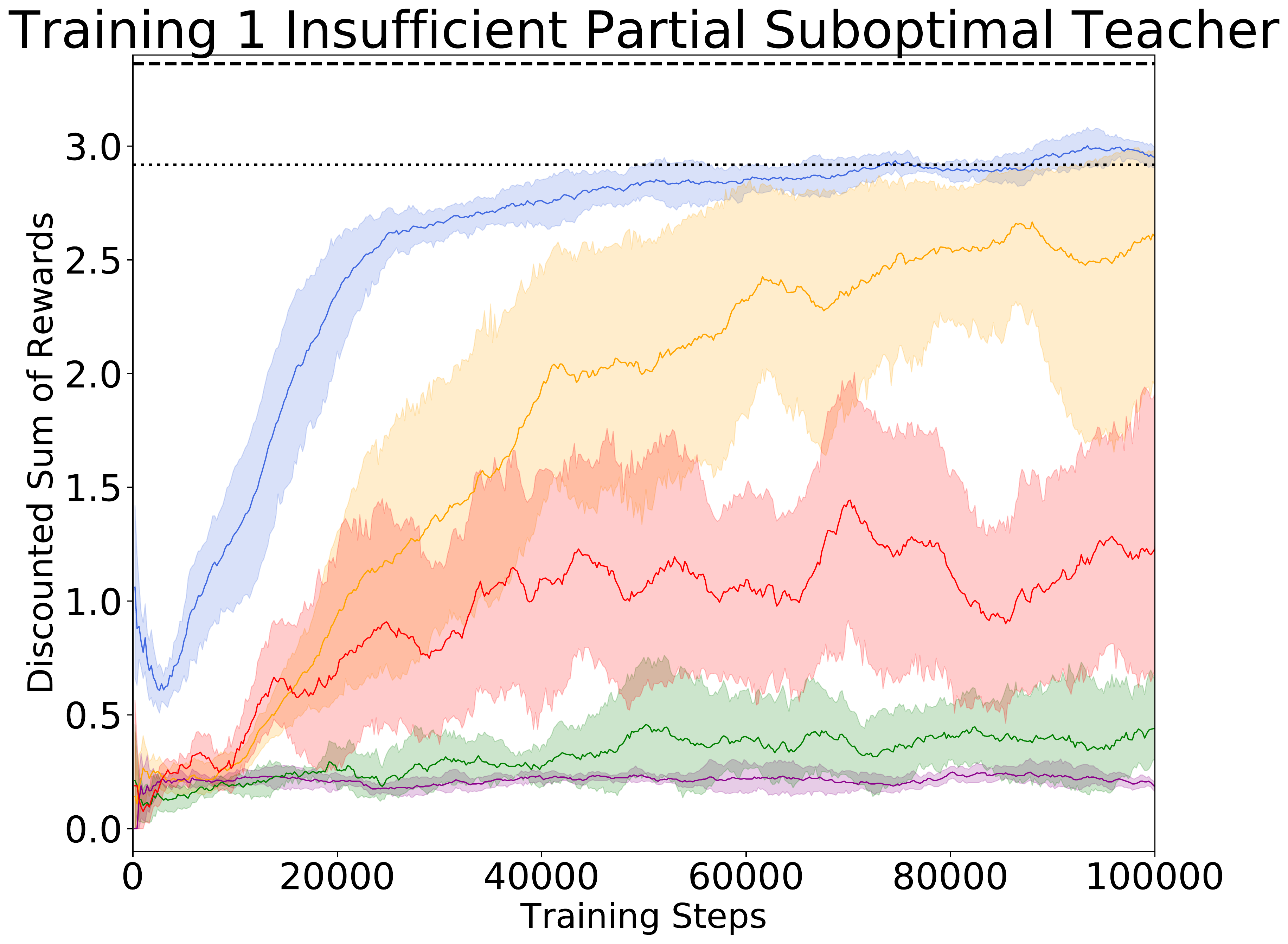}
    \end{subfigure}
    \hfill
    \begin{subfigure}[b]{0.245\textwidth}
        \centering
        \includegraphics[width=\textwidth]{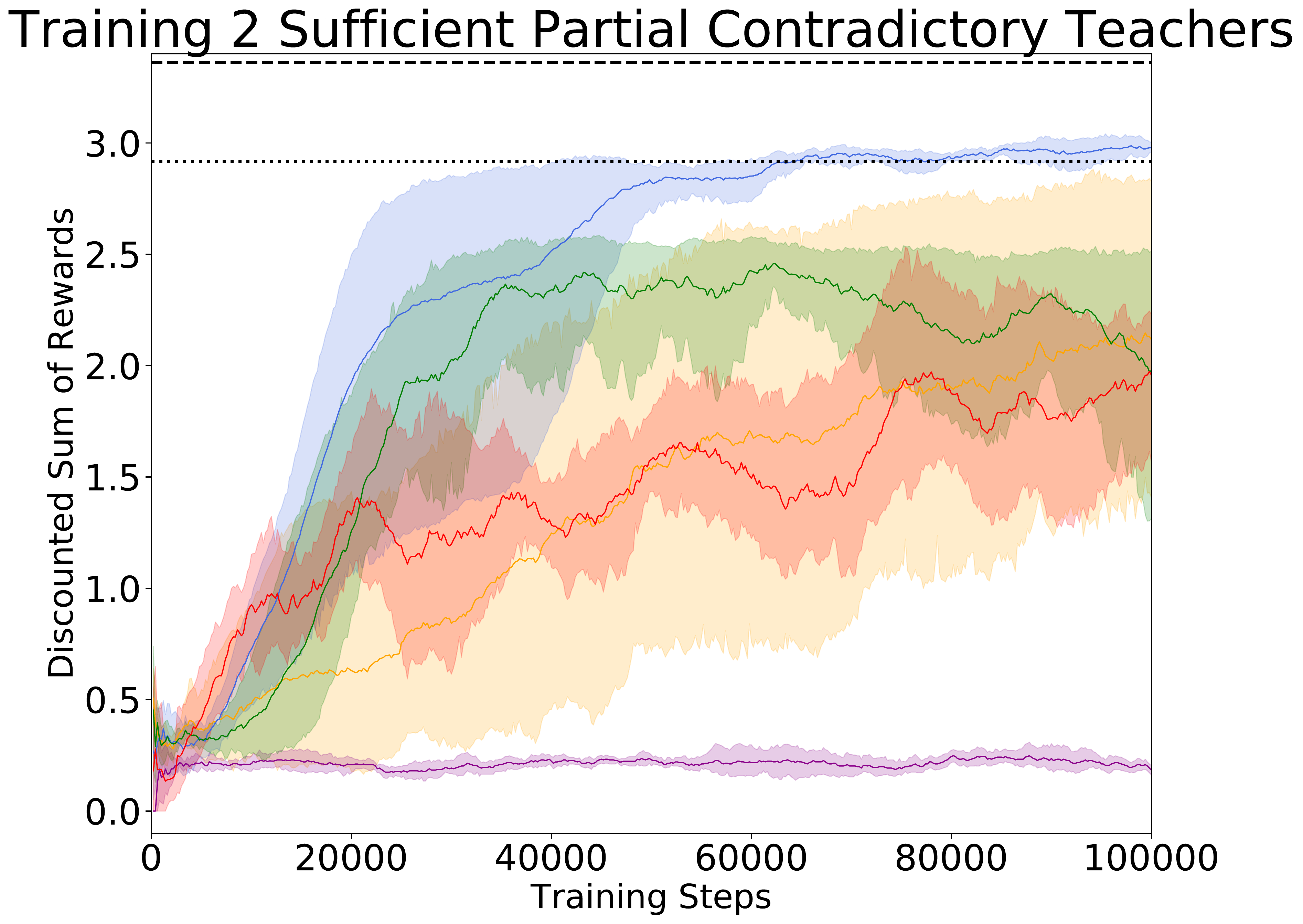}
    \end{subfigure}
   \caption{\textbf{Evaluating~\methodname on \textit{insufficient} and \textit{contradictory} teacher sets.}  The first three sets are incomplete since they are missing teachers for 1, 2, and 3 goals respectively, while the last set is \textit{contradictory} because the midpoint teacher and endpoint teacher try to move the agent to different parts of the state space. We see that our method significantly outperforms others in both convergence speed and asymptotic performance, indicating that it is capable of learning parts of the task where no teacher offers useful advice, as well as learning when to leverage each teacher's advice, even when the advice is \textit{contradictory}.}
   \label{fig:path_b}
\end{figure}

Teacher set A consists of 3 \textit{suboptimal}, \textit{partial} teachers, one that can navigate to each goal. 
Notice that one of the 4 goals will not have a corresponding teacher. 
Similarly, teacher sets B and C consist of 2 \textit{suboptimal}, \textit{partial} teachers and 1 \textit{suboptimal}, \textit{partial} teacher respectively. 
Finally, teacher set D consists of a \textit{midpoint} teacher and a \textit{endpoint} teacher that can be \textit{contradictory}. 

Notice that sets A, B, and C all test for the ability to learn from \textit{insufficient} sets, with set C being the most difficult as the teacher only helps with a quarter of the task. 
Meanwhile, set D tests for the ability to learn from a set of teachers that offers \textit{contradictory} advice, as the midpoint teacher always tries to move to the midpoint between the current and previous goal and the endpoint teacher will move away from the midpoint, towards either one goal or the other, depending on which is closer. 

Fig.~\ref{fig:path_b} demonstrates that our algorithm is consistently able to leverage the incomplete sets of teachers to learn the task - whether learning from 3, 2, or even just 1 of the single-goal teachers. The agent quickly learns where to trust the teachers' expertise and how to act in parts of the state space where teachers do not offer useful advice. 

\begin{figure}[!t]
   \centering
    \includegraphics[width=\textwidth]{figs/legend.png}
    \begin{subfigure}[b]{0.245\textwidth}
        \centering
        \includegraphics[width=\textwidth]{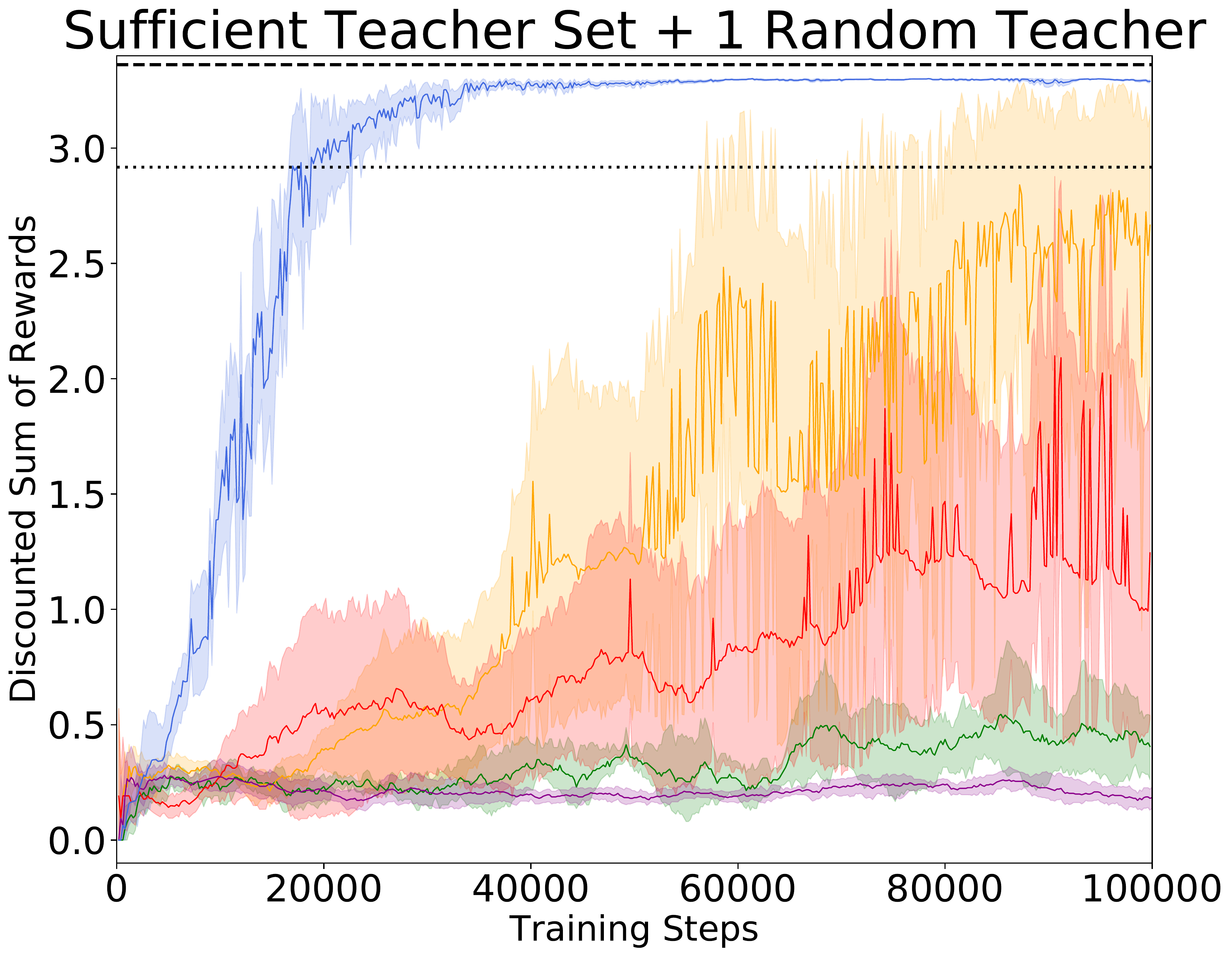}
    \end{subfigure}%
    \hfill
    \begin{subfigure}[b]{0.245\textwidth}
        \centering
        \includegraphics[width=\textwidth]{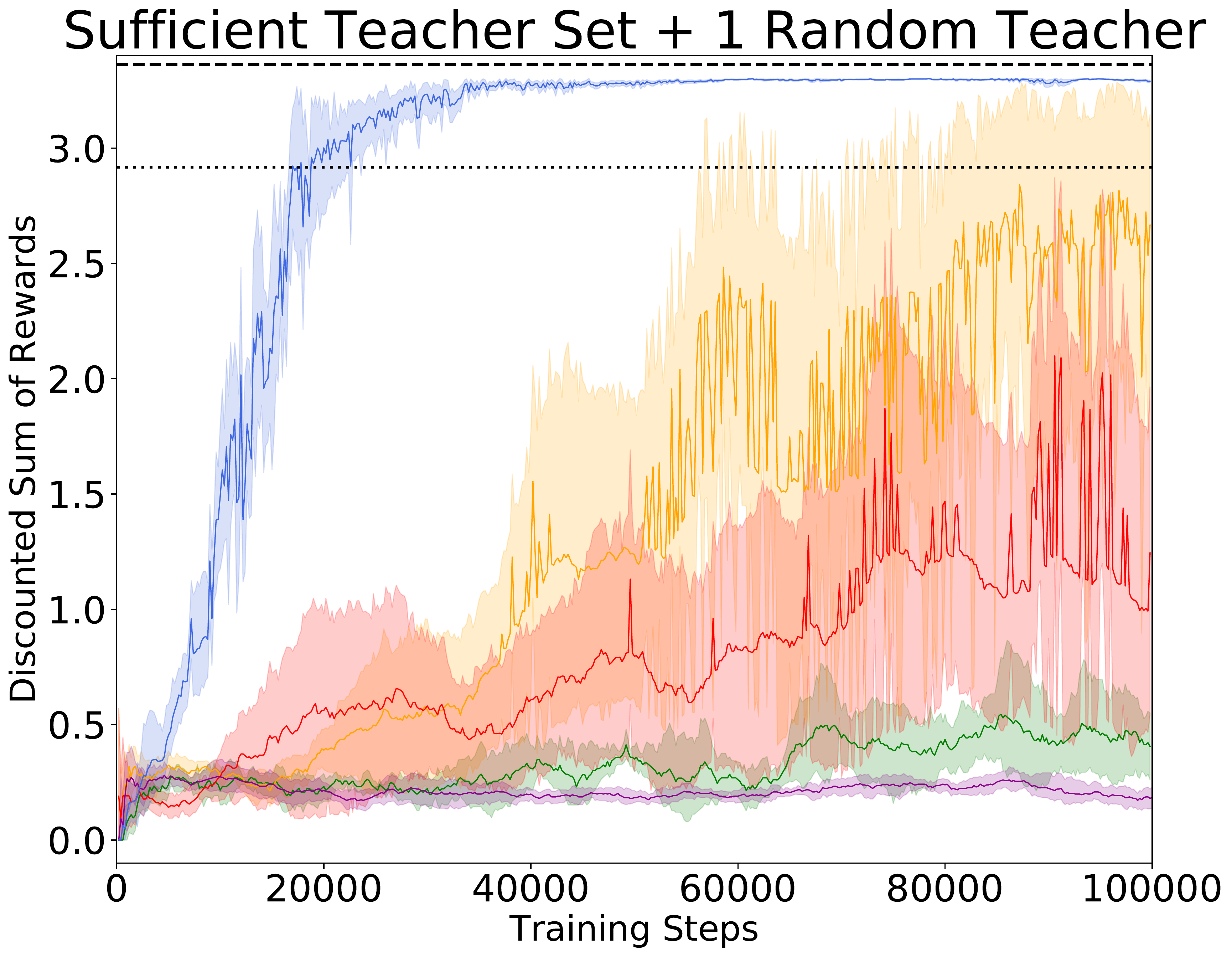}
    \end{subfigure}
    \hfill
    \begin{subfigure}[b]{0.245\textwidth}
        \centering
        \includegraphics[width=\textwidth]{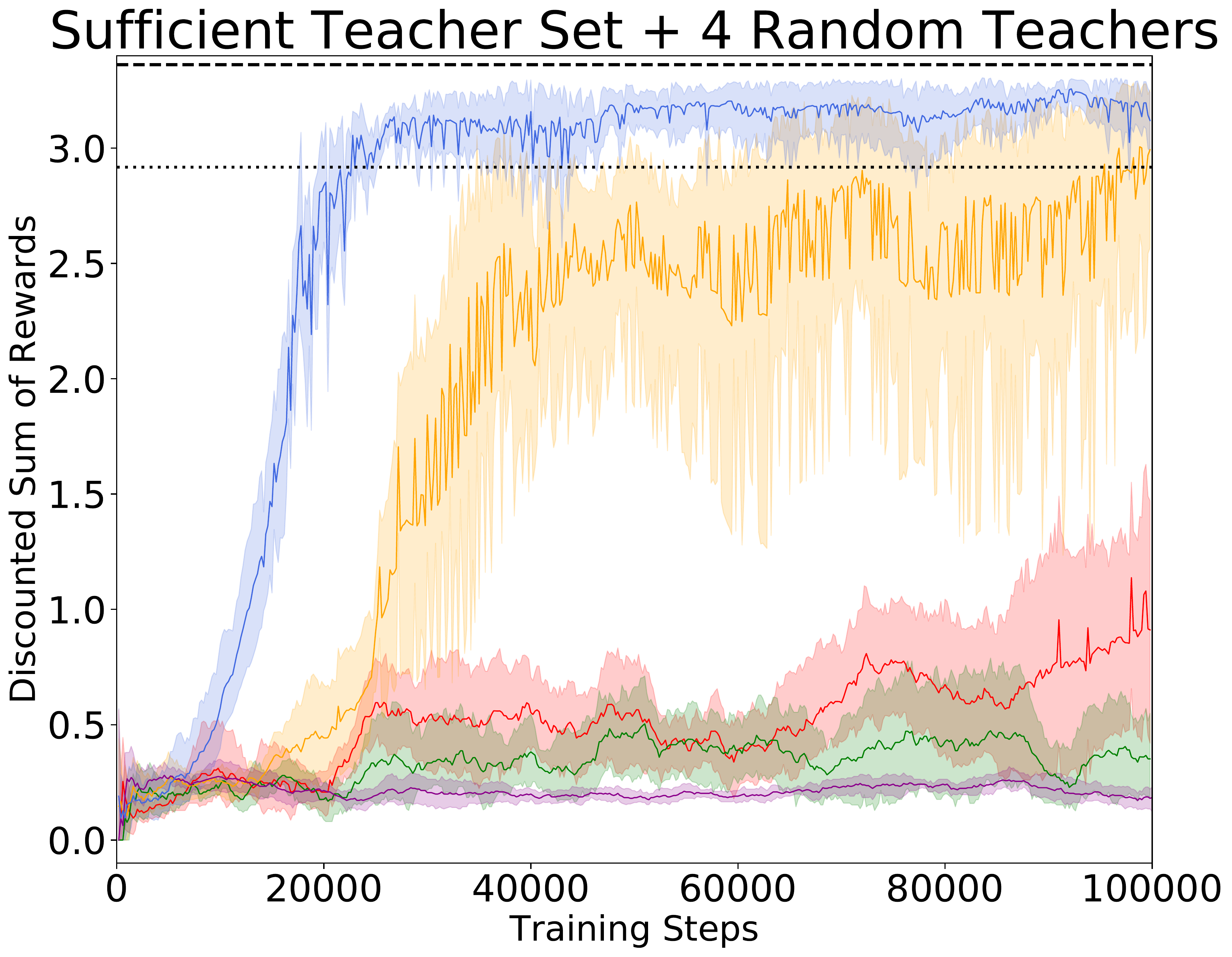}
    \end{subfigure}
    \hfill
    \begin{subfigure}[b]{0.245\textwidth}
        \centering
        \includegraphics[width=\textwidth]{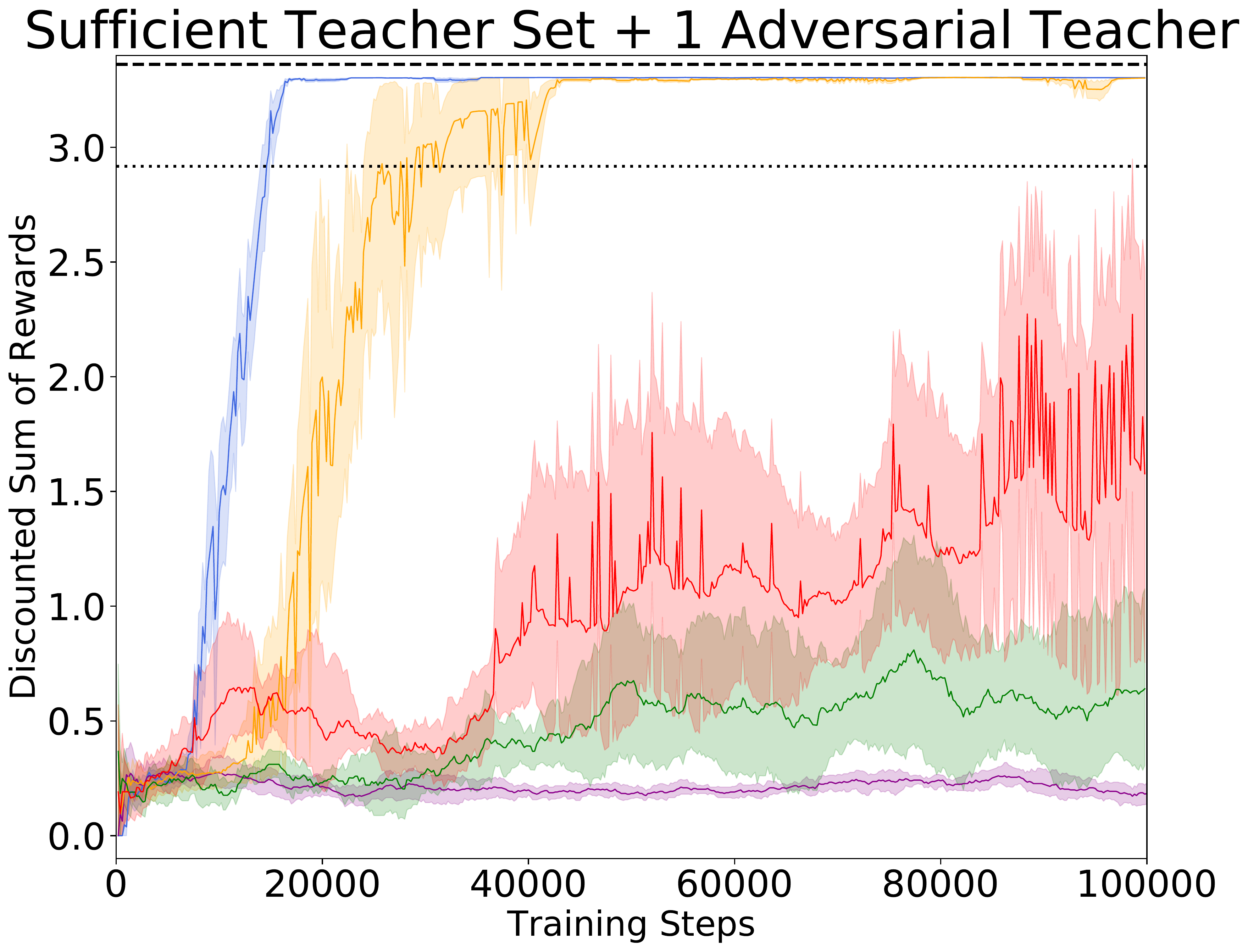}
    \end{subfigure}
    
    \begin{subfigure}[b]{0.245\textwidth}
        \centering
        \includegraphics[width=\textwidth]{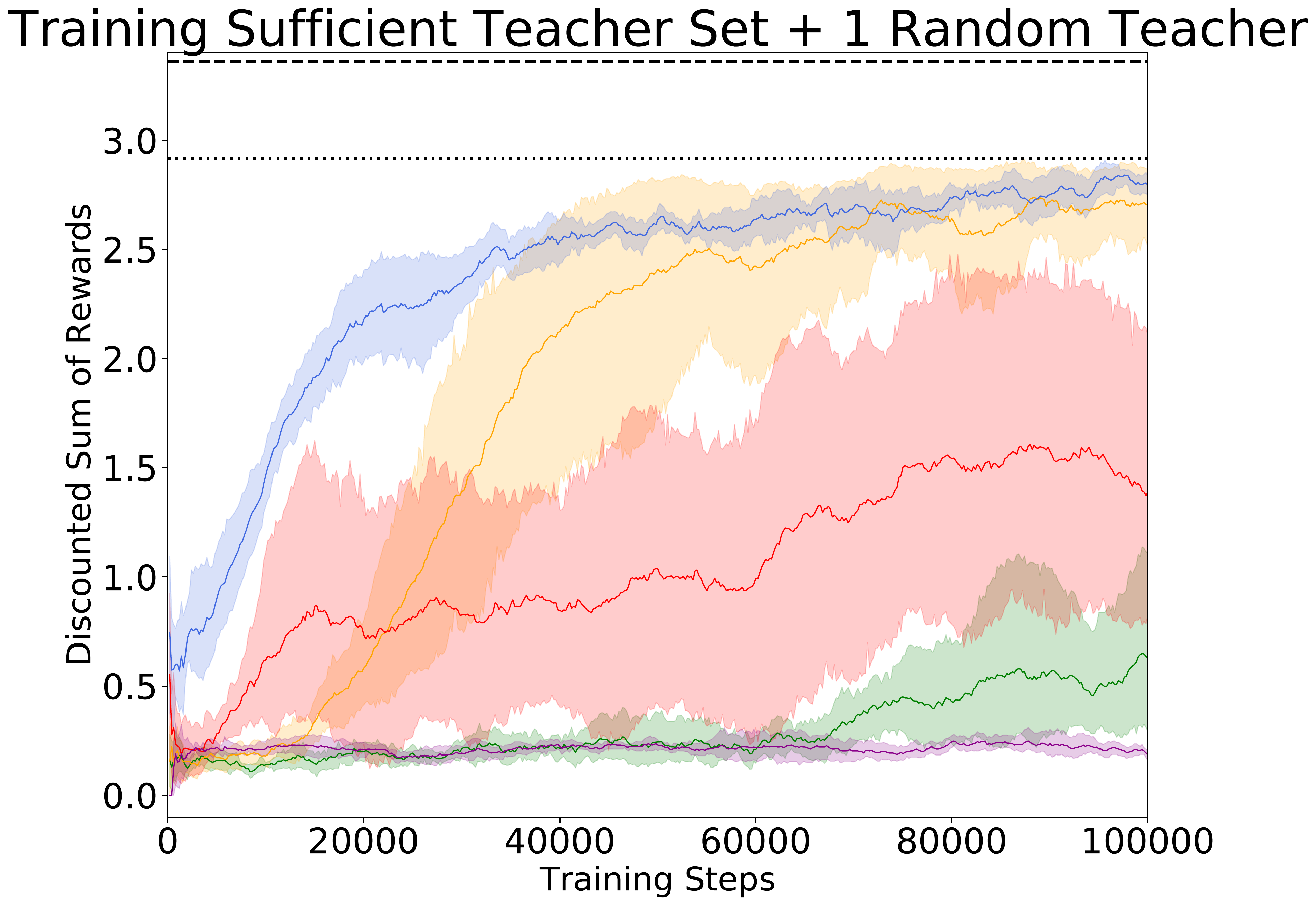}
    \end{subfigure}%
    \hfill
    \begin{subfigure}[b]{0.245\textwidth}
        \centering
        \includegraphics[width=\textwidth]{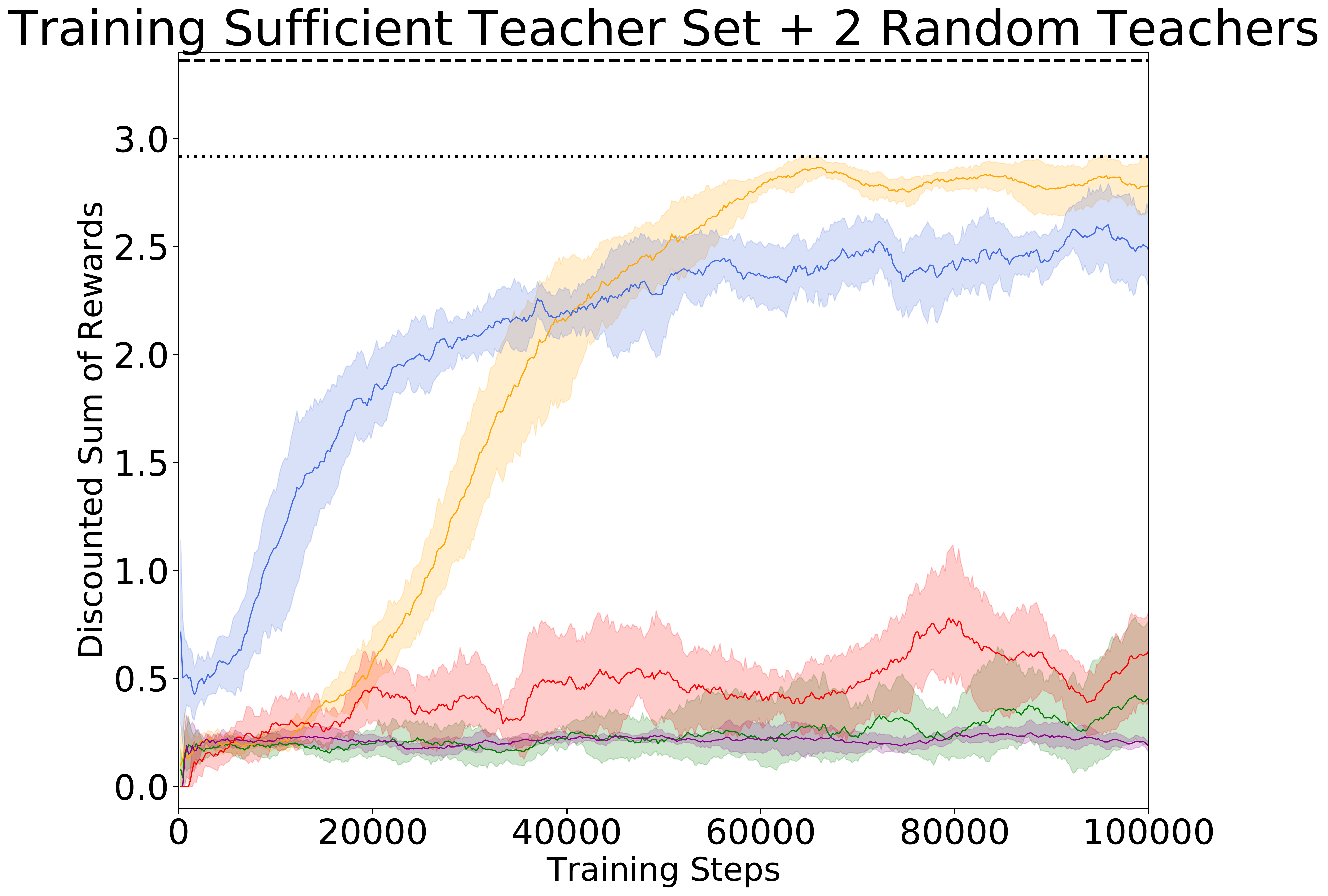}
    \end{subfigure}
    \hfill
    \begin{subfigure}[b]{0.245\textwidth}
        \centering
        \includegraphics[width=\textwidth]{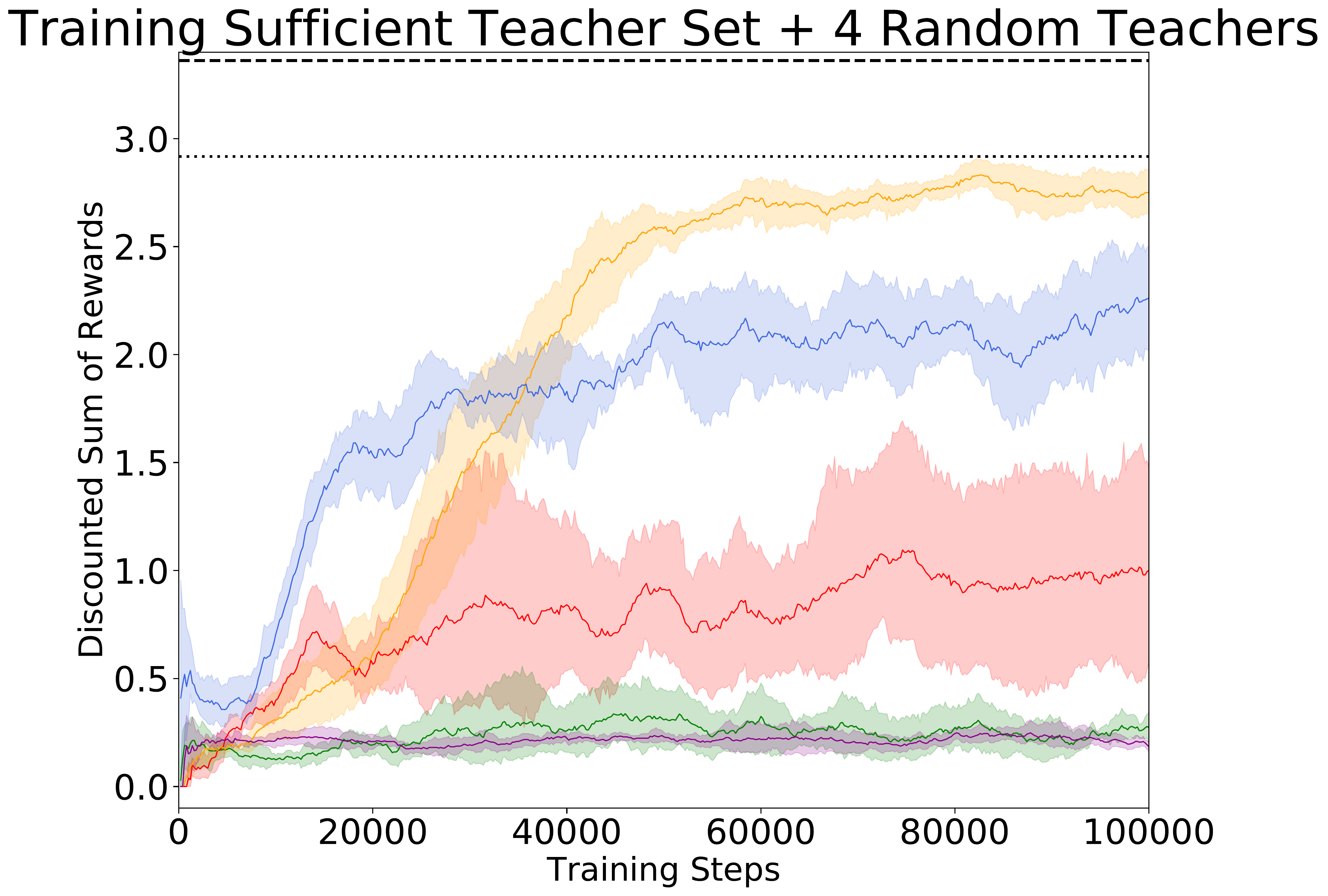}
    \end{subfigure}
    \hfill
    \begin{subfigure}[b]{0.245\textwidth}
        \centering
        \includegraphics[width=\textwidth]{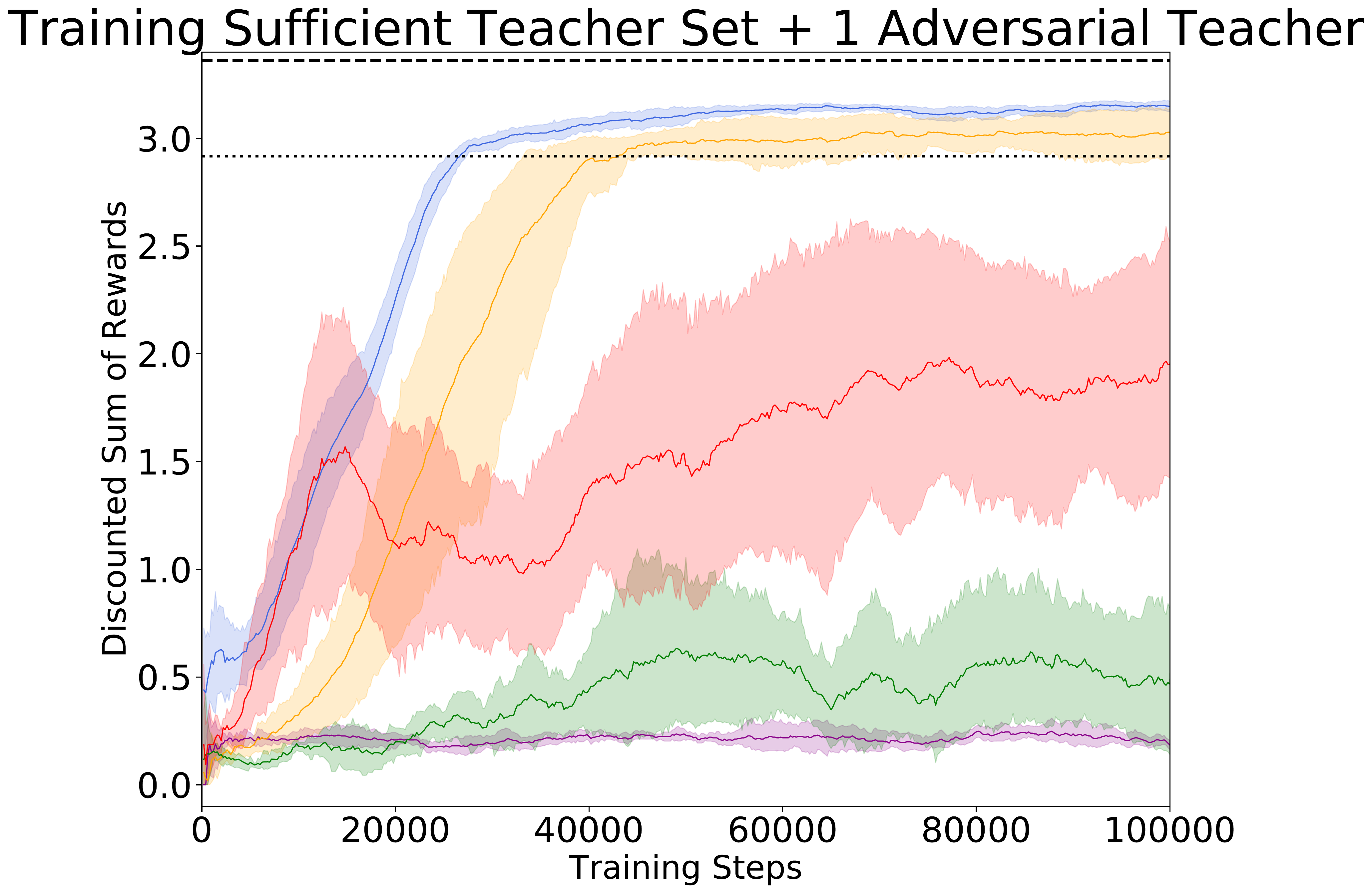}
    \end{subfigure}
   \caption{\textbf{Evaluating~\methodname on large and \textit{adversarial} teacher sets.} Test and train time agent performance on the \texttt{Path Following} task for~\methodname and other baselines for the following teacher sets (from left to right): 4 noisy \textit{partial} teachers + 1 random teacher (set E), 4 noisy \textit{partial} teachers + 2 random teachers (set F), 4 noisy \textit{partial} teachers + 3 random teachers (set G), and 1 \textit{sufficient} teacher + 1 \textit{adversarial} teacher (set H). Our method significantly outperforms the baselines in both convergence speed and asymptotic performance, indicating robustness to large sets of teachers where some teachers are random, or even \textit{adversarial}.}
   \label{fig:path_c}
\end{figure}

\begin{figure}[!h]
   \centering
    \begin{subfigure}[b]{0.325\textwidth}
        \centering
        \includegraphics[width=\textwidth]{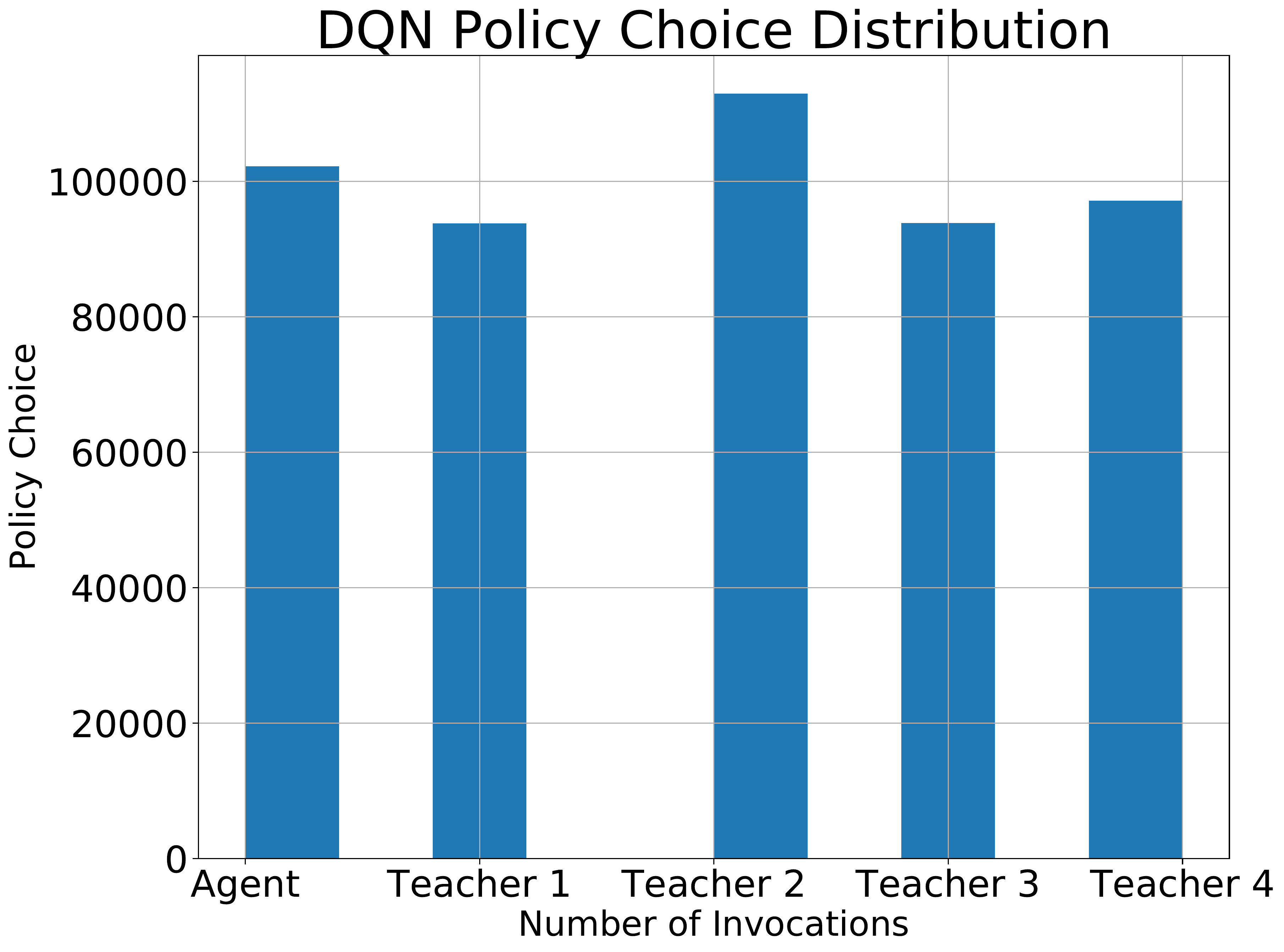}
    \end{subfigure}%
    \hfill
    \begin{subfigure}[b]{0.325\textwidth}
        \centering
        \includegraphics[width=\textwidth]{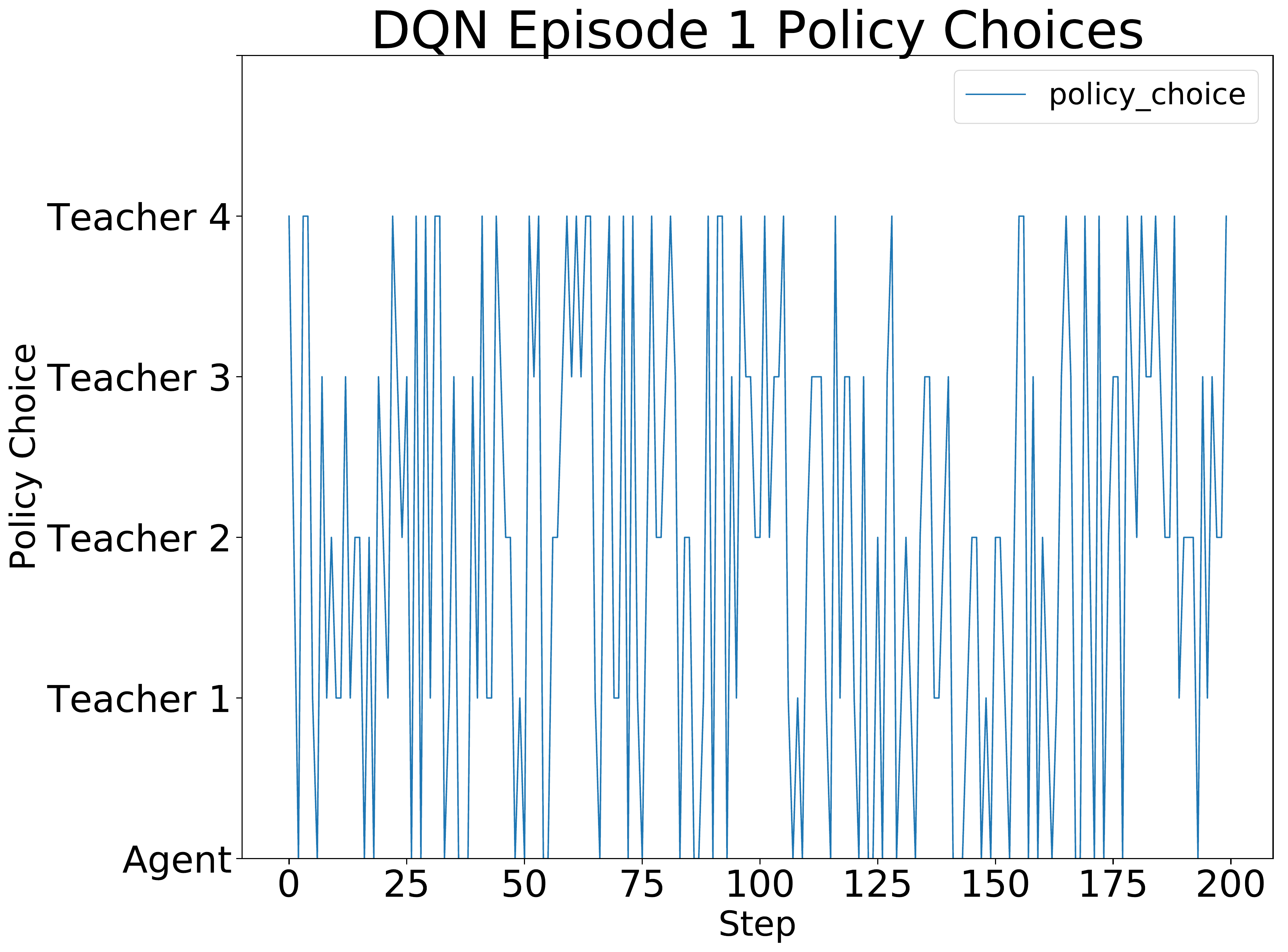}
    \end{subfigure}
    \hfill
    \begin{subfigure}[b]{0.325\textwidth}
        \centering
        \includegraphics[width=\textwidth]{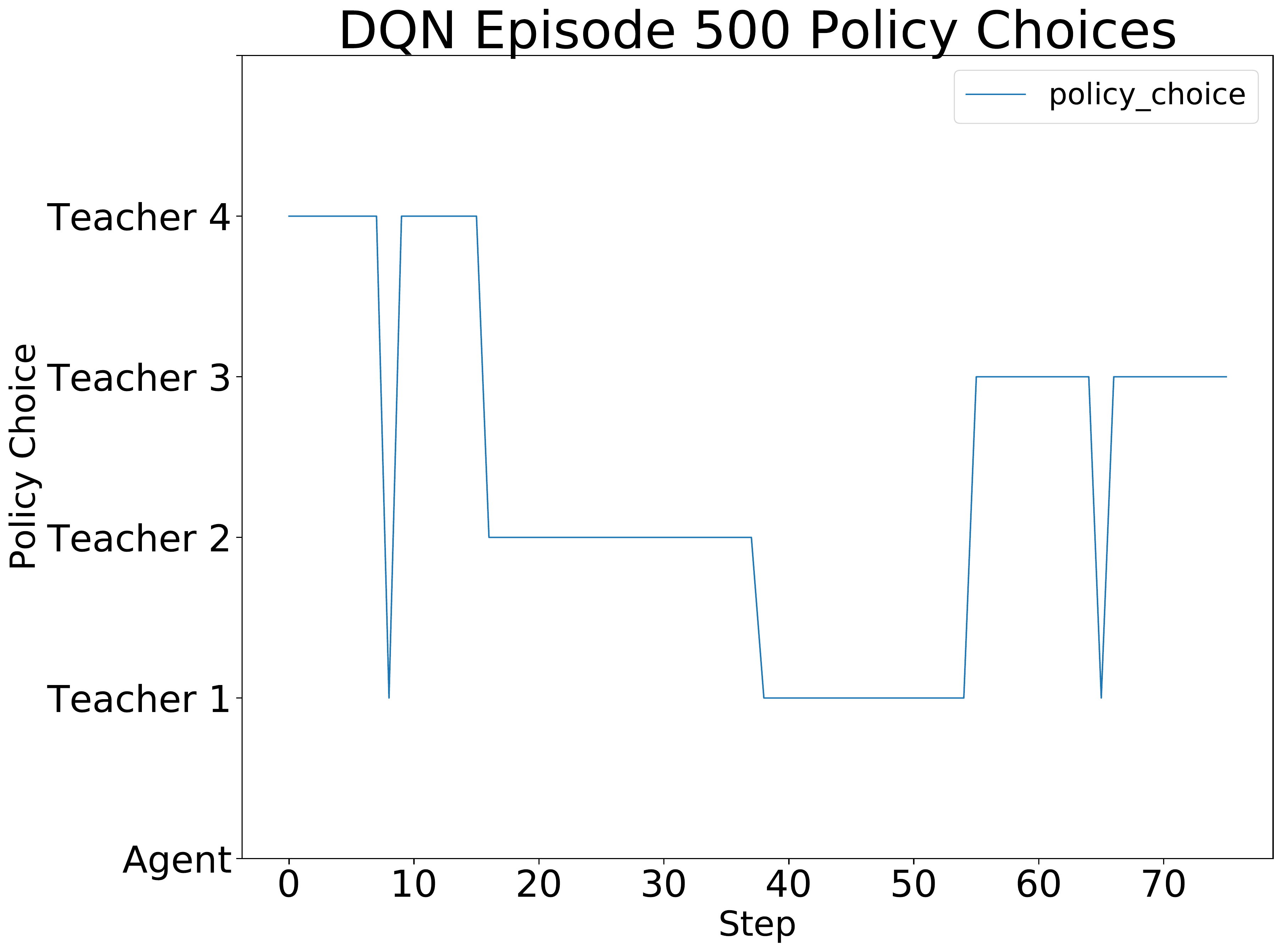}
    \end{subfigure}
    \hfill
    
    \begin{subfigure}[b]{0.325\textwidth}
        \centering
        \includegraphics[width=\textwidth]{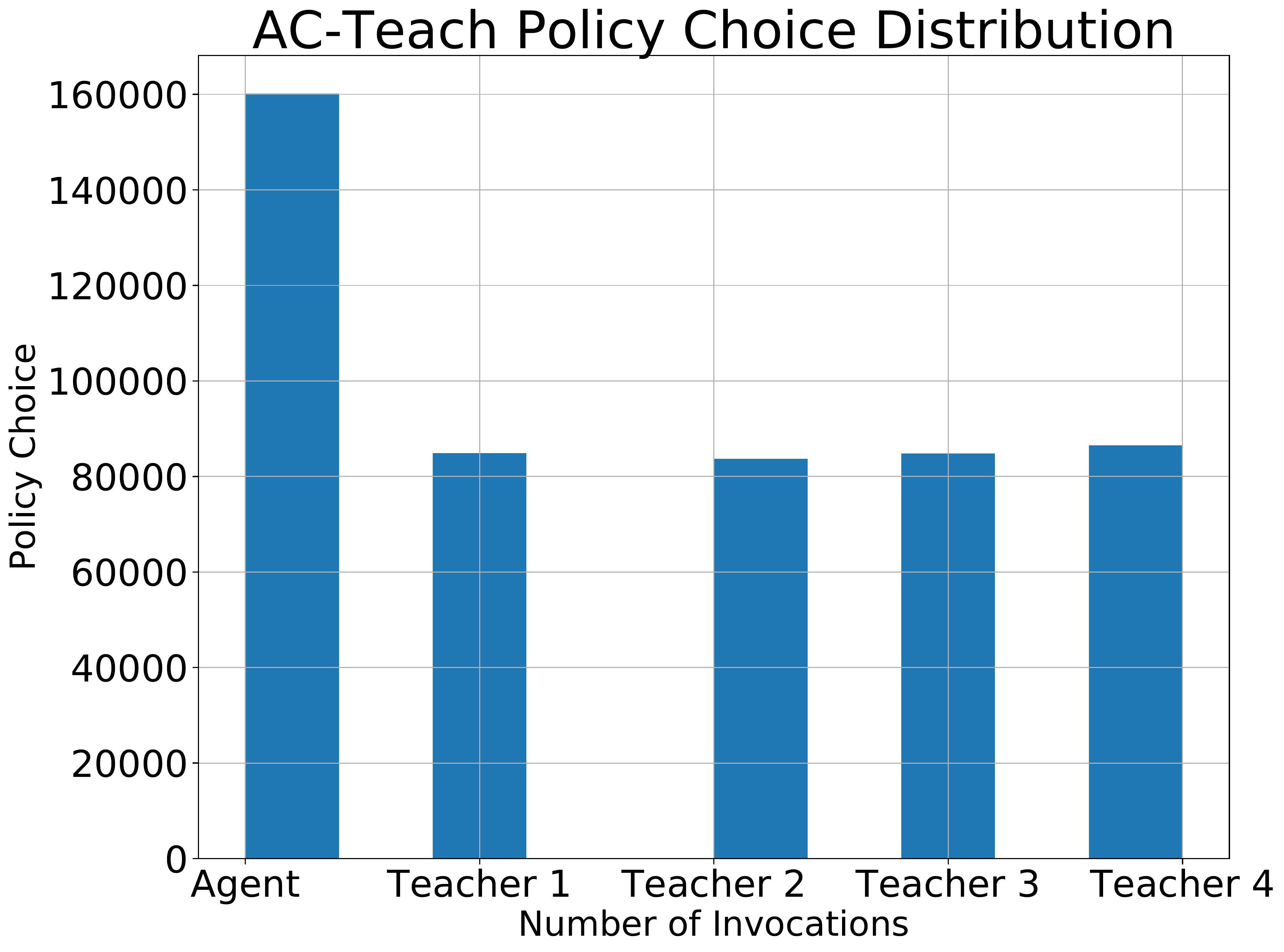}
    \end{subfigure}%
    \hfill
    \begin{subfigure}[b]{0.325\textwidth}
        \centering
        \includegraphics[width=\textwidth]{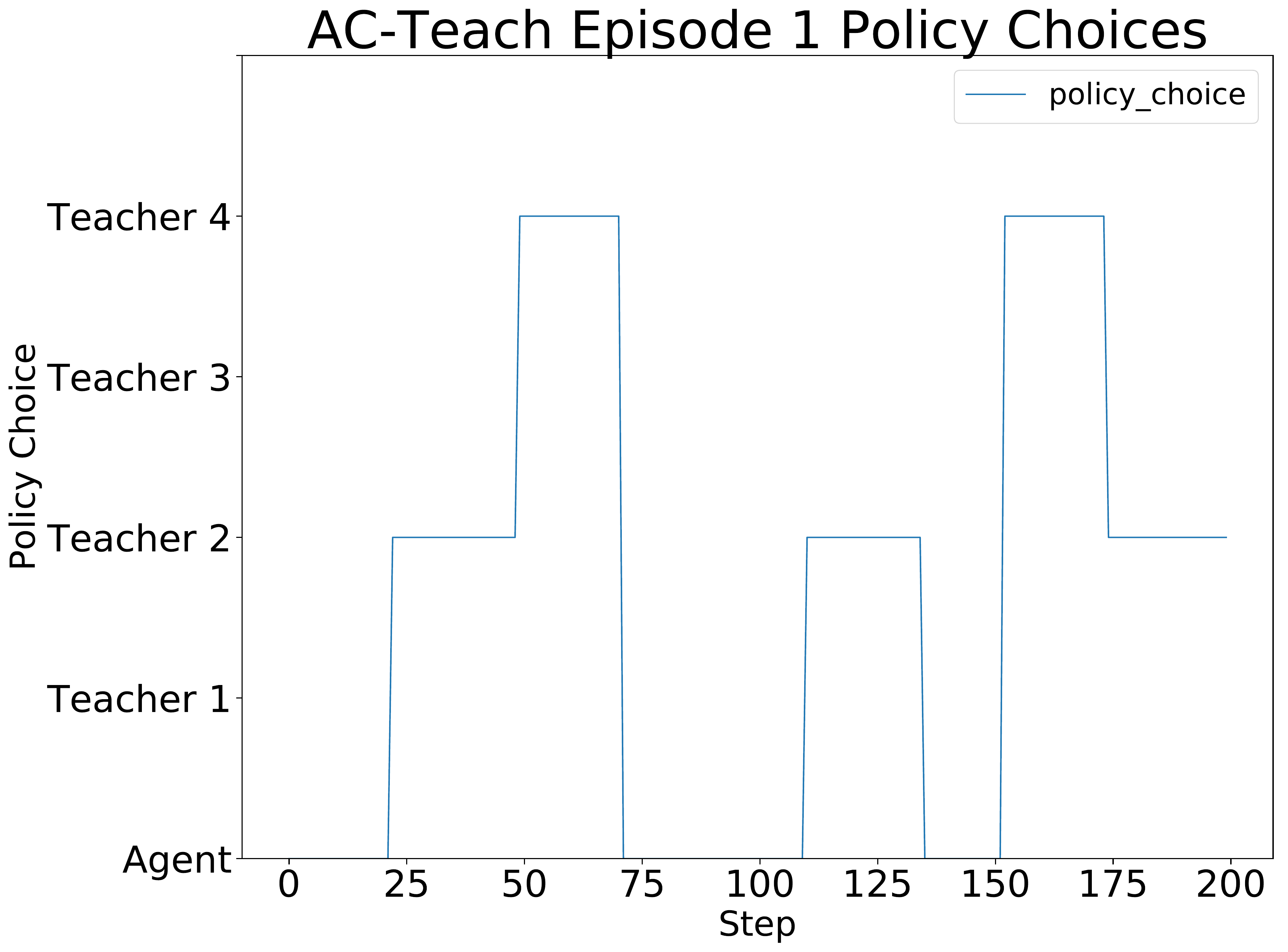}
    \end{subfigure}
    \hfill
    \begin{subfigure}[b]{0.325\textwidth}
        \centering
        \includegraphics[width=\textwidth]{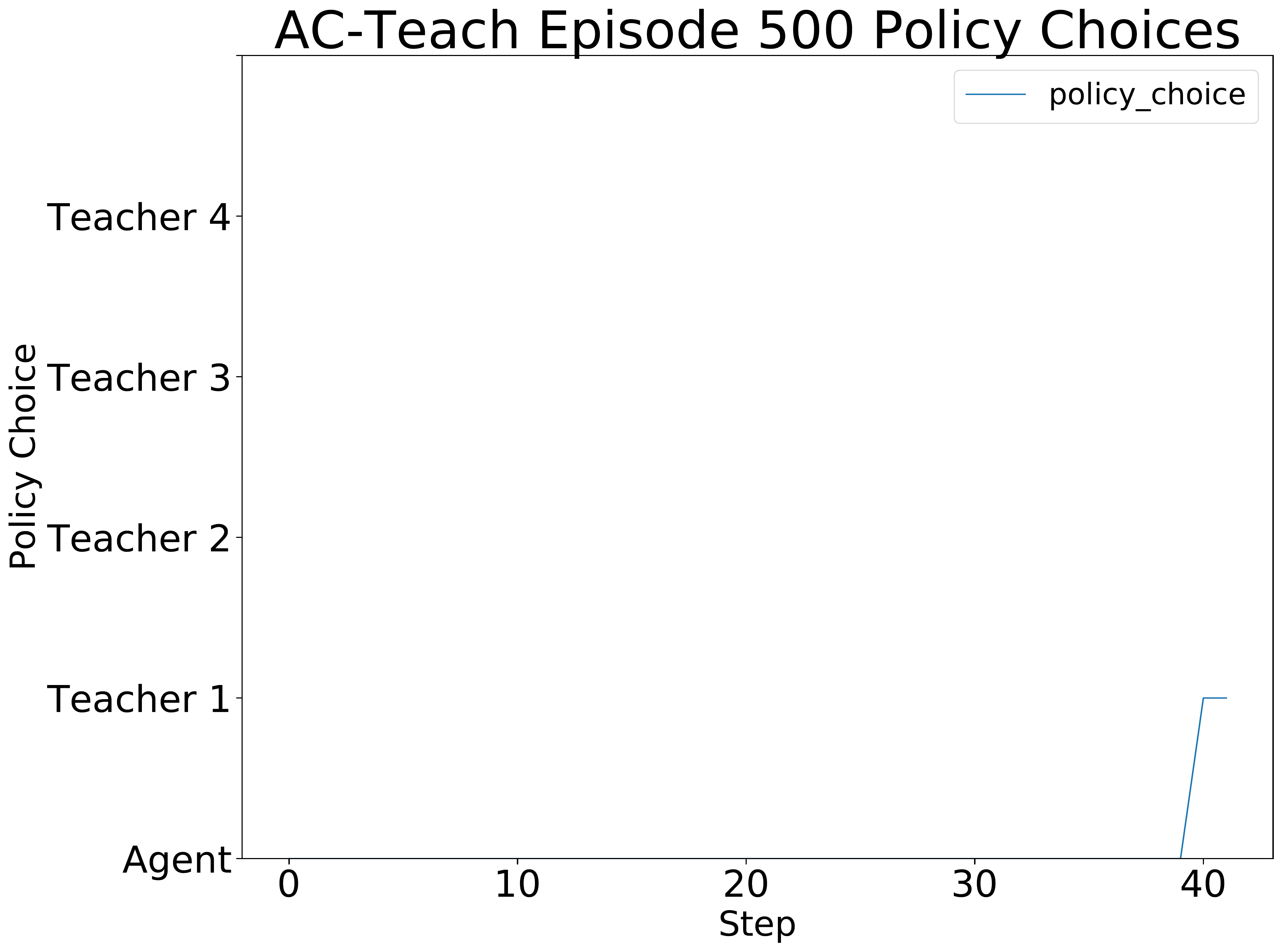}
    \end{subfigure}
    \hfill
   \caption{\textbf{Comparing policy choices of the~\methodname behavioural policy with the DQN behavioural policy.} Results from the \texttt{Path Following} task with the sufficient noisy teacher set. The first column demonstrates that over the course of training~\methodname leverages the agent more than the DQN. The second column demonstrates that our commitment mechanism results in less switching between different policies than the baselines. The third column demonstrates that at the end of training the DQN learns to only leverage the teachers, whereas our method learns to use the agent.}
   \label{fig:path_d}
\end{figure}

Additionally, we construct four teacher sets to evaluate the tolerance of our algorithm to teacher sets that are larger and lower quality. Teacher sets E, F, and G all consist of four \textit{suboptimal} \textit{partial} teachers, but they additionally include 1, 2, and 4 additional random teachers respectively. Teacher set H consists of a \textit{suboptimal} \textit{sufficient} teacher and an \textit{adversarial} teacher that tries to move away from the current goal. 

Fig.~\ref{fig:path_c} shows the results of this experiment. The performance of our algorithm does not degrade in the presence of additional random teachers that do not help to accomplish the task (sets E, F, and G), while our algorithm quickly detects the \textit{adversarial} teacher, learns to ignore it and to query the advice of the \textit{sufficient} teacher in set H. 

We also present a breakdown of the policy choices made during training by the baseline DQN behavioural policy and the~\methodname behavioural policy in Fig.~\ref{fig:path_d}. The figure demonstrates that~\methodname leverages the learning agent more over the course of training, switches less between policies at the start of training due to its commitment mechanism, and converges to use the learner and not the teachers at the end of training. 

\subsection{\texttt{Hook Sweep} Experiments}
\label{app:hook_sweep_exps}

We present experiments on the \texttt{Hook Sweep} environment in Fig.~\ref{fig:hook_a}. 

\begin{figure}[!h]
   \centering
    \includegraphics[width=\textwidth]{figs/legend.png}
    \begin{subfigure}[b]{0.245\textwidth}
        \centering
        \includegraphics[width=\textwidth]{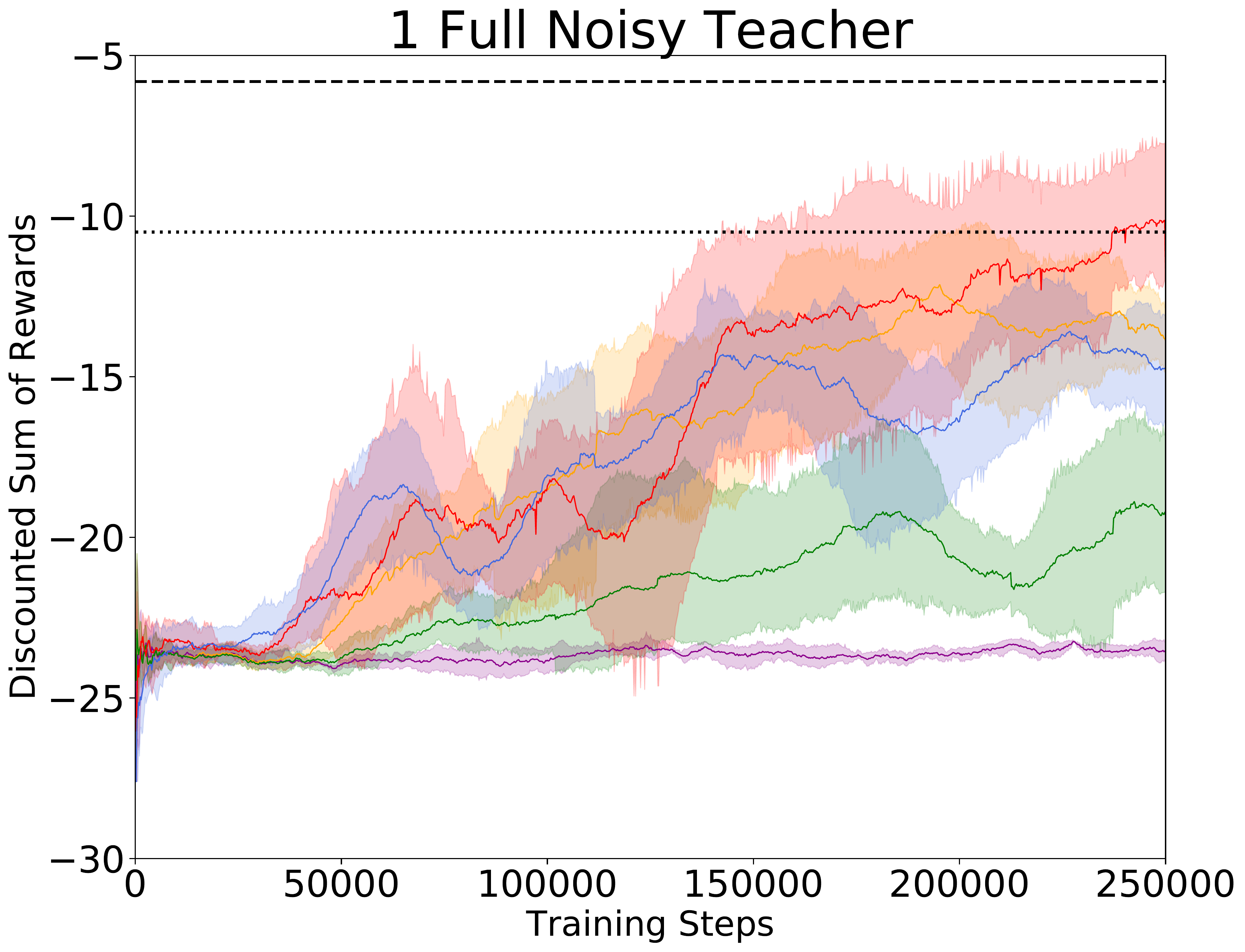}
    \end{subfigure}%
    \hfill
    \begin{subfigure}[b]{0.245\textwidth}
        \centering
        \includegraphics[width=\textwidth]{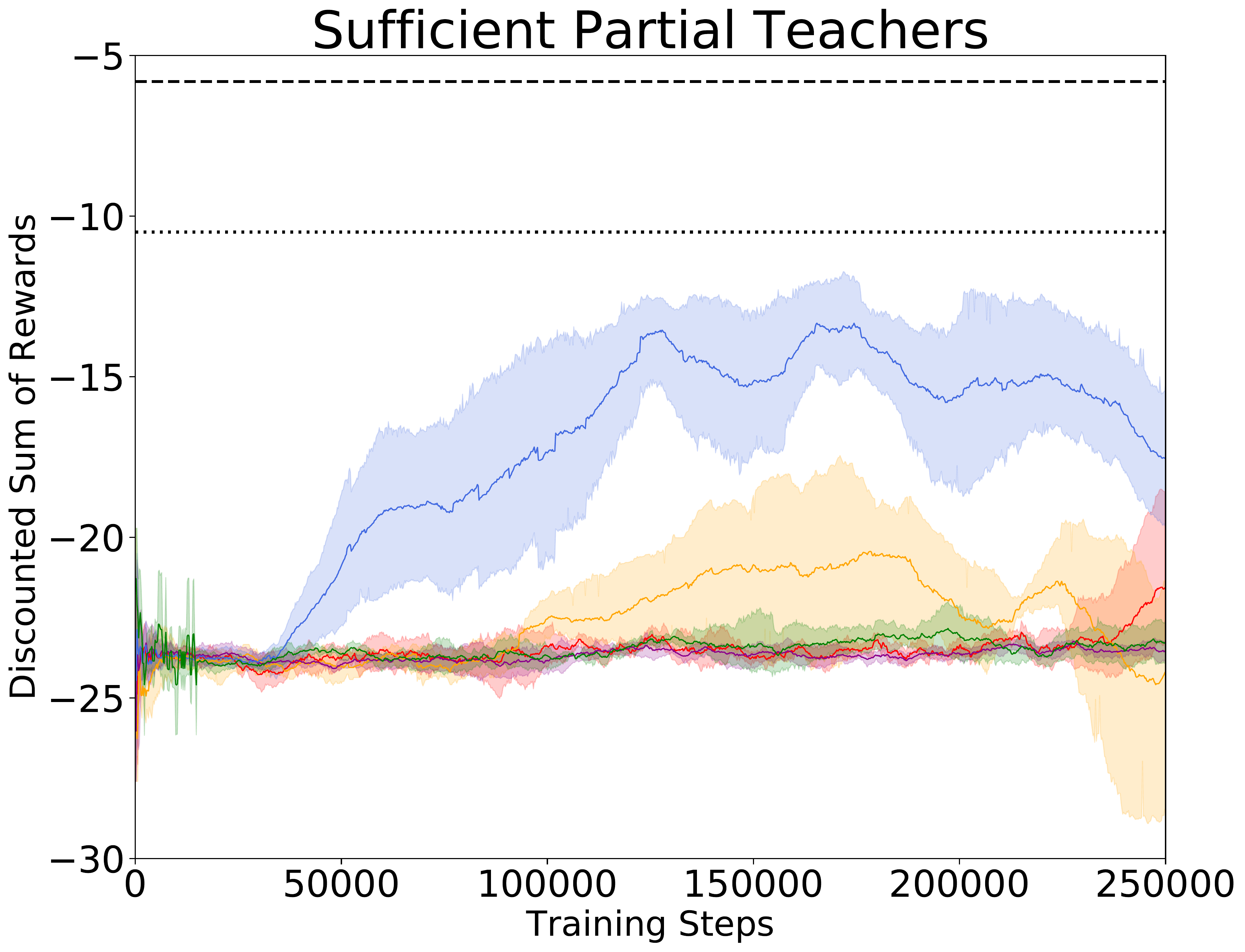}
    \end{subfigure}
    \hfill
    \begin{subfigure}[b]{0.245\textwidth}
        \centering
        \includegraphics[width=\textwidth]{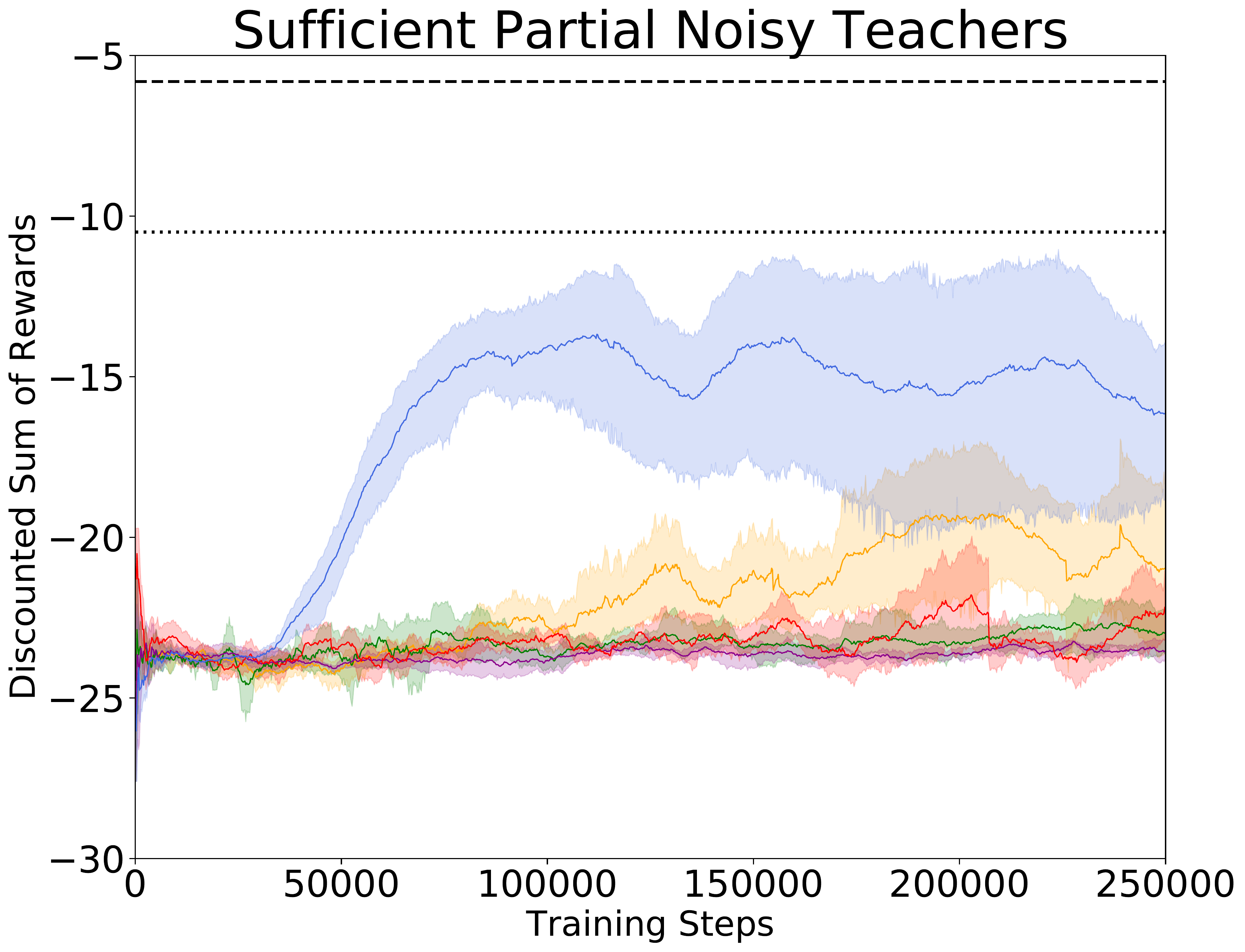}
    \end{subfigure}
    \hfill
    \begin{subfigure}[b]{0.245\textwidth}
        \centering
        \includegraphics[width=\textwidth]{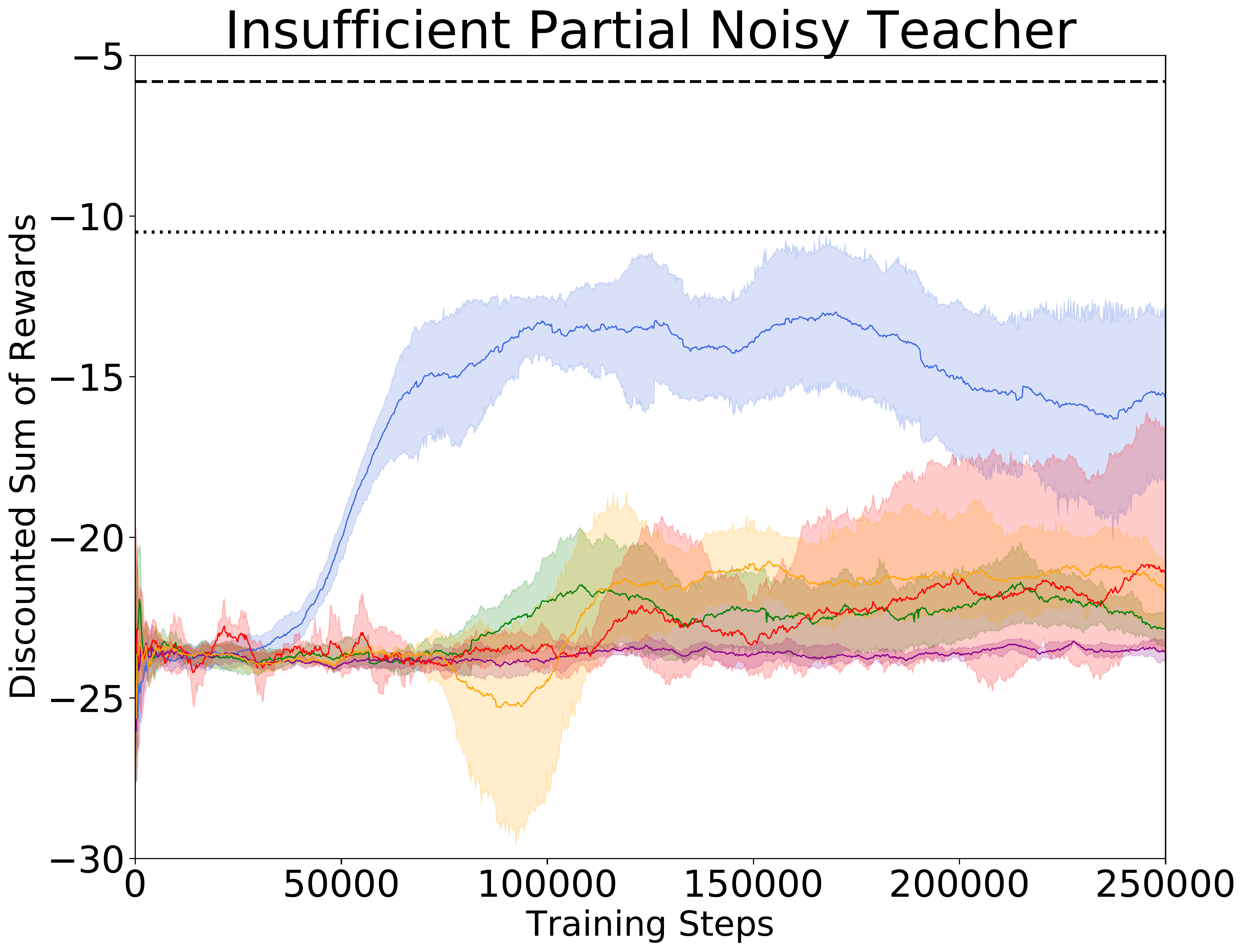}
    \end{subfigure}
    \begin{subfigure}[b]{0.245\textwidth}
        \centering
        \includegraphics[width=\textwidth]{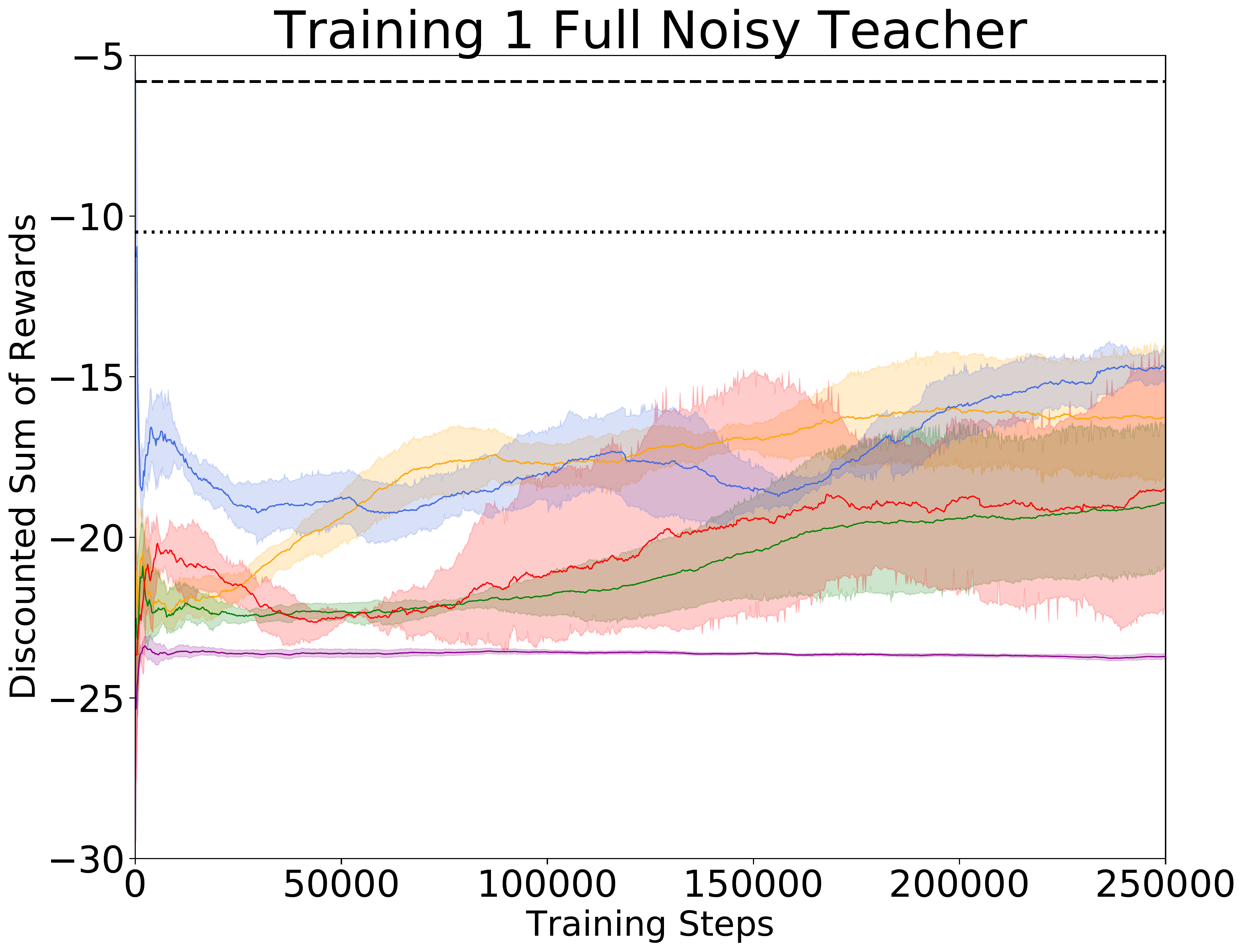}
    \end{subfigure}%
    \hfill
    \begin{subfigure}[b]{0.245\textwidth}
        \centering
        \includegraphics[width=\textwidth]{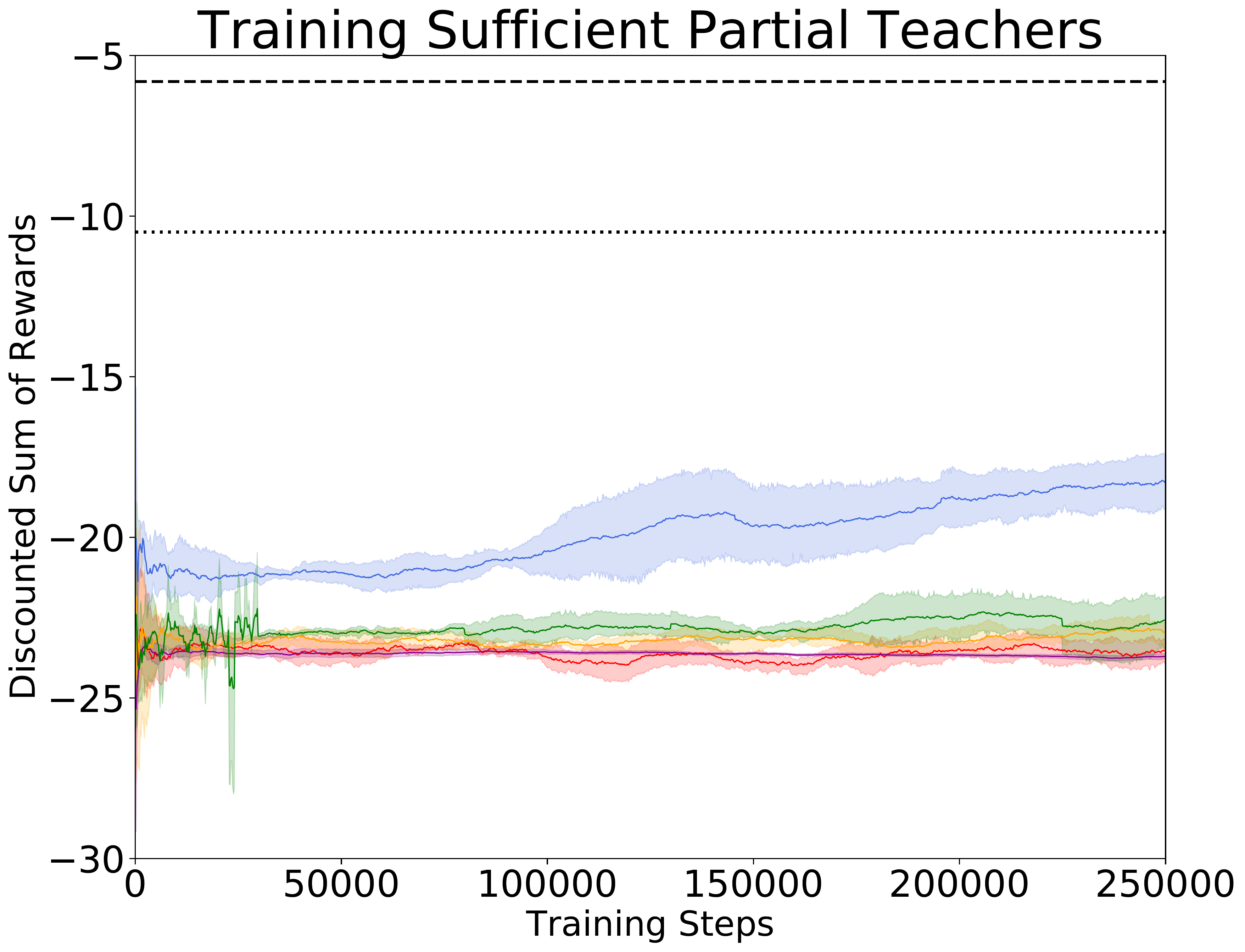}
    \end{subfigure}
    \hfill
    \begin{subfigure}[b]{0.245\textwidth}
        \centering
        \includegraphics[width=\textwidth]{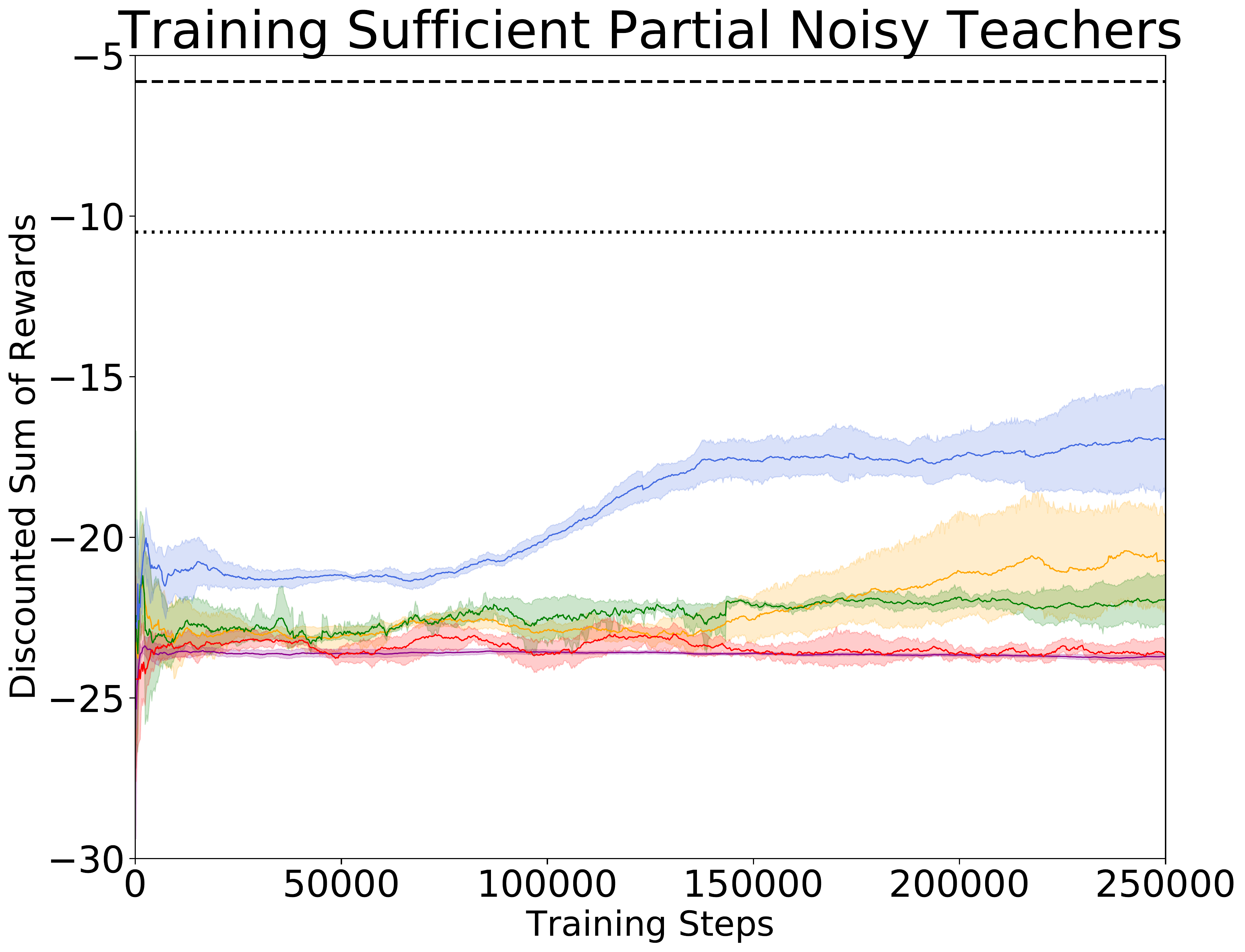}
    \end{subfigure}
    \hfill
    \begin{subfigure}[b]{0.245\textwidth}
        \centering
        \includegraphics[width=\textwidth]{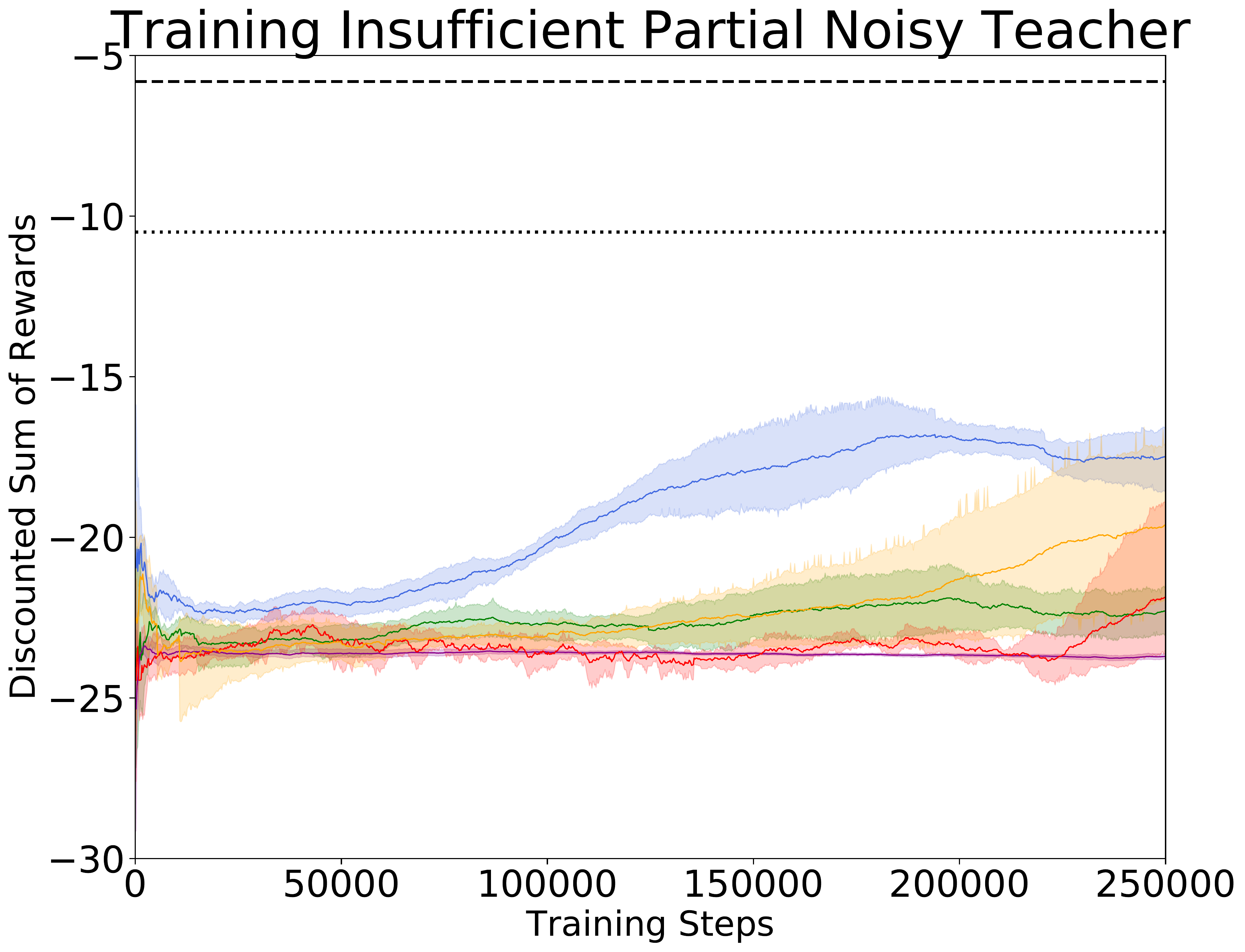}
    \end{subfigure}
   \caption{\textbf{Evaluating~\methodname on different teacher sets on the \texttt{Hook Sweep} task.} As with the prior tasks, we varied the types of teachers we trained the agent for the hook sweep task with. As in the prior tasks, our algorithm is demonstrated to be the most robust.}
   \label{fig:hook_a}
\end{figure}

\subsection{Ablation Analysis}
\label{app:ablation}
We test several modified versions of \methodname to evaluate the effect of each of its components, in particular different variants of time commitment and behavioral target. Fig.~\ref{fig:ablation} (a) and (b) depicts the result of our analysis. Interestingly, time commitment does not grant any benefit in the \texttt{Path Following} task despite being useful in the \texttt{Pick and Place} and \texttt{Hook Sweep} tasks. On the contrary, the behavioral target is not relevant for the performance of \methodname in the \texttt{Pick and Place} and \texttt{Hook Sweep} tasks despite being important in the \texttt{Path Following}. 

\begin{figure}[!h]
\centering
\includegraphics[width=\textwidth]{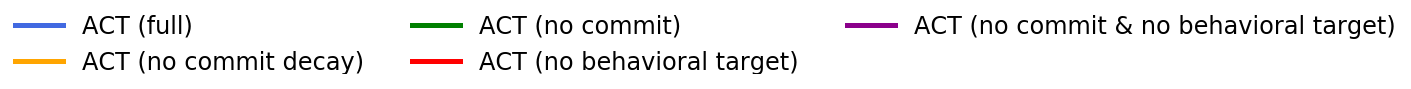}
\begin{subfigure}[b]{0.325\textwidth}
\centering
\includegraphics[width=\textwidth]{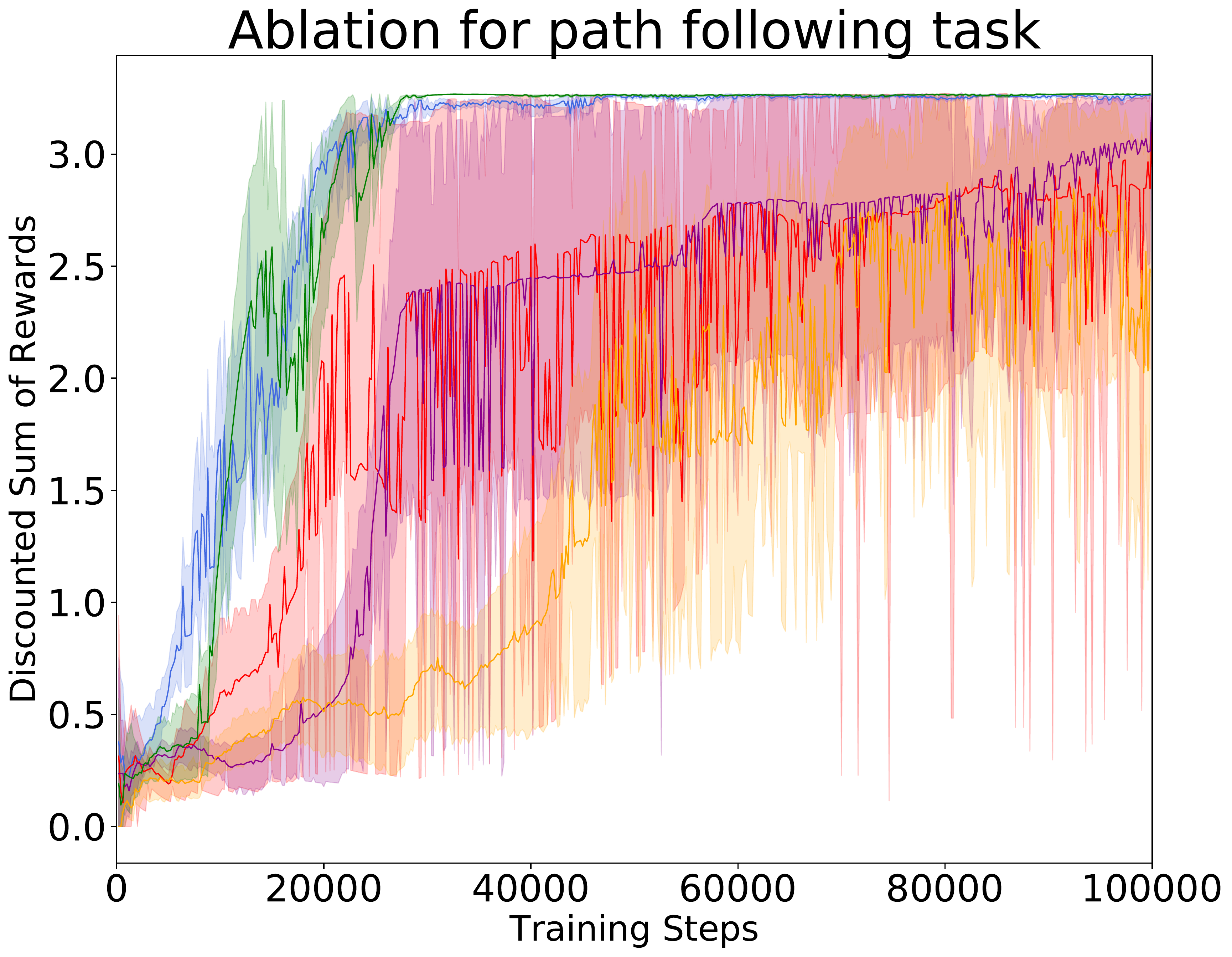}
\caption{}
\end{subfigure}%
\hfill
\begin{subfigure}[b]{0.325\textwidth}
\centering
\includegraphics[width=\textwidth]{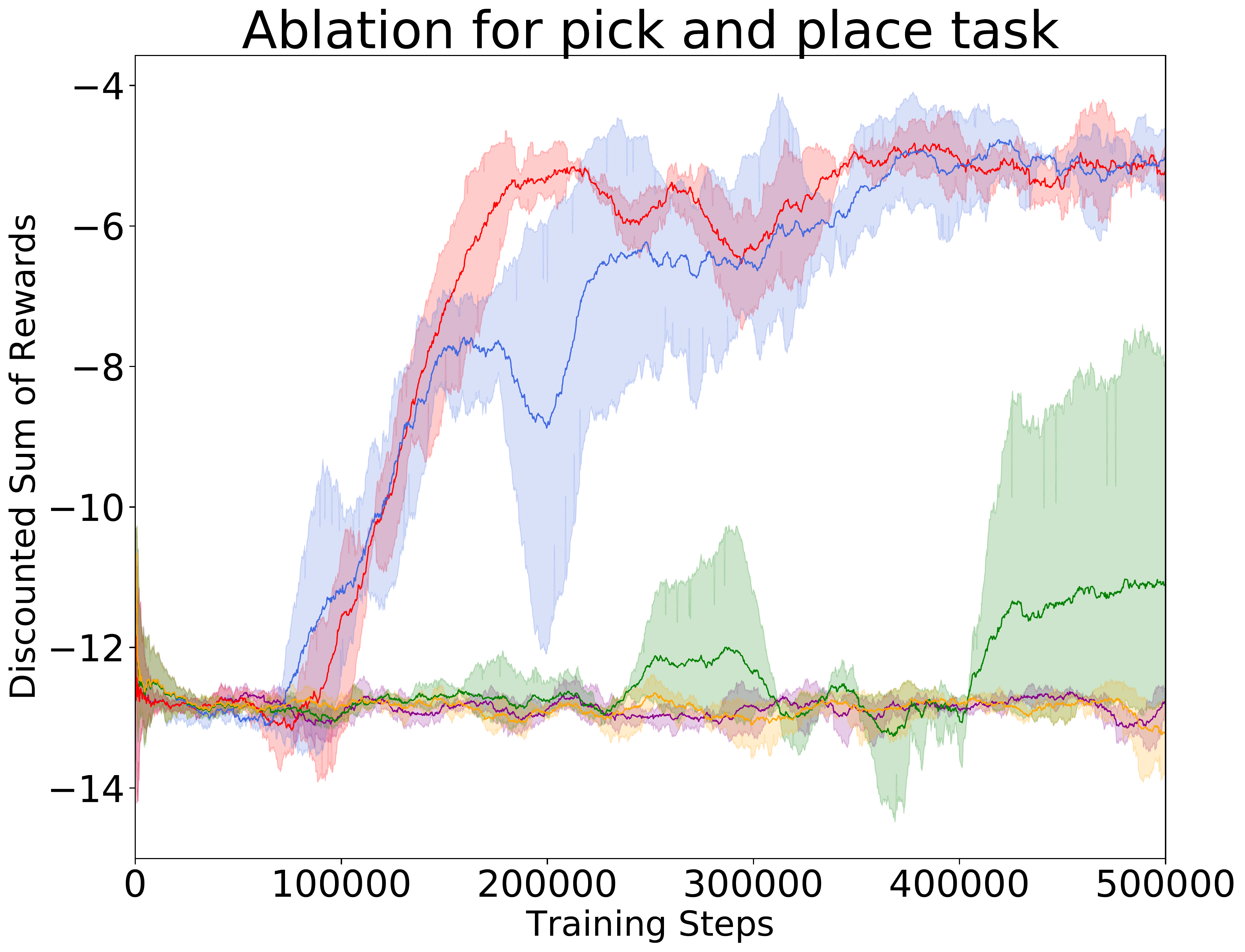}
\caption{}
\end{subfigure}
\hfill
\begin{subfigure}[b]{0.325\textwidth}
\centering
\includegraphics[width=\textwidth]{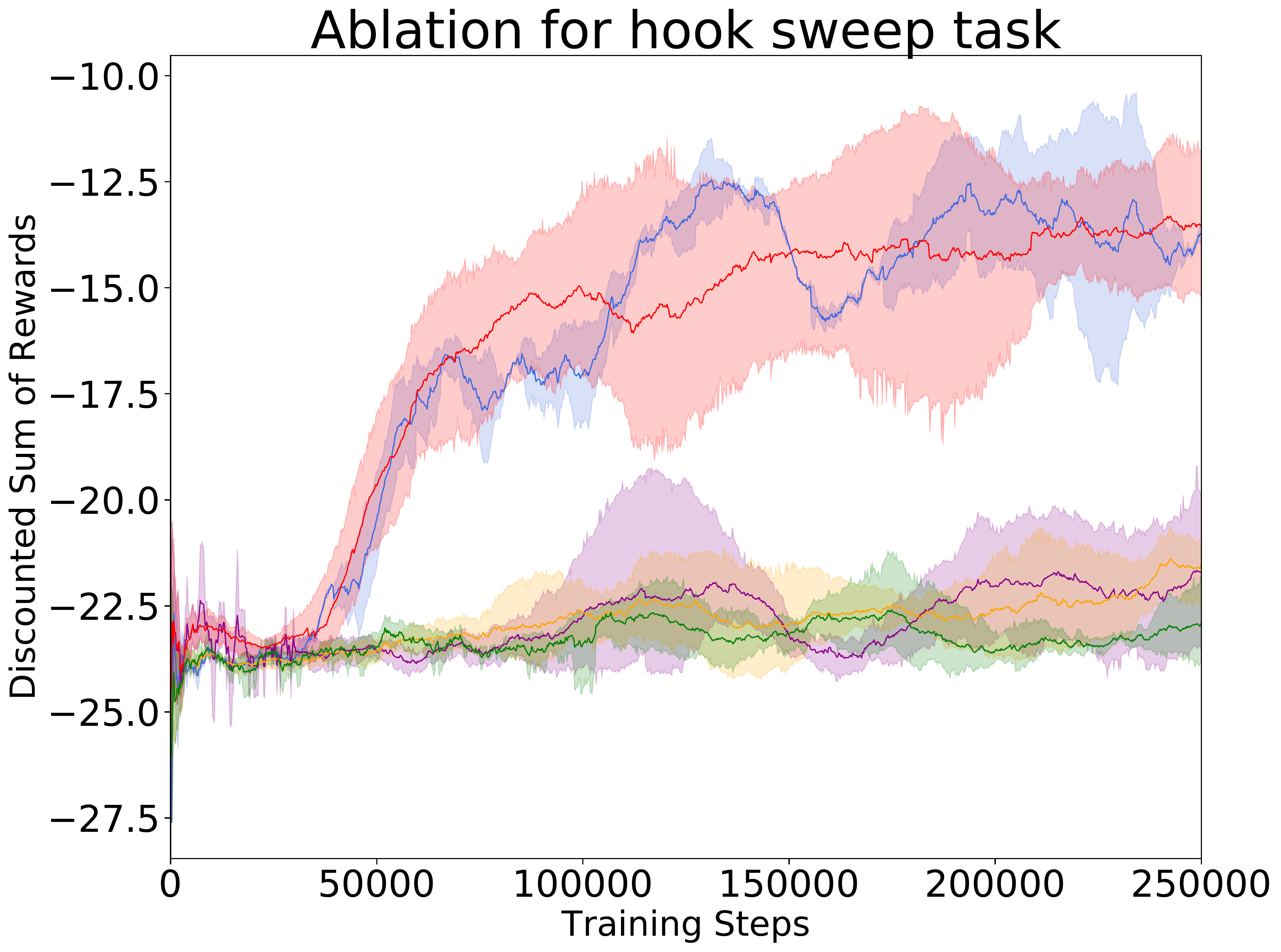}
\caption{}
\end{subfigure}
\caption{\textbf{Ablation Analysis} for the \texttt{Path Following} (a), \texttt{Pick and Place} (b), and \texttt{Hook Sweep} (c) tasks.}
\label{fig:ablation}
\end{figure}

We hypothesize that the tasks present different characteristics that benefit (or not) from the features of \methodname. In the case of \texttt{Path Following}, it includes little stochasticity and so the potential extrapolation error that is countered with the behavioral target is larger. For the \texttt{Pick and Place} task, both the end-effector and cube locations are randomized so the collected experience is varied enough, making extrapolation error less problematic. However, the shorter time horizon to complete the \texttt{Pick and Place} task and the possibility of pushing the cube away make commitment beneficial.

\subsection{Sensitivity Analysis}
\label{app:sensitivity}
We also evaluate the sensitivity of our algorithm to different parameters and hyperparameters. In our sensitivity analysis we use the \texttt{Path Following} task because the simplified dynamics allows to better dissentangle and analyze the results.

Fig.~\ref{fig:sensi} depict the results of our sensitivity analysis. We observe that most parameters can have a range of values and still work well. An exception to this is that we also observe that our algorithm is sensitive to the value of the dropout precision parameter, $\tau$, which is used in establishing the regularization of the critic network as well as in the computation of the probability of a new policy being better in our commitment mechanism. With more or less samples than the ones we used in our experiments the performance drops significantly. 

\begin{figure}[h!]
\centering
\begin{subfigure}[b]{0.495\textwidth}
\includegraphics[width = 0.99\textwidth]{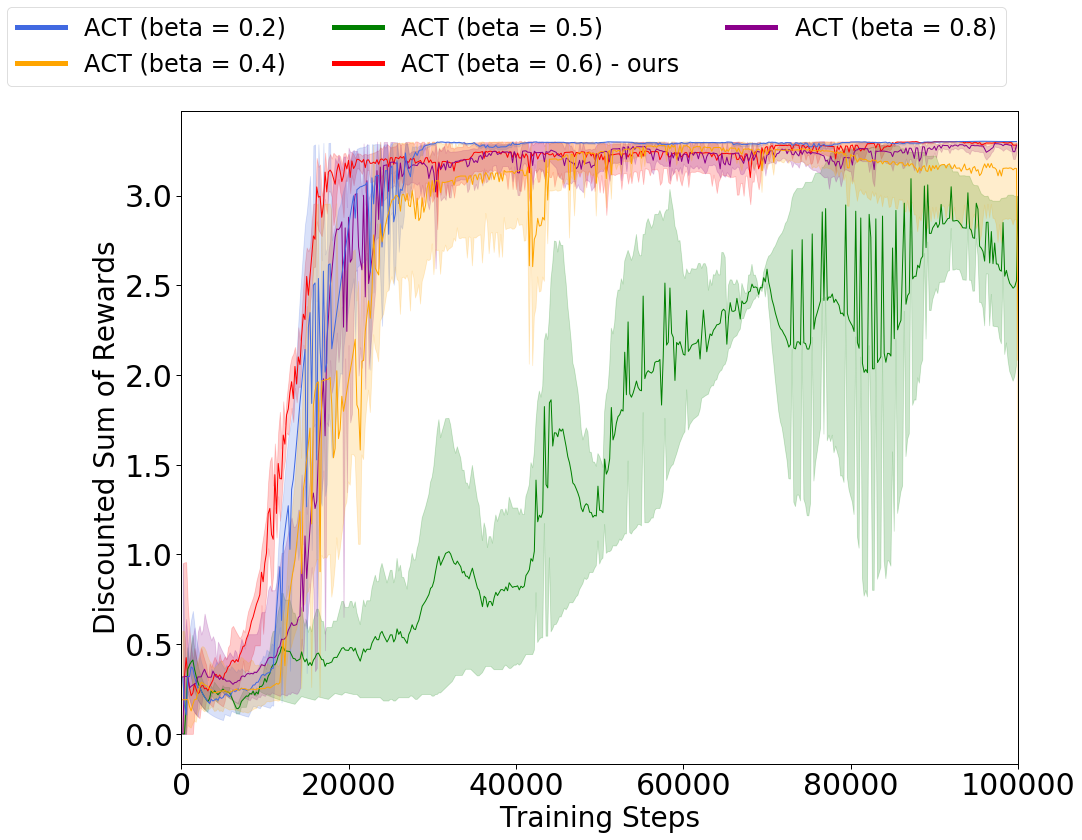}
\caption{}
\end{subfigure}
\begin{subfigure}[b]{0.495\textwidth}
\includegraphics[width = 0.99\textwidth]{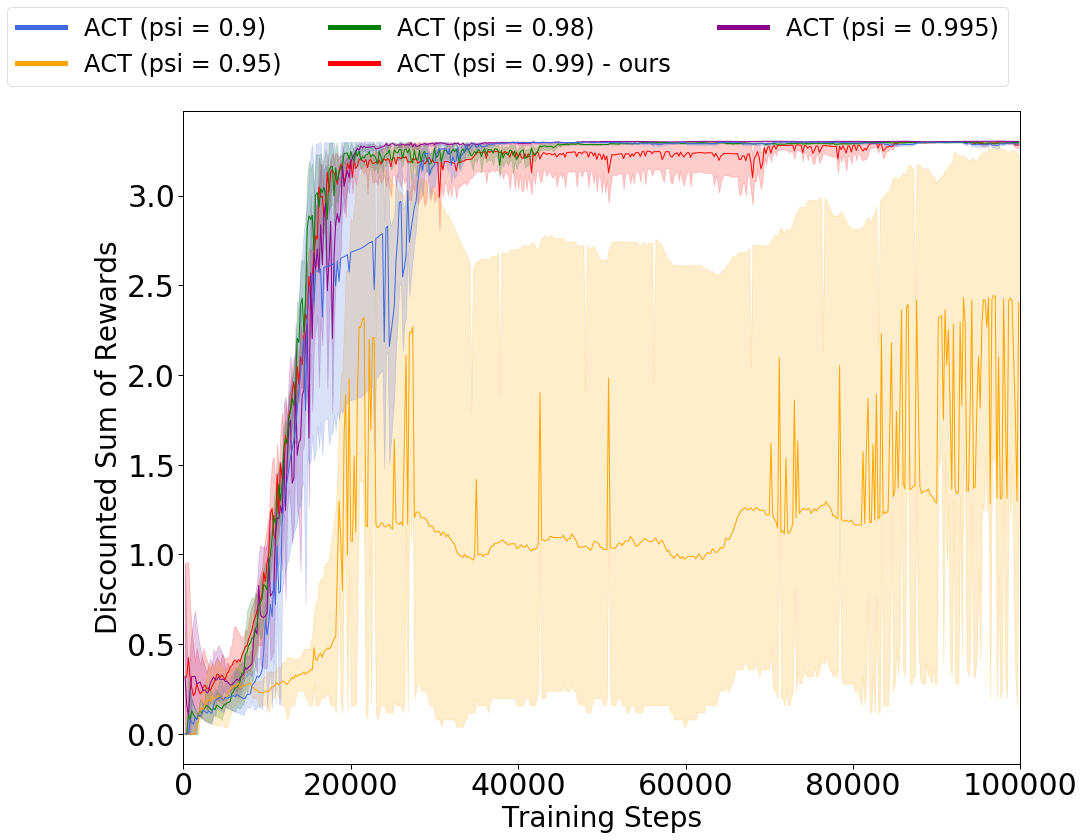}
\caption{}
\end{subfigure}
\begin{subfigure}[b]{0.495\textwidth}
\includegraphics[width = 0.99\textwidth]{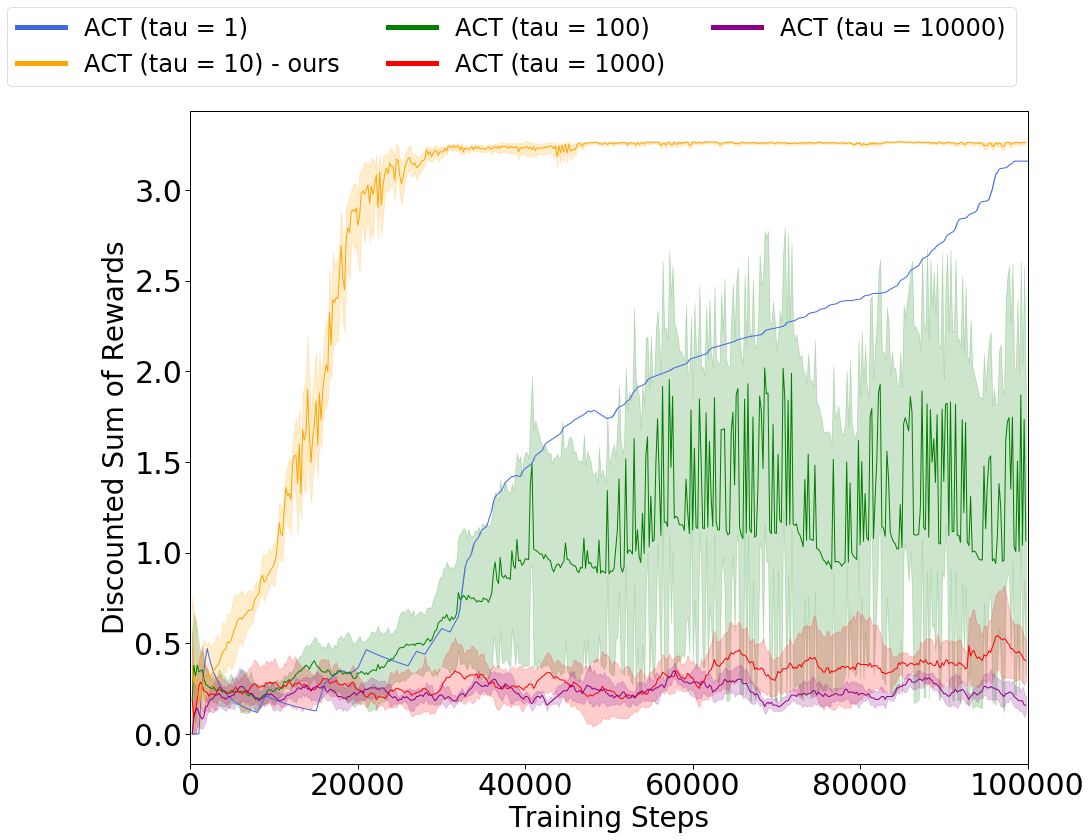}
\caption{}
\end{subfigure}
\begin{subfigure}[b]{0.495\textwidth}
\includegraphics[width = 0.99\textwidth]{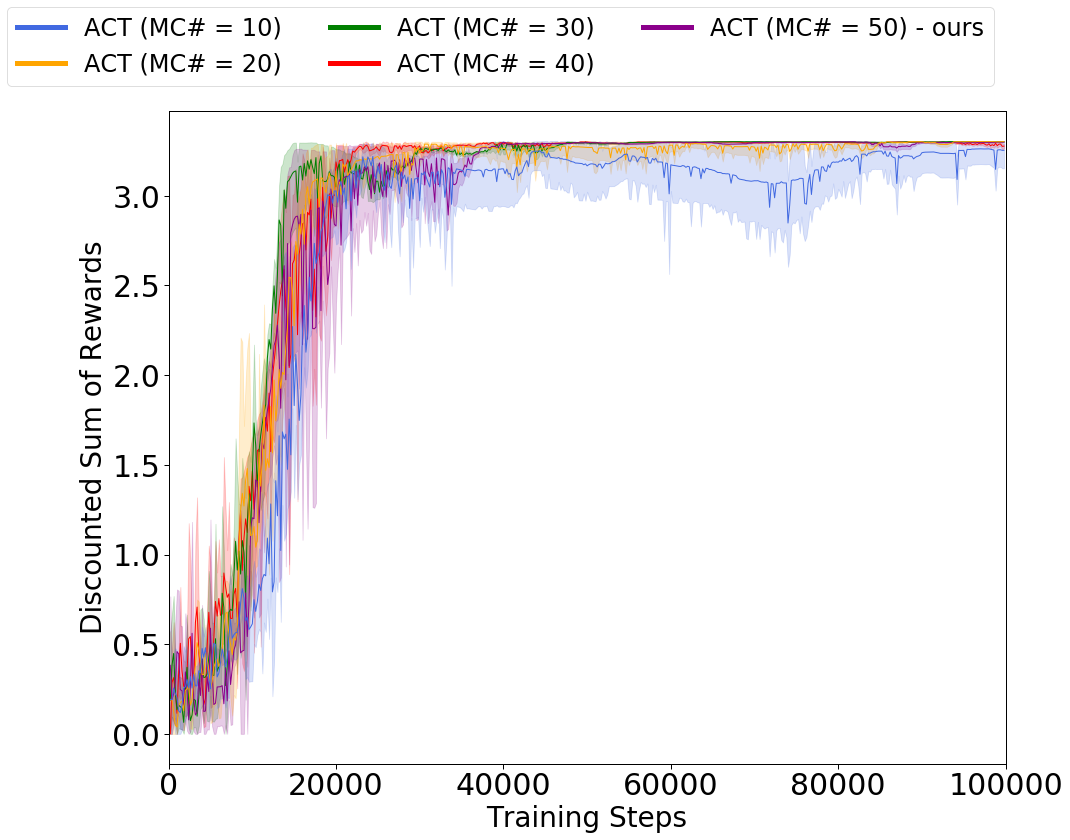}
\caption{}
\end{subfigure}
\caption{\textbf{Sensitivity Analysis} to $\beta$ (a), $\psi$ (b), $tau$ (c) parameters, and to the number of dropout Monte Carlo samples (c)}
\label{fig:sensi}
\end{figure}


\section{Experimental Setup}


\subsection{Tasks}
\label{app:tasks}

\textbf{\texttt{Path Following}:} A point agent starts each episode at the origin of a 2D plane and needs to visit the four corners of a square centered at the origin. These corners must be visited in a specific order that is randomly sampled at each episode. We set the goal locations to be at $(-0.25, -0.25), (-0.25, 0.25), (0.25, -0.25)$, and $(0.25, 0.25)$. A goal location is considered visited if the agent position is within 0.05 L2 distance. The observations are 5-dimensional and consist of its current location, the current goal point, and the number of goals left to visit. The agent generates bounded 2D displacements as actions with deterministic outcome. The max offset in each axis per single action is 0.045. A sparse reward of 1 is given when the agent visits each corner, making 4 the maximum undiscounted return per episode. Episodes are 200 steps long and do not terminate early.

\textbf{\texttt{Pick and Place}:} A robot manipulation problem from~\cite{fetch}. The goal is to pick up a cube and place it at a target location on the table surface. The initial position of the object and the robot end effector are randomized at each episode start, but the goal is constant. The initial cube location in each episode is centered near $(1.25, 0.55)$, with random offsets of magnitude $[-0.05,0.05]$ in the x and y planes.
The goal location is at $(1.45, 0.55, 0.425)$ -- an arbitrarily chosen location near one of the corners of the table. The end-effector begins above the center of the table near $(1.34, 0.75, 0.5)$ with random offsets of magnitude $[-0.05,0.05]$ in the x and y planes and a random offset of magnitude $[-0.025, 0.025]$ in the z plane. The observation and action space is the same as in Plappert et al.~\cite{fetch}. The agent applies delta position commands to the end effector and can actuate the gripper. Rewards are dense and defined as the negative L2 distance from the cube to the target location. Episodes are 100 steps long and do not terminate early.

\textbf{\texttt{Hook Sweep}:} A robot manipulation problem adapted from~\cite{silver2018residual}. The objective is to actuate a robot arm to move a cube to a particular goal location. The cube is initialized out of reach of the robot arm, and so the robot must use a hook to sweep the cube to the goal. The goal location and initial cube location are randomized such that in some episodes the robot arm must use the hook to sweep the cube closer to its base and in other episodes the robot arm must use the hook to push the cube away from its base to a location far from the robot. On every environment reset, with 50\% probability, the cube is initialized in front of the hook, and the goal is set to be far away from the robot base, and with 50\% probability, the cube is initialized behind the hook, and the goal is set to be close to the robot base. Thus, the robot has to learn to leverage the hook to either sweep the cube closer to itself or sweep the cube further away, depending on the goal initialization. The observation and action space is the same as in Plappert et al.~\cite{fetch}, with additional observation dimensions added for the hook state. The agent applies delta position commands to the end effector and can actuate the gripper. Rewards are dense and defined as the negative L2 distance from the cube to the target location. Episodes are 100 steps long and do not terminate early.

\subsection{Implementation}
We implement the BDDPG agent as well as the BDDPG + DQN and DDPG + Critic baselines on top of the existing implementation in Stable Baselines \cite{stable-baselines} along with relevant modifications from \cite{henderson2017bayesian} taken from \url{https://github.com/Breakend/BayesianPolicyGradients}. 

\subsection{Training}
All experiment results are presented over 5 seeds. Neither observations nor rewards were normalized. The \texttt{Path Following} task was run by alternating $200$ steps of task interaction with $100$ gradient updates to the policy, with a soft update to the target after every gradient update, until $100k$ task interactions were reached. The \texttt{Path Following} task was run by alternating $200$ steps of task interaction with $40$ gradient updates to the policy, with a soft update to the target after every gradient update, until $500k$ task interactions were reached.

\subsection{ACT Hyperparameters}

Unless otherwise stated for specific experiments, we use commitment threshold $\beta=0.6$ and commitment decay $\psi=0.99$ in experiments throughout the paper.

\subsection{DDPG Hyperparameters}
We use most hyperparameters from~\cite{henderson2017bayesian}, with values derived from \url{https://github.com/Breakend/BayesianPolicyGradients}, as shown in Table~\ref{tab:hyp1}.

\begin{figure}[h!]
\centering
\begin{subfigure}[t]{0.45\textwidth}
    \centering
    \resizebox{\linewidth}{!}{%
    \begin{tabular}{|l|c|}
        \cline{1-2}
        Hidden Layers (\texttt{Path Following}) & $[64,64]$ \\ \cline{1-2}
        Hidden Layers (\texttt{Mujoco}) & $[64,64,64]$ \\ \cline{1-2}
        DDPG~Exploration (\texttt{Path Following}) & $\mathcal{N}(0, 0.3)$ \\ \cline{1-2}
        DDPG~Exploration (\texttt{Mujoco}) & $\mathcal{N}(0, 0.1)$ \\ \cline{1-2}
        Target Update $\tau$ (\texttt{Path Following}) & $0.01$ \\ \cline{1-2}
        Target Update $\tau$ (\texttt{Mujoco}) & $0.001$ \\ \cline{1-2}
        $\alpha$ & $0.5$ \\ \cline{1-2}
        Dropout~$\tau$ & $10.0$ \\ \cline{1-2}
        Discount~Factor & $0.99$ \\ \cline{1-2}
        Keep~Prob (\texttt{Path Following}) & $0.8$ \\ \cline{1-2}
        Keep~Prob (\texttt{Mujoco}) & $0.9$ \\ \cline{1-2}
        MC~Samples & $50$ \\ \cline{1-2}
        Batch~Size & $128$ \\ \cline{1-2}
        Reward~Scale & $1$ \\ \cline{1-2}
        Actor~L2~Loss (\texttt{Path Following}) & $0.0$ \\ \cline{1-2}
        Actor~L2~Loss (\texttt{Mujoco}) & $0.1$ \\ \cline{1-2}
    \end{tabular}
    }
    \caption{DDPG Hyperparameters}
    \label{tab:hyp1}
\end{subfigure}
\hfill
\begin{subfigure}[t]{0.525\textwidth}
    \centering
    \resizebox{\linewidth}{!}{%
    \begin{tabular}{|l|c|}
        \cline{1-2}
        Hidden Layers & $[64,64]$ \\ \cline{1-2}
        Learning Rate & $5.0e-4$ \\ \cline{1-2}
        Exploration Final $\epsilon$ & $0.02$ \\ \cline{1-2}
        Exploration Timesteps & $100000$ \\ \cline{1-2}
        Buffer Size & $100000$ \\ \cline{1-2}
        Discount~Factor & $0.99$ \\ \cline{1-2}
        Train~Freq & $10$ \\ \cline{1-2}
        Target~Network~Update~Freq & $1000$ \\ \cline{1-2}
        Batch~Size & $32$ \\ \cline{1-2}
        Reward~Scale & $1$ \\ \cline{1-2}
    \end{tabular}
    }
    \caption{DQN Hyperparameters}
    \label{tab:hyp2}
\end{subfigure}
\end{figure}


Parameters are largely the same across the three tasks, except for the several with listed differences above; these were found via manual tuning and seem to reflect the Mujoco tasks being harder.

\subsection{DQN hyperparameters}
We use DQN with non-prioritized replay, with the hyperparameters listed in Table~\ref{tab:hyp2}.


\clearpage

\end{document}